\documentclass[aos,preprint]{imsart}

\usepackage{amssymb}
\usepackage{graphicx}
\usepackage{caption}
\usepackage{subfigure}
\usepackage[usenames,dvipsnames]{color}
\usepackage{xcolor}
\usepackage{epstopdf}
\usepackage{hyperref}
\RequirePackage[OT1]{fontenc}
\RequirePackage{amsthm,amsmath}
\usepackage{stmaryrd}
\RequirePackage[numbers]{natbib}
\RequirePackage[colorlinks,citecolor=blue,urlcolor=blue]{hyperref}
\arxiv{arXiv:0000.0000}
\usepackage{url}
\usepackage{xr}
\bibliographystyle{plainnat}
\graphicspath{{./pics/}}
\startlocaldefs
\newtheorem{lem}{Lemma}
\newtheorem{prop}{Proposition}
\newtheorem{them}{Theorem}
\newtheorem{assum}{Assumption}
\newtheorem{corol}{Corollary}

\newcommand{\argmin}{\mathop{\rm argmin}\limits}


%

\newcommand{\boldtheta}{{\boldsymbol{\theta}}}
\newcommand{\bolddelta}{{\boldsymbol{\delta}}}
\newcommand{\boldthetaP}{{\boldsymbol{\theta}}^{(p)}}
\newcommand{\boldthetaQ}{{\boldsymbol{\theta}}^{(q)}}
\newcommand{\boldthetaPtop}{{\boldsymbol{\theta}}^{(p)\top}}

\newcommand{\boldH}{{\boldsymbol{H}}}

\newcommand{\boldTheta}{{\boldsymbol{\Theta}}}
\newcommand{\boldf}{{\boldsymbol{f}}}
\newcommand{\boldu}{{\boldsymbol{u}}}

\newcommand{\boldSigma}{{\boldsymbol{\Sigma}}}

\newcommand{\boldDelta}{{\boldsymbol{\Delta}}}

\newcommand{\KL}{\mathrm{KL}}

\newcommand{\boldx}{{\boldsymbol{x}}}
\newcommand{\bolde}{\boldsymbol{e}}
\newcommand{\boldxp}{{\boldsymbol{x}}_{p}}
\newcommand{\boldxq}{{\boldsymbol{x}}_{q}}
\newcommand{\boldz}{{\boldsymbol{z}}}
\newcommand{\boldg}{{\boldsymbol{g}}}
\newcommand{\boldw}{{\boldsymbol{w}}}

\newcommand{\boldzero}{{\boldsymbol{0}}}
\newcommand{\thetahat}{{\hat{\boldsymbol{\theta}}}}

\newcommand{\zhat}{{\hat{\boldsymbol{z}}}}

\newcommand{\boldy}{{\boldsymbol{y}}}

\newcommand{\boldpsi}{{\boldsymbol{\psi}}}

\newcommand{\distP}{P}
\newcommand{\distQ}{Q}
\newcommand{\iid}{\stackrel{\mathrm{i.i.d.}}{\sim}}
\newcommand{\dx}{\mathrm{d}\boldx}

\newcommand{\normDtwo}{\|\nabla_{\boldtheta}^2 \ell(\boldtheta)\|}
\newcommand{\DtwoLSE}{\nabla_{\boldg}^2 \mathrm{LSE}\left(g_1, g_2, \cdots, g_{n_q}\right)}


\newcommand{\vertiii}[1]{{\left\vert\kern-0.25ex\left\vert\kern-0.25ex\left\vert #1 
    \right\vert\kern-0.25ex\right\vert\kern-0.25ex\right\vert}}

\newenvironment{changemargin}[2]{%
\begin{list}{}{%
	\setlength{\topsep}{0pt}%
	\setlength{\leftmargin}{#1}%
	\setlength{\rightmargin}{#2}%
	\setlength{\listparindent}{\parindent}%
	\setlength{\itemindent}{\parindent}%
	\setlength{\parsep}{\parskip}%
}%
\item[]}{\end{list}}
\endlocaldefs

\begin{document}

\begin{frontmatter}
\title{Support Consistency of Direct Sparse-Change Learning in Markov Networks}
\runtitle{On Learning Sparse Changes of Markov Networks}
\date{}
\begin{aug}
	\author{\fnms{Song} \snm{Liu}\thanksref{m1}\ead[label=e1]{liu@ism.ac.jp}},
	\author{\fnms{Taiji} \snm{Suzuki}\thanksref{m2}\ead[label=e2]{s-taiji@is.titech.ac.jp}}
	\author{\fnms{Raissa} \snm{Relator}\thanksref{m3} \ead[label=e5]{rai.relator@aist.go.jp}}
	\author{\fnms{Jun} \snm{Sese}\thanksref{m3} \ead[label=e6]{sese.jun@aist.go.jp}}\\
	\author{\fnms{Masashi} \snm{Sugiyama}\thanksref{m4} \ead[label=e3]{sugi@k.u-tokyo.ac.jp}}
	\and
	\author{\fnms{Kenji} \snm{Fukumizu}\thanksref{m1}
			\ead[label=e1]{fukumizu@ism.ac.jp}}
	
	\runauthor{S. Liu et al.}
	
	\affiliation{The Institute of Statistical Mathematics\thanksmark{m1}, Tokyo Institute of Technology\thanksmark{m2}, National Institute of Advanced Industrial Science and Technology\thanksmark{m3} and University of Tokyo\thanksmark{m4}}
	
	\address{10-3 Midori-cho, Tachikawa, Tokyo\\ 190-8562, Japan\\
		\printead{e1}\\}
	\address{ 2-12-1 O-okayama, Meguro, Tokyo\\ 152-8552, Japan\\
		\printead{e2}\\}
	\address{7-3-1 Hongo, Bunkyo-ku, Tokyo\\ 113-0033, Japan\\
		\printead{e3}\\}	
	\address{2-4-7 Aomi, Koto-ku, Tokyo\\ 135-0064, Japan\\
		\printead{e5,e6}\\}	
\end{aug}

\begin{abstract}
	We study the problem of learning sparse structure changes between two
	Markov networks $P$ and $Q$. Rather than fitting two Markov networks
	separately to two sets of data and figuring out their differences, a recent work proposed
	to learn changes \emph{directly} via estimating the ratio between two
	Markov network models.  In this paper, we give sufficient
	conditions for \emph{successful change detection} with respect to the
	sample size $n_p, n_q$, the dimension of data $m$, and the number of
	changed edges $d$.  When using an unbounded density ratio model we prove that the true sparse
	changes can be consistently identified for 
	$n_p = \Omega(d^2 \log \frac{m^2+m}{2})$ and $n_q = \Omega({n_p^2})$, with an exponentially decaying
	upper-bound on learning error.  Such sample complexity can be improved to $\min(n_p, n_q) = \Omega(d^2 \log \frac{m^2+m}{2})$ when the boundedness of the density ratio model is assumed.  Our theoretical guarantee can be
	applied to a wide range of discrete/continuous Markov networks.
\end{abstract}

\begin{keyword}[class=MSC]
\kwd[Primary ]{62F12}
\kwd{62F12}
\kwd[; Secondary ]{68T99}
\end{keyword}

\begin{keyword}
\kwd{Markov Networks}
\kwd{Change Detection}
\kwd{Density Ratio Estimation}
\end{keyword}
\end{frontmatter}

\section{Introduction}
\label{sec.intro}
Learning changes in interactions between random variables plays an important role in many real-world applications. For example, genes may regulate each other in different ways when external conditions are changed. The number of daily flu-like symptom reports in nearby hospitals may become correlated when a major epidemic disease breaks out. EEG signals from different regions of the brain may be synchronized/desynchronized when the patient is performing different activities. Identifying such changes in interactions helps us expand our knowledge on these real-world phenomena. 

In this paper, we consider the problem of learning changes between two undirected graphical models. Such a model, also known as a Markov network (MN) \citep{PGM_Koller}, expresses interactions via the conditional independence between random variables.
Hammersley-Clifford theorem \cite{HC_them} states that the joint distribution of an MN can be factorized over subsets of interacted random variables and 
general MNs may have factors over arbitrary numbers of random variables. For simplicity, we focus on a special case, namely \emph{pairwise MNs}, whose joint distribution can  be factorized over only single or pairwise random variables. 

The problem of learning structure of MN itself has been thoroughly investigated in the last decade. The graphical lasso method \citep{Banerjee_Model_Selection,Friedman_GLasso} learns a sparse precision (inverse covariance) matrix from data by using the $\ell_1$-norm, while the neighborhood regression methods \citep{Lee_EfficientL1Learning,MeinshausenBuhlmann,Ravikumar_2010} solve a node-wise lasso program to identify the neighborhood of each single node. 

One naive approach to learning changes in MNs is to apply these methods to two MNs separately and compare the learned models. 
However, such a two-step approach does not work well when the MNs themselves are dense
(this can happen even when the change in MNs is sparse).
A recent study \citep{Gaussian_Change} adopts a neighbourhood selection procedure to learn sparse changes between Gaussian MNs via a fused-lasso type regularizer \citep{Tibshirani_FusedLasso}. However, no theoretical guarantee was given on identifying changes.
Furthermore, extension of the above mentioned methods to general non-Gaussian MNs is hard
due to the computational intractability of the normalization term.

To cope with these problems, an novel algorithm has been proposed recently
\citep{liu2014ChangeDetection}.
Its basic idea is to model the changes between two MNs $P$ and $Q$ 
as the ratio between two MN density functions $p(\boldx)$ and $q(\boldx)$,
and the ratio $p(\boldx)/q(\boldx)$ is directly estimated in one-shot
without estimating $p(\boldx)$ and $q(\boldx)$ themselves \citep{Density_Ratio_Book}.
Since parameters in the density ratio model represent
the parametric difference between $P$ and $Q$,
sparsity constrains can be directly imposed for sparse change learning.
Thus, the density-ratio approach can work well
even when each MN is dense as long as the change is sparse.
Furthermore, the normalization term in the density-ratio approach can be
approximately computed by the straightforward sample average and
thus there is no computational bottleneck in using non-Gaussian MNs.
Experimentally, the density-ratio approach was demonstrated to perform well. 
However, its theoretical properties have not been explored yet.

The ability of recovering a sparsity pattern via a sparse learning algorithm 
has been studied under the name of \emph{support consistency} or sparsistency \citep{WainwrightL1Sharp},
 that is, the support of the estimated parameter converges to the true support. Previous works for 
\emph{successful structure recovery} are available for $\ell_1$-regularized maximum (pseudo-)likelihood estimators \citep{Ravikumar_2010,YangGMGLM}.
However, density ratio estimator in \cite{liu2014ChangeDetection} brought us a new question: 
what is the sparsistency of identifying correct sparse changes \emph{without} learning individual MNs? Such a concern is
very practical since in applications such as learning changes in gene expression between stimuli conditions, we only care changes rather than individual structures before or after changes. We illustrate such an example in the problem of gene regulatory networks in Section \ref{sec.gene}.

In this paper, we theoretically investigate the success of the density-ratio approach and provide sufficient conditions for \emph{successful change detection}
with respect to the number of samples $n_p$, $n_q$, data dimension $m$,
and the number of changed edges $d$.
More specifically, we prove that
if $n_p = \Omega(d^2 \log \frac{m^2+m}{2})$ and $n_q = \Omega(n_p^2)$, 
changes between two MNs can be consistently learned under mild assumptions, 
regardless the sparsity of individual MNs. Such sample complexity can be further improved to $\min(n_p, n_q) = \Omega(d^2 \log \frac{m^2+m}{2})$ when the boundedness of the density ratio model is assumed.
Technically, our contribution can be regarded as an extention of 
support consistency of lasso-type programs \citep{WainwrightL1Sharp} 
to the \emph{ratio} of MNs. 
The convergence rate does not rely on the individual sparsity of each MN, thus structures like hub-nodes can exist. Such hub structure is common in many applications, such as gene expression data where one gene regulates many other genes. 
Our theorem holds for the most general log-linear MN models, and does \emph{not} assume any special type of individual MNs (such as Gaussian or Ising). 

Note that the theoretical results presented in this paper are fundamentally different from previous works on learning a ``jumping MN'' \citep{XingJump}, where the focuses are learning the partition boundaries between jumps, and the successful recovery of graphical structure within each partition, rather than learning sparse changes between partitions. 

In previous works \cite{Ravikumar_2010,YangGMGLM}, the (upper/lower) boundedness of the Fisher information matrix, or log-partition function derivatives of a density model are often assumed. In this work, similar assumptions are imposed on the true density ratio model. Moreover, we show that such assumptions have profound links with the smoothness of our model, which implies the magnitude of change should not be too drastic for keeping the density ratio model well-behaved.
These assumptions are also automatically satisfied under some special cases.

The target of \cite{liu2014ChangeDetection} coincides with another recently proposed method  where a \emph{differential network} is learned directly using a different technique \cite{zhao2014direct}.
Without learning a precision matrix for each MN, this approach estimates a differential network utilizing a special equality obtained for Gaussian MNs. 
However, such an objective function does not generalize to  ordinary pairwise MNs. 
The theorems obtained in this paper and the ones in \cite{zhao2014direct} both rely on one similar assumption: The changes are sparse. However, theorems in this paper manage to achieve the same sample complexity of recovering the correct structure changes without explicitly assuming Gaussianity over datasets. 

This paper is organized as follows: First we introduce the problem formulation of learning changes between two MNs in Section \ref{sec.background}. Second, we review the density ratio estimation method proposed in \cite{liu2014ChangeDetection}. Then, as the main focus of this paper, we analyze the sufficient conditions for successful change detection, i.e., the support consistency of such algorithm in Section \ref{sec.main} and \ref{sec.proof.them.1}. Moreover, in Section \ref{sec.analysis.assums}, we study the key assumptions in this paper, and discuss their consequences. Through experiments in Section \ref{sec.exp}, we demonstrate the validity of our theorems and compare the performance of the density ratio approach with a state of the art method. Finally, in Section \ref{sec.gene}, we show the density ratio method successfully identifies key changes in a gene network between two stimuli conditions.

\section{Direct Change Learning between Markov Networks}
\label{sec.background}
In this section, we review a direct structural change detection method
\citep{liu2014ChangeDetection}.

\subsection{Problem Formulation}
\label{sec.prob.form}
Consider two sets of independent samples drawn separately from two probability distributions $P$ and $Q$ on $\mathbb{R}^m$:
\begin{align*}
\{\boldx_p^{(i)}\}_{i=1}^{n_p} \iid P \text{ and }  \{\boldx_q^{(i)}\}_{i=1}^{n_q} \iid Q.
\end{align*}
We assume that $P$ and $Q$ belong to the family of \emph{Markov networks} (MNs)
consisting of univariate and bivariate factors,
i.e., 
their respective probability densities $p$ and $q$ are expressed as
\begin{align}
\label{eq.density.model}
p(\boldx;\boldthetaP) =\frac{1}{Z(\boldthetaP)}\exp\left(
\sum_{u,v = 1, u\ge v}^{m} \boldthetaP_{u,v}{}^\top \boldpsi_{u,v}(x_u,x_v) \right),
\end{align}
where
$\boldx = (x_{1}, \dots, x_{m})^\top$ is the $m$-dimensional random variable, $\top$ denotes the transpose,
$\boldthetaP_{u,v}$ is the $b$-dimensional parameter vector for the elements $x_{u}$ and $x_{v}$, and
\begin{align*}
\boldthetaP = (\boldthetaPtop_{1,1},\ldots, \boldthetaPtop_{m,1},\boldthetaPtop_{2,2},\ldots,\boldthetaPtop_{m,2},\ldots,\boldthetaPtop_{m,m})^\top
\end{align*}
is the entire parameter vector.
$\boldpsi_{u,v}(x_{u},x_{v})$ is a bivariate vector-valued basis function,
and $Z(\boldthetaP)$ is the normalization factor defined as
\begin{align*}
Z(\boldthetaP)  = \int \exp\left(\sum_{u,v = 1, u\ge v}^{m}  \boldthetaP_{u,v}{}^\top \boldpsi_{u,v}(x_{u},x_{v})\right)\dx.
\end{align*}
$q(\boldx; \boldthetaQ)$ is defined in the same way.
Using these notations, we can define two well-known MNs as examples:
\paragraph{Ising-model (see e.g.\cite{PGM_Koller})}
One of the earliest and widely known graphical models is the Ising model, where $\psi_{u,v}(x_u,x_v) = x_ux_v$, and $ x_u, x_v \in \{-1,1\}$. For all pairs $(u,v) \in E$, where $E$ is the edge set of the graphical model, $\theta_{u,v}$ is a scalar and has non-zero value. 
\paragraph{Gaussian MN} Gaussian MN is a representative of continuous MN. $\psi_{u,v}(x_u,x_v) = x_ux_v$ and $x_u, x_v \in \mathbb{R}$. For all pairs $(u,v) \in E$ or $u=v$, $\theta_{u,v}$ is a scalar and has non-zero value.

The research problem now becomes clear: Given two parametric models $p(\boldx;\boldthetaP)$ and $q(\boldx;\boldthetaQ)$, we hope to discover \emph{changes in parameters} from $P$ to $Q$, i.e., $\boldthetaP-\boldthetaQ$.

\subsection{Density Ratio Formulation for Structural Change Detection}
The key idea in \citep{liu2014ChangeDetection} is to consider the ratio of $p$ and $q$:

\begin{align*}
\frac{p(\boldx; \boldthetaP)}{q(\boldx; \boldthetaQ)}
\propto \exp \left( \sum_{u,v = 1, u\ge v}^m (\boldthetaP_{u,v}-\boldthetaQ_{u,v})^\top \boldpsi_{u,v}(x_{u},x_{v})\right),
\end{align*}
where $\boldthetaP_{u,v}-\boldthetaQ_{u,v}$ encodes the difference
between $\distP$ and $\distQ$ for factor $\boldpsi_{u,v}(x_{u},x_{v})$, i.e., 
$\boldthetaP_{u,v} - \boldthetaQ_{u,v}$ is zero
if there is no change in the factor $\boldpsi_{u,v}(x_{u},x_{v})$.




Once the ratio of $p$ and $q$ is considered, 
each parameter $\boldthetaP_{u,v}$ and $\boldthetaQ_{u,v}$ does not have to be estimated,
but only their difference $\boldtheta_{u,v}=\boldthetaP_{u,v} - \boldthetaQ_{u,v}$
is sufficient to be estimated for change detection.
Thus, in this density-ratio formulation,
$p$ and $q$ are no longer modeled separately,
but it models the changes from $p$ to $q$ \emph{directly} as
\begin{align}
\label{ratio.model.def}
r(\boldx;\boldtheta) =
\frac{1}{N(\boldtheta)} \exp\left(\sum_{u,v = 1, u\ge v}^m \boldtheta_{u,v}^\top \boldpsi_{u,v}(x_{u},x_{v})\right),
\end{align}
where $N(\boldtheta)$ is the normalization term.
This direct formulation also halves the number of parameters
from both $\boldthetaP$ and $\boldthetaQ$ to only $\boldtheta$.

The normalization term $N(\boldtheta)$ is 
chosen to fulfill
$\int q(\boldx)r(\boldx;\boldtheta) \dx = 1$:
\begin{align*}
N(\boldtheta) = \int q(\boldx) \exp\left(\sum_{u,v = 1, u\ge v}^m \boldtheta_{u,v}^\top \boldpsi_{u,v}(x_{u},x_{v}) \right)\dx,  
\end{align*}
which is the expectation over $q(\boldx)$.
This expectation form of the normalization term is another notable advantage
of the density-ratio formulation because it can be easily 
approximated by the sample average over $\{\boldxq^{(i)}\}_{i=1}^{n_q}\iid q(\boldx)$:
\begin{align*}
&\hat{N}(\boldtheta; \boldx_q^{(1)}, \dots, \boldx_q^{(n_q)}) := \frac{1}{n_q}\sum_{i=1}^{n_q}
\exp\left(\sum_{u,v = 1, u\ge v}^m \boldtheta_{u,v}^\top \boldpsi_{u,v}(x_{q,u}^{(i)}, x_{q,v}^{(i)}) \right).
\end{align*}
Thus, one can always use this empirical normalization term 
for any (non-Gaussian) models $p(\boldx; \boldthetaP)$ and $q(\boldx; \boldthetaQ)$.



An important observations can be made from this formulation:
Although two MNs may have sophisticated models individually, their changes might be ``simple'' since many terms may be canceled while taking the ratio, i.e. $\boldtheta^{(p)}_{u,v} -  \boldtheta^{(q)}_{u,v}$ might be zero. Thus, if we use $\psi_{u,v}(x_u x_v) = x_u x_v$ in our ratio model, it does not mean we assume two individual MNs are Gaussian or Ising, it simply means we assume the changes of interactions are linear while other non-linear interactions remain unchanged. This formulation allows us to consider highly complicated MNs as long as their changes are ``simple''. We will give a concrete example later. 

Throughout the rest of the paper, we simplify the notation from $\boldpsi_{u,v}$ to $\boldpsi$ by assuming the feature functions are the same for all pairs of random variables. However, our analysis still holds if this assumption is violated. Next, we study the density ratio formulation in the case of Gaussian MNs.

\paragraph{Gaussian MN}: Given two $m$-dimensional zero-mean Gaussian MNs $p(\boldx;\boldTheta^{(p)}) $ and $q(\boldx;\boldTheta^{(q)})$ parameterized by the precision matrix $\boldTheta^{(p)}$ and $\boldTheta^{(q)}$ respectively, it is reasonable to parametrize a density ratio model 
\begin{align}
\label{eq.gaussian.model}
r(\boldx; \boldDelta) = \frac{1}{N(\boldDelta)} \exp\left( - \frac{1}{2}\boldx^\top \boldDelta \boldx\right),
\end{align}
where $\boldDelta$ is a symmetric real-valued matrix and 
\begin{align*}
N(\boldDelta)
= \int q(\boldx; \boldTheta^{(q)}) \exp\left( - \frac{1}{2} \boldx^\top \boldDelta \boldx\right) \dx 
= \frac{\det\left(\boldTheta^{(q)}\right)^{1/2}}{\det\left(\boldDelta + \boldTheta^{(q)}\right)^{1/2}}.
\end{align*}
However, this formulation brings a problem: it still contains an unknown parameter $\boldTheta^{(q)}$, meaning that we will have to learn $\boldTheta^{(q)}$ first before we can model the difference between two MNs.
To solve this problem, one may use the empirical version of the normalization term instead
\begin{align}
\label{eq.gaussian.norm}
\hat{N}(\boldDelta) = \frac{1}{n_q} \sum_{i=1}^{n_q} \exp\left( - \frac{1}{2}{\boldx_q^{(i)}}^\top \boldDelta \boldx_q^{(i)} \right).
\end{align}
Interestingly, by using this model, the Gaussianity assumption has been loosened: the normalization term is obtained by an empirical average and did not use the analytical form offered by the Gaussianity of $q(\boldx;\boldTheta^{(q)})$. Thus, it can actually model the density ratio for any $p$ and $q$ as long as their changes are limited to the quadratic components.

\subsection{Direct Density-Ratio Estimation} 
\label{estimate.ratio.sec}
Density ratio estimation has been recently introduced
to the machine learning community 
and is proven to be useful in a wide range of applications
\citep{Density_Ratio_Book}.
In \citep{liu2014ChangeDetection}, a density ratio estimator called 
the \emph{Kullback-Leibler importance estimation procedure} (KLIEP) 
for log-linear models \citep{Covariate_Shift,Log_Linear_KLIEP} was employed
in learning structural changes.


For a density ratio model $r(\boldx; \boldtheta)$,
the KLIEP method minimizes the Kullback-Leibler divergence
from $p(\boldx)$ to $\hat{p}(\boldx;\boldtheta) = q(\boldx) r(\boldx;\boldtheta)$:
\begin{align}
\KL[p\|\hat{p}_\boldtheta] 
= \int p(\boldx) \log\frac{p(\boldx)}{q(\boldx)r(\boldx;\boldtheta)} \dx
=\text{Const.} - \int p(\boldx) \log r(\boldx; \boldtheta) \dx.
\label{eq.obj}
\end{align}
Note that the density-ratio model \eqref{ratio.model.def} automatically
satisfies the non-negativity and normalization constraints:
\begin{align*}
r(\boldx;\boldtheta) \ge 0
~~\mbox{and}~~
\int q(\boldx) r(\boldx; \boldtheta) \dx = 1.
\end{align*}
Here we define \[\hat{r}(\boldx; \boldtheta) = \frac{
	\exp \left({\sum_{u,v = 1, u\ge v}^m \boldtheta_{u,v}^\top \boldpsi(x_{q,u},x_{q,v})}\right)}
{\hat{N}(\boldtheta; \boldx_q^{(1)}, \dots, \boldx_q^{(n_q)}) }\] as the \emph{empirical density ratio model}. In practice, one minimizes
the negative empirical approximation of the rightmost term in Eq.\eqref{eq.obj}:
\begin{align*}
\ell_{\mathrm{KLIEP}}(\boldtheta) = 
-\frac{1}{n_p}\sum_{i=1}^{n_p} \log &\hat{r}(\boldxp^{(i)}; \boldtheta)
= - \frac{1}{n_p}\sum_{i=1}^{n_p} \sum_{u,v = 1, u\ge v}^m \boldtheta_{u,v}^\top \boldpsi(x_{p,u}^{(i)},x_{p,v}^{(i)}) \\
&+\log \left(\frac{1}{n_q}\sum_{i=1}^{n_q} \exp\left(\sum_{u,v = 1, u\ge v}^m \boldtheta_{u,v}^\top \boldpsi(x_{q,u}^{(i)},x_{q,v}^{(i)})\right)\right),
\end{align*}

Because $\ell_{\mathrm{KLIEP}}(\boldtheta)$ is convex with respect to $\boldtheta$,
its global minimizer can be numerically found by standard optimization techniques
such as gradient descent or quasi-Newton methods.
The gradient of $\ell_{\mathrm{KLIEP}}$ with respect to $\boldtheta_{u,v}$ is given by
\begin{align}
\label{gradient.def}
\nabla_{\boldtheta_{u,v}} \ell_{\text{KLIEP}}(\boldtheta)
&=
-\frac{1}{n_p}\sum_{i=1}^{n_p} \boldpsi (x_{p,u}^{(i)},x_{p,v}^{(i)}) + \frac{1}{n_q} \sum_{i=1}^{n_q} \hat{r}(\boldx^{(i)}; \boldtheta) \boldpsi(x_{q,u'}^{(i)},x_{q,v'}^{(i)}),
\end{align}
that can be computed in a straightforward manner for \emph{any} feature vector $\boldpsi(x_{u},x_{v})$.
\paragraph{Importance Sampling} From the gradient of KLIEP \eqref{gradient.def}, we can observe a clear link between KLIEP and Importance Sampling (see e.g., \cite{RobertMCStat2005}). The second term on the right-hand side is an ``importance sampled'' approximation of $\mathbb{E}_{p}\left[\boldpsi(x_u, x_v)\right]$ using our density ratio model while the first term is a straightforward sample average. 
The population version of \eqref{gradient.def} equals zero if and only if $p(\boldx)  = \hat{r}(\boldx;\boldtheta) q(\boldx)$. 
Therefore, one should aware that the assignment of $p$ and $q$ may affect the performance of such an approximation as importance sampling can be easily affected by the choice of the instrumental distribution (in this case, $q$). To reduce the estimation variance, $q$ is usually picked as the one with a thicker tail \cite{WassermanAllStat2010}. 
This observation reveals a fundamental asymmetry of KLIEP which will be discussed in Section \ref{sec.discussion}.

\paragraph{Gaussian MN} By using the density ratio model of \eqref{eq.gaussian.model} and the normalization term \eqref{eq.gaussian.norm}, we can write the objective function and its gradient as
\begin{align}
\label{eq.gaussian.ll}
\ell_\mathrm{KLIEP}(\boldDelta) = \frac{1}{2n_p} \sum_{i=1}^{n_p} {\boldx_p^{(i)}}^\top \boldDelta {\boldx_p^{(i)}} + \log \frac{1}{n_q} \sum_{i=1}^{n_q} \exp\left(-\frac{1}{2}{\boldx_q^{(i)}}^\top \boldDelta {\boldx_q^{(i)}}\right),
\end{align}
and 
\begin{align*}
\nabla_{\Delta_{u,v} }&\ell_\mathrm{KLIEP}(\boldDelta) =\\  &\frac{1}{2n_p} \sum_{i=1}^{n_q} x^{(i)}_{p,u} \Delta_{u,v} x^{(i)}_{p,v} - \frac{\frac{1}{n_q} \sum_{i=1}^{n_q}\exp\left(-\frac{1}{2}{\boldx_q^{(i)}}^\top \boldDelta {\boldx_q^{(i)}}\right) x^{(i)}_{q,u} \Delta_{u,v} x^{(i)}_{q,v}  }{\frac{2}{n_q}\sum_{j=1}^{n_q}\exp\left(-\frac{1}{2}{\boldx_q^{(j)}}^\top \boldDelta {\boldx_q^{(j)}}\right)}.
\end{align*}



\subsection{Sparsity-Inducing Norm}
\label{sec.sparse}
To find a sparse change between $P$ and $Q$,
one may regularize the KLIEP solution
with a sparsity-inducing norm $\sum_{u\ge v} \| \boldtheta_{u,v} \|$, i.e.,
the \emph{group-lasso} penalty \citep{YuanLi2006GroupLasso} where we use $\|\cdot\|$ to denote the $\ell_2$ norm.
Note that 
the separate density estimation approaches sparsify both $\boldtheta_p$ and $\boldtheta_q$
so that the difference $\boldtheta_p-\boldtheta_q$ is also sparsified.
On the other hand, the density-ratio approach \citep{liu2014ChangeDetection}
directly sparsifies the difference $\boldtheta_p-\boldtheta_q$,
and thus intuitively this method can still work well even if $\boldtheta_p$ and $\boldtheta_q$ are dense
as long as $\boldtheta_p-\boldtheta_q$ is sparse.


Now we have reached our final objective:
\begin{align}
\label{eg.obj.final}
\thetahat = \argmin_{\boldtheta} \ell_{\text{KLIEP}}(\boldtheta) + \lambda_{n_p} \sum_{u,v = 1, u \ge v}^m\| \boldtheta_{u,v} \|.
\end{align}

\section{Support Consistency of Direct Sparse-Change Detection}
\label{sec.main}
The above density-ratio approach to change detection
was demonstrated to be promising in empirical studies \citep{liu2014ChangeDetection}.
However, its theoretical properties have not yet been investigated. 
In this section, we give theoretical guarantees of the convex program \eqref{eg.obj.final}
on sparse structural change learning.
More specifically, we give \emph{sufficient conditions} for detecting correct changes
in terms of the sample size $n_p$ and $n_q$, data dimensions $m$, and the number of changed edges $d$,
followed by the discussion of the insights we can gain from such theoretical analysis.

\subsection{Notations}

%
In the previous section, a sub-vector of $\boldtheta$ indexed by $(u,v)$ corresponds to a specific edge of an MN. From now on, we use new indices with respect to the ``oracle'' sparsity pattern of the true parameter for notational simplicity. 
We introduce the ``true parameter'' notation $\boldtheta^*, p(\boldx)=q(\boldx)r(\boldx;\boldtheta^*),$
and the pairwise index set $E = \{(u,v) | u\ge v\}$.
Two sets of \textit{sub-vector indices} regarding to $\boldtheta^*$ and $E$ are  defined as  $S = \{t'\in E ~|~ \|\boldtheta^*_{t'}\| \neq 0\}, S^c = \{t'' \in E ~|~ \|\boldtheta^*_{t''}\| = 0\}.$ We rewrite the objective \eqref{eg.obj.final} as
\begin{align}
\label{eq.obj.alter}
\thetahat = \argmin_{\boldtheta} \ell_{\mathrm{KLIEP}}(\boldtheta) &+ \lambda_{n_p} \sum_{t'\in S} \| \boldtheta_{t'} \| + \lambda_{n_p} \sum_{t''\in S^c}\|\boldtheta_{t''}\|.
\end{align}
Similarly we can define 
$\hat{S} = \{t' \in E ~|~ \|\hat{\boldtheta}_{t'}\| \neq 0\}$ and $\hat{S^c}$ accordingly.
Sample Fisher information matrix $\mathcal{I} \in \mathbb{R}^{\frac{b(m^2+m)}{2} \times \frac{b(m^2+m)}{2}}$ denotes the Hessian of the log-likelihood: $\mathcal{I} = \nabla^2 \ell_{\text{KLIEP}} (\boldtheta^*)= \nabla^2 \log\hat{N}(\boldtheta^*)$ where we simplify $\hat{N}(\boldtheta;\boldx_q^{(1)}, \dots, \boldx_q^{(n_q)})$ as $\hat{N}(\boldtheta) $. $\mathcal{I}_{AB}$ is a sub-matrix of $\mathcal{I}$ indexed by two sets of indices $A, B \subseteq E$ on rows and columns. 

We also concatenate $\boldpsi \in \mathbb{R}^2 \mapsto \mathbb{R}^{b}$ to get a ``linearized'' version of the feature function $\boldf: \mathbb{R}^{m} \mapsto \mathbb{R}^{\frac{b(m^2+m)}{2}}$ as
\begin{align*}
\boldf(\boldx) = (\boldpsi^\top(x_1, x_1),\ldots, \boldpsi^\top(x_m, x_1),\boldpsi^\top(x_2, x_2),\ldots,
\boldpsi^\top(x_m, x_2),\ldots,\boldpsi^\top(x_m, x_m))^\top
\end{align*}
and $\boldf_A(\boldx)$ is the partial output of $\boldf(\boldx)$ indexed by a set of indices $A, A\subseteq E$.

\paragraph{Gaussian MN}
Here we derive the Fisher information matrix for the Gaussian MN ratio model. Define an auxiliary matrix $\boldH(\boldDelta) \in \mathbb{R}^{n_q \times n_q}$:
\begin{align*}
\boldH(\boldDelta) := \frac{1}{\widehat{N}^2(\boldDelta)} \left(\widehat{N}(\boldDelta) I_{n_q} - \bolde^\top(\boldDelta)\bolde(\boldDelta)\right),
\end{align*}
where $\bolde(\boldDelta) := \left[\exp\left( - \frac{1}{2}{\boldx_q^{(1)}}^\top \boldDelta \boldx_q^{(1)}\right),\dots, \exp\left( - \frac{1}{2}{\boldx_q^{(n_q)}}^\top \boldDelta \boldx_q^{(n_q)}\right)\right]$.
The Fisher information matrix $\mathcal{I} = \nabla^2_{\boldDelta}\ell_{\mathrm{KLIEP}}(\boldDelta^*)$ of the likelihood function using the Gaussian density ratio model described in \eqref{eq.gaussian.ll} has the form
\begin{align*}
\mathcal{I}_{(u,v),(u',v')} := \sum_{i=1}^{n_q} \sum_{j=1}^{n_q} x^{(i)}_{q,u} x^{(i)}_{q,v} x^{(j)}_{q,u'} x^{(j)}_{q,v'} H_{i,j}(\boldDelta^*), ~\mathcal{I} \in \mathbb{R}^{m^2\times m^2}.
\end{align*}
%

\subsection{Assumptions}
There is an important guideline for imposing assumptions in this paper: we try \emph{not} to put any explicit constrains on the types of individual MN $P$ or $Q$ nor their structures, but only on the changes between them. This is crucial since KLIEP is a \emph{direct} and \emph{flexible} change learning method and have no restrictions on the types of individual MNs on which it works.
Therefore, we hope to obtain the most generic theorem for this method.

Similarly to previous researches on sparsity recovery analysis \citep{WainwrightL1Sharp,Ravikumar_2010}, the first two assumptions are made on the Fisher information matrix. 

\begin{assum}[Dependency Assumption]
	\label{assum.depen}
	The sample Fisher information \textbf{submatrix}  $\mathcal{I}_{{SS}}$ has bounded eigenvalues:
	\begin{align*}
	\Lambda_\mathrm{min}(\mathcal{I}_{{SS}}) \ge \lambda_\mathrm{min} > 0,
	\end{align*}
	with probability 1, where $\Lambda_{\mathrm{min}}$ is the minimum-eigenvalue operator of a symmetric matrix
\end{assum}
This assumption on the \emph{submatrix} of $\mathcal{I}$ is to ensure that the model is identifiable (see \ref{sec.proof.lemma.1} in Appendix for details). Note ``$\lambda$'' denotes either eigenvalue or regularization parameter depending on its subscript.

\begin{assum}[Incoherence Assumption]
	\label{assum.incoherence}
	\begin{align*}
	\max_{t'' \in S^c}\|\mathcal{I}_{t''S} \mathcal{I}_{SS}^{-1}\|_1 \le 1-\alpha, 0<\alpha \le 1.
	\end{align*} 
	with probability 1, where $\|Y\|_1 = \sum_{i,j} \|Y_{i,j}\|_1$.
\end{assum}This assumption says the unchanged edges cannot exert overly strong effects on changed edges and is a common assumption can be found in previous literatures on support consistency analysis such as \cite{WainwrightL1Sharp,Ravikumar_2010}.

\begin{assum}[Smoothness Assumption on Likelihood Ratio]
	\label{assum.smooth}
	The log-likelihood ratio $\ell_\mathrm{KLIEP}(\boldtheta)$ is smooth around its optimal value, i.e., it has bounded derivatives
	\begin{align}
	\label{eq.assum.max_second}
	&\max_{\bolddelta, \|\bolddelta\|\leq \|\boldtheta^*\|}\left\| \nabla^2 \ell_\mathrm{KLIEP}(\boldtheta^*+\bolddelta)\right\|
	 = \max_{\bolddelta, \|\bolddelta\|\leq \|\boldtheta^*\|} \left\|\nabla^2 \log \hat{N}(\boldtheta^*+\bolddelta)\right\| \leq \lambda_\mathrm{max} < \infty,\\
	&\max_{t\in S \cup S^c} \max_{\bolddelta, \|\bolddelta\|\leq \|\boldtheta^*\|}  \vertiii{\nabla_{\boldtheta_t}\nabla^2 \ell_\mathrm{KLIEP}(\boldtheta^* + \bolddelta)} \\
	= &\max_{t\in S \cup S^c} \max_{\bolddelta, \|\bolddelta\|\leq \|\boldtheta^*\|}  \vertiii{\nabla_{\boldtheta_t}\nabla^2 \log \hat{N}(\boldtheta^* + \bolddelta)} \leq \lambda_{3,\mathrm{max}}<\infty ,\notag
	\end{align}
	with probability $1$. 
\end{assum}
 $\left\|\cdot\right\|$,  $\vertiii{\cdot}$ are the spectral norms of a matrix and a tensor respectively (See e.g., \cite{tomioka2014spectral} for the definition of spectral norm of a tensor).
Note that \eqref{eq.assum.max_second} also implies the bounded largest eigenvalue of $\mathcal{I}$. Assumption \ref{assum.smooth} can be regarded as an analogy of assumptions on the log-normalization function in \cite{YangGMGLM}. As we set no explicit restrictions on the type of distribution $P$ and $Q$, this assumption guarantees the log-likelihood function is well-behaved. 


Now, we make the following assumptions on the density ratio:
\begin{assum}[The Correct Model Assumption]
	\label{assum.correct}
	The density ratio model is correct, i.e. there exists $\boldtheta^*$ such that
	\begin{align*}
	p(\boldx) = r(\boldx;\boldtheta^*)q(\boldx).
	\end{align*}
\end{assum}

Assumptions \ref{assum.depen}, \ref{assum.incoherence}, and \ref{assum.smooth} are in fact related to distribution $Q$. However, the density ratio estimation objective is an M-estimator summed up over samples from $P$. Assumption \ref{assum.correct} provides a transform between $P$ and $Q$ and allows us to perform analysis on such an M-estimator in an ``importance sampling'' fashion.

Next, we impose assumptions on the ``smoothness'' of the density ratio model. Generally speaking, if we expect good performance from the density ratio estimator, the density ratio model should be ``well-behaved''. The following assumption quantifies such an intuition. 

\begin{assum}[Smooth Density Ratio Model Assumption]
	\label{assum.smooth.ratiomodel.nod}
	For any vector $\bolddelta \in \mathbb{R}^{\text{dim}(\boldtheta^*)}$ such that $\|\bolddelta\|\leq \|\boldtheta^*\|$ and every $a\in \mathbb{R}$, the following inequality holds:
	\begin{align*}
	\mathbb{E}_q \left[\exp\left( a\left( r(\boldx, \boldtheta^* + \bolddelta) - 1 \right)\right) \right]  \le \exp\left(10a^2\right).
	\end{align*}
\end{assum}

We list a few consequences of the Assumption \ref{assum.smooth.ratiomodel.nod}.

\begin{prop}
	\label{eq.assum.well.behave.nod}
	For all $\epsilon > 0$ and	for any vector $\bolddelta \in \mathbb{R}^{\text{dim}(\boldtheta^*)}$ such that $\|\bolddelta\|\leq \|\boldtheta^*\|$,
$
	P\left( r(\boldx, \boldtheta^* + \bolddelta) - 1 \ge \epsilon \right) \le 2\exp\left( - \frac{\epsilon^2}{40}\right).
$
\end{prop}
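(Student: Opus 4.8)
The plan is to recognize Assumption \ref{assum.smooth.ratiomodel.nod} as precisely a sub-Gaussian moment-generating-function (MGF) bound on the random variable $X := r(\boldx,\boldtheta^*+\bolddelta) - 1$ under $q$, and then to convert that MGF control into a tail estimate via the standard Chernoff (exponential Markov) argument. First I would fix an arbitrary $\bolddelta$ with $\|\bolddelta\| \le \|\boldtheta^*\|$ and set $X = r(\boldx,\boldtheta^*+\bolddelta) - 1$, so that Assumption \ref{assum.smooth.ratiomodel.nod} reads $\mathbb{E}_q[\exp(aX)] \le \exp(10 a^2)$ for every $a \in \mathbb{R}$, and in particular for every $a > 0$.

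For such $a > 0$, the event $\{X \ge \epsilon\}$ coincides with $\{\exp(aX) \ge \exp(a\epsilon)\}$, so Markov's inequality together with the assumed MGF bound gives
\begin{align*}
P(X \ge \epsilon) \le \exp(-a\epsilon)\,\mathbb{E}_q[\exp(aX)] \le \exp(-a\epsilon + 10 a^2).
\end{align*}
Next I would optimize the exponent over $a > 0$. The quadratic $a \mapsto 10 a^2 - a\epsilon$ is minimized at $a = \epsilon/20$, where it attains the value $10\cdot(\epsilon/20)^2 - \epsilon^2/20 = -\epsilon^2/40$. Substituting this choice yields $P(X \ge \epsilon) \le \exp(-\epsilon^2/40)$, and since $\exp(-\epsilon^2/40) \le 2\exp(-\epsilon^2/40)$, the stated bound follows at once. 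I would also note that applying the identical computation to $-X$, using the MGF bound for $a < 0$, produces the matching lower-tail estimate; the factor $2$ is then exactly what one collects via a union bound if one wants the two-sided statement $P(|X| \ge \epsilon) \le 2\exp(-\epsilon^2/40)$.

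There is no serious obstacle here: all of the content is packed into Assumption \ref{assum.smooth.ratiomodel.nod}, and what remains is a textbook Chernoff bound. The only points I would verify carefully are that the probability and the expectation are taken with respect to the same reference distribution $q$, and that the MGF inequality is assumed for \emph{all} real $a$, so that the minimizing choice $a = \epsilon/20 > 0$ is indeed admissible; both are guaranteed by the hypotheses as stated.
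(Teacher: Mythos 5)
Your proof is correct and is essentially the paper's own argument: the paper states Proposition \ref{eq.assum.well.behave.nod} as an immediate consequence of the sub-Gaussianity in Assumption \ref{assum.smooth.ratiomodel.nod}, and your Chernoff computation with the optimal choice $a = \epsilon/20$ (giving exponent $-\epsilon^2/40$) is exactly the standard derivation being invoked. Your observation about the factor $2$ is also right --- it is slack in the one-sided statement, collected from the union bound in the two-sided version $P(|r(\boldx,\boldtheta^*+\bolddelta)-1| \ge \epsilon) \le 2\exp(-\epsilon^2/40)$, which is the form actually used downstream in Proposition \ref{prop.norm.bound}.
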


Using Assumption \ref{assum.smooth.ratiomodel.nod}, we get Proposition \ref{eq.assum.well.behave.nod} that provides a tail probability bound of the density ratio model on $Q$, which is further used to obtain an exponentially decaying upper-bound of empirical approximation error of the log-normalization term (see Proposition \ref{prop.norm.bound} in Appendix for details). 



\begin{prop}
	\label{prop.bounded.variance}
	For any vector $\bolddelta \in \mathbb{R}^{\text{dim}(\boldtheta^*)}$ such that $\|\bolddelta\|\leq \|\boldtheta^*\|$,
$
	\mathrm{Var}_q \left[ r(\boldx;\boldtheta^* + \bolddelta) -1 \right] \le 20.
$
\end{prop}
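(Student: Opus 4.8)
The plan is to recognize Assumption \ref{assum.smooth.ratiomodel.nod} as a sub-Gaussian moment generating function bound on the random variable $X := r(\boldx;\boldtheta^* + \bolddelta) - 1$ under $q$, and to extract the second moment (hence the variance) directly from it. Since $\mathrm{Var}_q[X] \le \mathbb{E}_q[X^2]$, it suffices to show $\mathbb{E}_q\!\left[(r(\boldx;\boldtheta^*+\bolddelta)-1)^2\right] \le 20$. (In fact $\mathbb{E}_q[r(\boldx;\boldtheta^*+\bolddelta)]=1$ by the normalization constraint $\int q(\boldx)r(\boldx;\boldtheta)\dx = 1$ built into the model \eqref{ratio.model.def}, so $X$ is centered and the second moment equals the variance exactly; but bounding the second moment alone already gives the result.)

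First I would symmetrize. Applying Assumption \ref{assum.smooth.ratiomodel.nod} with $a$ and with $-a$ and averaging the two bounds yields, for every $a\in\mathbb{R}$,
\begin{align*}
\mathbb{E}_q\!\left[\cosh(aX)\right]
= \tfrac{1}{2}\,\mathbb{E}_q\!\left[e^{aX}+e^{-aX}\right]
\le \exp\!\left(10a^2\right).
\end{align*}
Next, using the elementary \emph{pointwise} inequality $\cosh(t) \ge 1 + \tfrac{t^2}{2}$ (valid because the Taylor series of $\cosh$ has only nonnegative coefficients), monotonicity of expectation gives
\begin{align*}
1 + \tfrac{a^2}{2}\,\mathbb{E}_q[X^2]
\;\le\; \mathbb{E}_q\!\left[\cosh(aX)\right]
\;\le\; \exp\!\left(10a^2\right).
\end{align*}

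Finally, rearranging for any $a\neq 0$ gives $\mathbb{E}_q[X^2] \le \dfrac{2\left(\exp(10a^2)-1\right)}{a^2}$, and letting $a \to 0$ the right-hand side converges to $20$, since $\exp(10a^2) = 1 + 10a^2 + o(a^2)$. This yields $\mathbb{E}_q[X^2]\le 20$ and therefore $\mathrm{Var}_q\!\left[r(\boldx;\boldtheta^*+\bolddelta)-1\right] \le 20$, as claimed.

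The reassuring point is that there is no serious obstacle here: the crucial $\cosh$ bound is applied \emph{before} taking expectations, so I avoid interchanging a limit with an integral entirely, and the only limiting step is a one-dimensional real limit in $a$ at the very end. The single point requiring care is that $X$ has finite second moment so the inequality is non-vacuous, but this is automatic from the finiteness of the moment generating function for all $a$ guaranteed by Assumption \ref{assum.smooth.ratiomodel.nod}, which forces all moments of $X$ to be finite.
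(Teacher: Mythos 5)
Your proof is correct and takes essentially the same route as the paper: the paper's one-line argument notes $\mathbb{E}_q\left[r(\boldx;\boldtheta^*+\bolddelta)-1\right]=0$ and then invokes the standard sub-Gaussian fact that the moment generating function bound $\mathbb{E}_q\left[e^{aX}\right]\le\exp(10a^2)=\exp(\sigma^2a^2/2)$ with variance proxy $\sigma^2=20$ implies $\mathrm{Var}_q[X]\le 20$. Your $\cosh$ symmetrization together with the pointwise bound $\cosh(t)\ge 1+t^2/2$ is simply a self-contained derivation of exactly that fact (and a clean one, since it avoids any interchange of limit and expectation), so it fills in the detail the paper leaves to the reader rather than departing from its approach.
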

\begin{proof}
	Noting $\mathbb{E}_q\left[r(\boldx;\boldtheta^*+\bolddelta) -1 \right] = 0$, the above inequalities is the consequence of sub-Gaussianity. 
\end{proof}

Since the density ratio can be thought as the magnitude of change between two MNs, Proposition \ref{prop.bounded.variance} tells the fact that the change should not be too drastic in order to keep our ratio-model well-behaved.


We are now ready to state the main theorem.

\subsection{Sufficient Conditions for Successful Change Detection}

The following theorem establishes sufficient conditions of change detection in terms of parameter sparsity. 
Its proof is provided in Section \ref{sec.proof.maintext}. First, let us define $g(m) = \frac{\log(m^2+m)}{(\log\frac{m^2+m}{2})^2}$ (see Figure \ref{fig.gm} in Appendix for its plot) which is smaller than 1 when $m>4$. 

\begin{them}
	\label{them.the.main.theorem}
	Suppose that Assumptions \ref{assum.depen}, \ref{assum.incoherence}, \ref{assum.smooth}, \ref{assum.correct}, and \ref{assum.smooth.ratiomodel.nod} as well as $\min_{t\in S} \|\boldtheta^*_t\| \geq \frac{10}{\lambda_\mathrm{min}} \sqrt{d}\lambda_{n_p}$ are satisfied, where $d$ is the number of changed edges defined as $d = |S|$, i.e., the cardinality of the set of non-zero parameter groups.
	Suppose also that the regularization parameter is chosen so that
	\begin{align}
	\label{eq.lambda.condition}
	\frac{8(2-\alpha)}{\alpha} \sqrt\frac{{M_1 \log \frac{m^2+m}{2}}}{n_p} \le \lambda_{n_p} \le \frac{4 (2-\alpha)M_1}{\alpha}  \min\left(\frac{\|\boldtheta^*\|}{\sqrt{b}}, 1\right),
	\end{align}
	 where $M_1=\lambda_{\text{max}}b+2$, $n_q \ge M_2 n_p^2 g(m)$ and $M_2$ is a positive constant.
	Then there exist some constants $L_1$,  $K_1$, and $K_2$ such that if $n_p\geq L_1 d^2\log  \frac{m^2+m}{2}$, with the probability at least
	\begin{align}
	\label{eq.them1.decay}
		1- \exp\left( - K_1 \lambda_{n_p}^2n_p \right) - 4\exp\left( -K_2 dn_q \lambda_{n_p}^4 \right),
	\end{align}
	the following properties hold:
	\begin{itemize}
		\item Unique Solution: The solution of \eqref{eq.obj.alter} is unique.
		\item Successful Change Detection: $\hat{S} = S$ and $\hat{S}^c = S^c$, where $\hat{S}$ and $\hat{S^c}$ are estimated sparse/non-sparse indices.
	\end{itemize}
\end{them}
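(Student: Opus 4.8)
The plan is to use the primal--dual witness (PDW) technique of \cite{WainwrightL1Sharp}, adapted to the group-penalized KLIEP M-estimator. First I would construct the witness pair: solve the \emph{restricted} program obtained from \eqref{eq.obj.alter} by forcing $\boldtheta_{S^c}=\boldzero$, call its minimizer $\thetaShat$, and set $\thetaSchat=\boldzero$. By the Dependency Assumption \ref{assum.depen} the restricted Hessian $\mathcal{I}_{SS}$ is invertible, so this restricted problem is strictly convex and $\thetaShat$ is well defined with an associated subgradient $\zShat$ whose groups have $\ell_2$-norm equal to one on the active set. The dual vector on the complement, $\zSchat$, is then \emph{defined} by the stationarity equation of the full problem on $S^c$, and the whole argument reduces to showing that (i) $\zSchat$ is strictly dual feasible, i.e.\ $\|\zhat_{t''}\|<1$ for every $t''\in S^c$, and (ii) every active group of $\thetaShat$ stays bounded away from zero, so that $\hat S=S$.

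Next I would linearize the stationarity conditions about $\boldtheta^*$. Writing $\boldDelta=\thetahat-\boldtheta^*$ (so $\boldDelta_{S^c}=\boldzero$) and Taylor-expanding $\nabla\ell_{\mathrm{KLIEP}}$ to second order, the leading term is $\mathcal{I}\boldDelta$ and the remainder is a third-order tensor contraction controlled by $\lambda_{3,\mathrm{max}}$ from the Smoothness Assumption \ref{assum.smooth}. Solving the $S$-block gives $\boldDelta_S=-\mathcal{I}_{SS}^{-1}\big(\nabla_S\ell_{\mathrm{KLIEP}}(\boldtheta^*)+R_S+\lambda_{n_p}\zShat\big)$, and substituting into the $S^c$-block yields $\zSchat=\mathcal{I}_{S^cS}\mathcal{I}_{SS}^{-1}\zShat+\frac{1}{\lambda_{n_p}}\boldw$, where $\boldw=\mathcal{I}_{S^cS}\mathcal{I}_{SS}^{-1}(\nabla_S\ell_{\mathrm{KLIEP}}(\boldtheta^*)+R_S)-\nabla_{S^c}\ell_{\mathrm{KLIEP}}(\boldtheta^*)-R_{S^c}$ collects the gradient and remainder pieces. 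The first term is bounded by $1-\alpha$ directly from the Incoherence Assumption \ref{assum.incoherence} since every group of $\zShat$ has norm one; it therefore suffices to force the noise term $\boldw/\lambda_{n_p}$ below $\alpha$ group-wise.

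The crux is the concentration of the gradient $\nabla\ell_{\mathrm{KLIEP}}(\boldtheta^*)$, which by the Correct Model Assumption \ref{assum.correct} has zero population mean, and the control of the remainders $R_S,R_{S^c}$. I would split each group coordinate of the gradient into a $P$-sample empirical-mean fluctuation of $\boldpsi$, concentrating at rate $1/\sqrt{n_p}$, and a $Q$-sample term arising from the empirical normalization $\hat N(\boldtheta^*)$ inside $\hat r$. The second piece is the delicate one: because $\hat N$ enters nonlinearly through $\hat r$, its error propagates and, under the \emph{unbounded} ratio model, concentrates only at the slower rate governed by Proposition \ref{eq.assum.well.behave.nod} and the bounded-variance Proposition \ref{prop.bounded.variance} (made precise by the empirical-normalization concentration bound established in the Appendix). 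This is exactly what forces $n_q=\Omega(n_p^2 g(m))$: to make both deviation terms $\exp(-K_1\lambda_{n_p}^2 n_p)$ and $4\exp(-K_2 d\,n_q\lambda_{n_p}^4)$ in \eqref{eq.them1.decay} negligible, the $Q$-sample fluctuation (entering with a fourth power of $\lambda_{n_p}$) must be compensated by a quadratically larger $n_q$. Combined with the lower bound on $\lambda_{n_p}$ in \eqref{eq.lambda.condition}, this drives the noise term below $\alpha$ and establishes strict dual feasibility, giving $\hat S^c=S^c$.

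Finally I would close the argument by bounding $\|\boldDelta_S\|\le\frac{1}{\lambda_\mathrm{min}}\big(\|\nabla_S\ell_{\mathrm{KLIEP}}(\boldtheta^*)\|+\|R_S\|+\sqrt{d}\lambda_{n_p}\big)$ via $\|\mathcal{I}_{SS}^{-1}\|\le 1/\lambda_\mathrm{min}$ and $\|\zShat\|\le\sqrt{d}$, and showing this is strictly smaller than $\min_{t\in S}\|\boldtheta^*_t\|$ using the signal condition $\min_{t\in S}\|\boldtheta^*_t\|\ge\frac{10}{\lambda_\mathrm{min}}\sqrt{d}\lambda_{n_p}$; this guarantees no active group is zeroed out, so $\hat S=S$. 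Strict dual feasibility together with strict convexity of the restricted problem (again Assumption \ref{assum.depen}) then upgrades the witness to the \emph{unique} global minimizer of \eqref{eq.obj.alter}, yielding both stated conclusions. The main obstacle throughout is the propagation of the empirical-normalization error under an unbounded density ratio; the sub-Gaussian control of Assumption \ref{assum.smooth.ratiomodel.nod} and its two corollaries are precisely the tools that tame it and produce the $n_q\sim n_p^2$ scaling.
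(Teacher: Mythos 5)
Your overall route is the same as the paper's: the primal--dual witness construction with the restricted program, strict dual feasibility via the incoherence bound $\max_{t''\in S^c}\|\mathcal{I}_{t''S}\mathcal{I}_{SS}^{-1}\|_1\le 1-\alpha$, concentration of $\boldw=-\nabla\ell_{\mathrm{KLIEP}}(\boldtheta^*)$ split into a $P$-sample fluctuation plus the empirical-normalization error from $\hat{N}(\boldtheta^*)$ (this is exactly Lemma \ref{lem.2.maintext} together with Proposition \ref{prop.norm.bound}, and you correctly identify it as the source of the $n_q=\Omega(n_p^2 g(m))$ coupling and of the $\lambda_{n_p}^4$ exponent in \eqref{eq.them1.decay}), a Taylor remainder controlled by $\lambda_{3,\mathrm{max}}$ (Lemma \ref{lem.4.maintext}), and uniqueness via the shared-support argument (Lemma \ref{lem.1.maintext}).

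There is, however, one genuine gap in your closing step: the bound $\|\boldDelta_S\|\le\frac{1}{\lambda_\mathrm{min}}\big(\|\nabla_S\ell_{\mathrm{KLIEP}}(\boldtheta^*)\|+\|R_S\|+\sqrt{d}\lambda_{n_p}\big)$ is circular as written, because the remainder satisfies $\|R_S\|=O\big(\lambda_{3,\mathrm{max}}\|\boldDelta_S\|^2\big)$, and moreover Assumptions \ref{assum.depen} and \ref{assum.smooth} only control the Hessian in the ball $\|\bolddelta\|\le\|\boldtheta^*\|$ around $\boldtheta^*$, so you must already know $\boldDelta_S$ is small before invoking $\|\mathcal{I}_{SS}^{-1}\|\le 1/\lambda_\mathrm{min}$ along the segment of the mean-value expansion. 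One cannot simply plug $\|R_S\|$ into the inequality; a self-consistency argument is required. The paper resolves this in Lemma \ref{lem.3.maintext} by introducing the convex function $G(\bolddelta_S)=\ell(\boldtheta^*_S+\bolddelta_S)-\ell(\boldtheta^*_S)+\lambda_{n_p}\sum_{t'\in S}(\|\boldtheta^*_{t'}+\bolddelta_{t'}\|-\|\boldtheta^*_{t'}\|)$ and showing $G>0$ on the sphere $\|\bolddelta_S\|=\frac{10}{\lambda_\mathrm{min}}\sqrt{d}\lambda_{n_p}$, using Weyl's inequality to keep the minimum eigenvalue of the perturbed Hessian above $\lambda_\mathrm{min}-\sqrt{d}\lambda_{3,\mathrm{max}}\|\bolddelta_S\|$; convexity then traps $\thetahat_S$ inside the ball. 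This is also precisely where the requirement $d\lambda_{n_p}\lesssim\lambda_\mathrm{min}^2/\lambda_{3,\mathrm{max}}$ enters, which with $\lambda_{n_p}\asymp\sqrt{\log\frac{m^2+m}{2}/n_p}$ yields the sample-size condition $n_p\geq L_1 d^2\log\frac{m^2+m}{2}$ of the theorem; your sketch states that condition but never derives where it is used. Patch the closing step with this (or an equivalent fixed-point) argument and your proof matches the paper's.
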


 First, it is interesting to analyze the sample complexity of $n_q$, which is a novel element in this research. Intuitively, one should obtain a sufficient number of samples from $Q$ to accurately approximate the normalization term.   Theorem \ref{them.the.main.theorem} states $n_q$ should grow at least quadratically with respect to $n_p$, which is undesirable if $n_p$ is large. In the next corollary, we discuss a relaxed coupling between $n_p$ and $n_q$ with some extra but mild cost.


Second, Assumption \ref{assum.smooth.ratiomodel.nod} together with Proposition \ref{prop.bounded.variance} shows the variation allowed for the density ratio model is irrelevant to the number of changed edges $d$.
This implies that, if $d$ is large, we are only able to detect weak changes that do not cause huge fluctuations in the density ratio model, which is rather restrictive and  unrealistic in some occasions, since the magnitude of change usually increases when the number of changed edges $d$ increases.
Below, we consider another more relaxed scenario,
where the assumption on the smoothness of the density ratio model is allowed to grow with $d$. 

\begin{assum}
	\label{assum.smooth.ratiomodel}
	For any vector $\bolddelta \in \mathbb{R}^{\text{dim}(\boldtheta^*)}$ such that $\|\bolddelta\|\leq \|\boldtheta^*\|$ and every $a\in \mathbb{R}$, the following inequality holds:
	\begin{align*}
	\mathbb{E}_q \left[\exp\left( a\left( r(\boldx, \boldtheta^* + \bolddelta) - 1 \right)\right) \right]  \le \exp\left( 10da^2\right),
	\end{align*}
	where $d$ is the number of changed edges.
\end{assum}

\begin{prop}
	\label{eq.assum.well.behave}
	For some small constants $\epsilon$
	and any vector $\bolddelta \in \mathbb{R}^{\text{dim}(\boldtheta^*)}$ such that $\|\bolddelta\|\leq \|\boldtheta^*\|$,
	then $P\left( r(\boldx, \boldtheta^* + \bolddelta) - 1 \ge \epsilon \right) \le 2\exp\left( - \frac{\epsilon^2}{40d}\right).$
\end{prop}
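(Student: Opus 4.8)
The plan is to mirror the proof of Proposition \ref{eq.assum.well.behave.nod} almost verbatim, observing that Assumption \ref{assum.smooth.ratiomodel} coincides with Assumption \ref{assum.smooth.ratiomodel.nod} except that the variance proxy $10$ in the exponential-moment bound is replaced by $10d$. First I would fix $\bolddelta$ with $\|\bolddelta\| \le \|\boldtheta^*\|$ and set $X := r(\boldx, \boldtheta^* + \bolddelta) - 1$, regarded as a random variable with $\boldx \sim q$, so that the probability in the statement is taken under $Q$. By construction of the normalization term we have $\int q(\boldx) r(\boldx; \boldtheta^* + \bolddelta)\dx = 1$, hence $\mathbb{E}_q[X] = 0$ (as already used in the proof of Proposition \ref{prop.bounded.variance}); however, the centering is not actually needed below, since the tail estimate follows purely from the exponential-moment control supplied by Assumption \ref{assum.smooth.ratiomodel}.

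Next I would apply the Chernoff bound. For any $a > 0$, Markov's inequality gives
\[
P(X \ge \epsilon) = P\!\left(e^{aX} \ge e^{a\epsilon}\right) \le e^{-a\epsilon}\,\mathbb{E}_q\!\left[e^{aX}\right] \le \exp\!\left(10 d\, a^2 - a\epsilon\right),
\]
where the final inequality is Assumption \ref{assum.smooth.ratiomodel}. Minimizing the exponent $10 d a^2 - a\epsilon$ over $a > 0$ gives the optimal value $a = \epsilon/(20d)$, at which the exponent equals $-\epsilon^2/(40d)$, yielding the one-sided bound $P(X \ge \epsilon) \le \exp(-\epsilon^2/(40d))$.

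Finally, to recover the factor of $2$ in the statement, I would apply the identical argument to $-X$: since Assumption \ref{assum.smooth.ratiomodel} holds for every $a \in \mathbb{R}$, taking $a < 0$ yields $P(X \le -\epsilon) \le \exp(-\epsilon^2/(40d))$, and summing the two tails produces $P(|X| \ge \epsilon) \le 2\exp(-\epsilon^2/(40d))$, of which the claimed one-sided inequality is a weakening.

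I do not expect any genuine obstacle here, as the argument is a standard sub-Gaussian concentration estimate; the only point requiring care is the bookkeeping of the $d$-dependence, which enters linearly through the variance proxy and therefore surfaces as the factor $d$ in the denominator of the exponent. Conceptually, this is exactly the intended relaxation: inflating the exponential-moment bound from $10$ to $10d$ loosens the tail control by precisely the amount needed to let the admissible fluctuation of the density-ratio model grow with the number of changed edges $d$, as discussed in the surrounding text.
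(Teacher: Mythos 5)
Your proof is correct and follows essentially the same route as the paper, which states this proposition as a direct consequence of the sub-Gaussian moment bound in Assumption \ref{assum.smooth.ratiomodel} (exactly the Chernoff argument you spell out, with the optimal choice $a = \epsilon/(20d)$ giving the exponent $-\epsilon^2/(40d)$). Your observation that the factor $2$ comes from the two-sided bound, of which the stated one-sided inequality is a weakening, is also the right reading of the paper's statement.
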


\begin{prop}
	\label{prop.bounded.variance.with.d}
	For any vector $\bolddelta \in \mathbb{R}^{\text{dim}(\boldtheta^*)}$ such that $\|\bolddelta\|\leq \|\boldtheta^*\|$,
$
	\mathrm{Var}_q \left[ r(\boldx;\boldtheta^* + \bolddelta) -1 \right] \le 20d.
$
\end{prop}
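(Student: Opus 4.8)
The plan is to reproduce the one-line argument used for Proposition \ref{prop.bounded.variance}, with the single change that the sub-Gaussian variance proxy now carries an extra factor of $d$. Write $X := r(\boldx;\boldtheta^* + \bolddelta) - 1$. First I would verify that $X$ is centered under $Q$: the normalization term $N(\cdot)$ is defined precisely so that $\int q(\boldx) r(\boldx;\boldtheta)\dx = 1$ for \emph{every} parameter value, and evaluating this at $\boldtheta = \boldtheta^* + \bolddelta$ gives $\mathbb{E}_q[r(\boldx;\boldtheta^*+\bolddelta)] = 1$, hence $\mathbb{E}_q[X] = 0$. The only point requiring a moment of care is that this mean-zero property relies on the normalization holding at the \emph{perturbed} parameter $\boldtheta^* + \bolddelta$, not merely at $\boldtheta^*$; this is guaranteed by construction of $N(\cdot)$.

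Next I would invoke Assumption \ref{assum.smooth.ratiomodel}, which states $\mathbb{E}_q[\exp(aX)] \le \exp(10 d a^2)$ for all $a \in \mathbb{R}$. Matching this against the canonical sub-Gaussian form $\mathbb{E}_q[\exp(aX)] \le \exp(\sigma^2 a^2 / 2)$ identifies the variance proxy as $\sigma^2 = 20 d$. Thus $X$ is a mean-zero sub-Gaussian variable with variance proxy $20 d$, in exact analogy with Proposition \ref{prop.bounded.variance}, where the proxy was $20$.

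Finally I would apply the elementary fact that the variance of a centered sub-Gaussian random variable is bounded above by its variance proxy. Concretely, comparing the second-order behaviour of both sides of the moment-generating-function bound as $a \to 0$ --- the left side expanding as $1 + \tfrac{1}{2} a^2 \,\mathrm{Var}_q[X] + o(a^2)$ because $\mathbb{E}_q[X] = 0$, and the right side as $1 + 10 d\, a^2 + o(a^2)$ --- yields $\tfrac{1}{2}\mathrm{Var}_q[X] \le 10 d$, i.e. $\mathrm{Var}_q[X] \le 20 d$. There is no genuine obstacle here: the proposition is the precise $d$-scaled counterpart of Proposition \ref{prop.bounded.variance}, and the proof is identical up to the relabelling of the variance proxy.
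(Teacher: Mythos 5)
Your proposal is correct and matches the paper's argument: the paper proves Proposition \ref{prop.bounded.variance.with.d} exactly as it proves Proposition \ref{prop.bounded.variance}, namely by noting $\mathbb{E}_q[r(\boldx;\boldtheta^*+\bolddelta)-1]=0$ (which holds by the construction of $N(\boldtheta)$) and then reading the variance bound off the sub-Gaussian moment-generating-function condition of Assumption \ref{assum.smooth.ratiomodel}, with variance proxy $20d$ in place of $20$. Your Taylor-expansion justification of the standard fact that a centered sub-Gaussian variable has variance at most its variance proxy is a valid (and more explicit) rendering of the same one-line argument.
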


From Proposition \ref{prop.bounded.variance.with.d} we can see the magnitude of changes between MNs are allowed to grow at most linearly with $d$. Now we see how much this will bring changes to our sufficient conditions:

\begin{corol}
	\label{corol.relaxed}
	Suppose that Assumptions \ref{assum.depen}, \ref{assum.incoherence}, \ref{assum.smooth}, \ref{assum.correct}, and \ref{assum.smooth.ratiomodel} are satisfied, $\min_{t\in S} \|\boldtheta_t^*\|$ satisfies the condition in Theorem \ref{them.the.main.theorem}, and
	the regularization parameter is chosen so that 
	\begin{align*}	
	\frac{2-\alpha}{\alpha} \sqrt\frac{M_1{\log \frac{m^2+m}{2}}}{n_p^{\frac{3}{4}}} \le \lambda_{n_p} \le 
	\frac{4 (2-\alpha)M_1}{\alpha} \min\left(\frac{\|\boldtheta^*\|}{\sqrt{b}}, \frac{1}{n_p^{1/8}}\right),
	\end{align*}
	where $M_1=\lambda_{\mathrm{max}}b+2$, $n_q \ge  M_2 d n_p g(m)  ,$ and $M_2$ is some positive constant. Then there exist some constant $L_1$ such that if $n_p\geq L_1 d^\frac{8}{3} \left({\log \frac{m^2+m}{2}}\right)^{\frac{4}{3}}$, KLIEP has the same properties as those stated in Theorem \ref{them.the.main.theorem}.
\end{corol}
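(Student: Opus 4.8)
The plan is to \emph{reuse the proof of Theorem~\ref{them.the.main.theorem} essentially verbatim}, isolating the single place where the smoothness of the density ratio model enters and re-balancing the free parameters $\lambda_{n_p}$, $n_q$, $n_p$ to absorb the extra factor of $d$ introduced by Assumption~\ref{assum.smooth.ratiomodel}. The backbone is unchanged: a primal-dual witness construction whose deterministic ingredients — the eigenvalue lower bound on $\mathcal{I}_{SS}$ (Assumption~\ref{assum.depen}), the incoherence condition (Assumption~\ref{assum.incoherence}), and the bounded-derivative control (Assumption~\ref{assum.smooth}) — hold with probability one and therefore carry over with no modification. Consequently the deterministic steps of Section~\ref{sec.proof.maintext} that establish uniqueness of the solution, the bound $\|\hat\boldtheta_S-\boldtheta^*_S\|\lesssim\sqrt d\,\lambda_{n_p}$, and strict dual feasibility on $S^c$ are identical; only the probabilistic control of the loss gradient and of the empirical normalization term $\hat N$ must be revisited.

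First I would separate the randomness into its two independent sources. The fluctuation of $\nabla\ell_{\mathrm{KLIEP}}(\boldtheta^*)$ coming from the $P$-sample average $\tfrac{1}{n_p}\sum_i\boldpsi(\boldx_p^{(i)})$ does \emph{not} involve the density ratio, so its concentration — and hence a failure term of the form $\exp(-K_1\lambda_{n_p}^2 n_p)$ — is governed only by the feature functions and is inherited from Theorem~\ref{them.the.main.theorem} unchanged; one checks it still vanishes because the new lower bound gives $\lambda_{n_p}^2 n_p\gtrsim n_p^{1/4}\log\frac{m^2+m}{2}\to\infty$. The $d$-dependence enters solely through the $Q$-sample quantities $\hat N(\boldtheta^*)$ and its derivatives, whose deviations from their population counterparts are controlled via the sub-Gaussianity of $r(\boldx;\boldtheta^*+\bolddelta)-1$. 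Here I would substitute Proposition~\ref{eq.assum.well.behave} and Proposition~\ref{prop.bounded.variance.with.d} (variance $\le 20d$) for their $d$-free analogues Proposition~\ref{eq.assum.well.behave.nod} and Proposition~\ref{prop.bounded.variance} used in the proof of the main theorem. Every Bernstein/sub-Gaussian bound on a $Q$-average then acquires a variance proxy inflated by $d$, so the analogue of the second failure term must be re-derived with this inflated variance, which changes both its exponent and the minimal $n_q$ needed to drive it to zero.

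The remaining work is bookkeeping. One chooses $\lambda_{n_p}$ so that it simultaneously dominates the now $d$-inflated $Q$-gradient fluctuation — forcing the lower bound $\lambda_{n_p}\gtrsim\sqrt{M_1\log\tfrac{m^2+m}{2}/n_p^{3/4}}$, i.e. order $n_p^{-3/8}$ up to logs — and stays below the ceiling imposed jointly by the signal condition $\min_{t\in S}\|\boldtheta^*_t\|\ge\frac{10}{\lambda_{\mathrm{min}}}\sqrt d\,\lambda_{n_p}$ and the Taylor-remainder control of Assumption~\ref{assum.smooth}, which forces the extra $n_p^{-1/8}$ cap in the upper bound. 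Plugging these rates into the two failure probabilities and demanding both vanish, while keeping $n_q\ge M_2\,d\,n_p\,g(m)$ — the factor $g(m)$ playing the analogous role to Theorem~\ref{them.the.main.theorem} of converting $(\log\tfrac{m^2+m}{2})^2$ into $\log(m^2+m)$ — yields the stated sample complexity $n_p\ge L_1 d^{8/3}(\log\tfrac{m^2+m}{2})^{4/3}$, at which point the interval for $\lambda_{n_p}$ is nonempty and the primal-dual witness certifies $\hat S=S$ and $\hat S^c=S^c$.

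The main obstacle is precisely this last balancing act. Unlike in Theorem~\ref{them.the.main.theorem}, the inflated $Q$-variance couples the required accuracy of the normalization term, the magnitude of $\lambda_{n_p}$, and the signal-strength condition in a way that can no longer be satisfied with $\lambda_{n_p}\sim\sqrt{\log/n_p}$ and $n_q\sim n_p^2$ at the same time. The delicate point is to verify that slowing $\lambda_{n_p}$ to order $n_p^{-3/8}$ (up to logs) and capping it at $n_p^{-1/8}$ keeps the gradient-domination, dual-feasibility, and signal-detection inequalities mutually compatible, while allowing the desired relaxation of the $Q$-sample budget from quadratic to linear in $n_p$. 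Checking that the exponents in the failure probability remain strictly positive under $n_p\ge L_1 d^{8/3}(\log\tfrac{m^2+m}{2})^{4/3}$ is exactly where the unusual powers $\tfrac34$, $\tfrac18$, and $\tfrac83$ get pinned down, and I expect the careful tracking of the $d$-factor through the normalization concentration to be the most error-prone part of the argument.
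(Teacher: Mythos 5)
Your high-level plan — rerun the primal-dual witness proof of Theorem~\ref{them.the.main.theorem}, substitute the $d$-inflated Propositions~\ref{eq.assum.well.behave} and \ref{prop.bounded.variance.with.d} for their $d$-free analogues, and re-balance $\lambda_{n_p}$, $n_p$, $n_q$ — matches the paper's skeleton, but it is missing the one idea that actually makes the claimed linear coupling $n_q \ge M_2\, d\, n_p\, g(m)$ attainable. The paper does not merely inflate the variance proxy by $d$: it \emph{relaxes the conditioning event} used in the proof of Lemma~\ref{lem.2.maintext}, replacing the threshold $v^2$ on the log-normalizer deviation by $n_p^{1/4}v^2$, i.e.\ it conditions on
\begin{align*}
\xi'_{\boldu_t} :=\left\{ A(\boldtheta^*+ v\boldu_t) - A(\boldtheta^*) -\left( \hat{A}(\boldtheta^* +  v\boldu_t)-\hat{A}(\boldtheta^*)\right) \leq n_p^{\frac{1}{4}}v^2\right\},
\end{align*}
whose complement, under Assumption~\ref{assum.smooth.ratiomodel}, has probability at most $\exp\bigl(-n_p^{1/2}v^4 n_q/(200d)\bigr)$. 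The extra factor $n_p^{1/2}$ in this exponent is precisely what buys the reduction from quadratic to linear. Run your own bookkeeping without it: with the $d$-inflated tail bound the second failure term is $4\exp\bigl(-c'' n_q v^4/d + b\log(m^2+m)\bigr)$, and with your slowed $\lambda_{n_p}\asymp\sqrt{\log\frac{m^2+m}{2}/n_p^{3/4}}$ (so $v^4\asymp(\log\frac{m^2+m}{2})^2/n_p^{3/2}$) this forces $n_q=\Omega\bigl(d\,n_p^{3/2}\,g(m)\bigr)$ — strictly stronger than the corollary's $n_q\ge M_2\,d\,n_p\,g(m)$. So the balancing act as you describe it cannot close; inflating the variance by $d$ and slowing $\lambda_{n_p}$ are not sufficient.

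The relaxed event also generates the two exponents that your sketch attributes to the wrong sources. The $n_p^{-1/8}$ cap on $\lambda_{n_p}$ is not forced by the signal condition $\min_{t\in S}\|\boldtheta^*_t\|\ge\frac{10}{\lambda_\mathrm{min}}\sqrt d\,\lambda_{n_p}$ nor by the Taylor-remainder control of Assumption~\ref{assum.smooth}; it is the validity condition $n_p^{1/4}v^2\le 1$ of the tail bound for $\xi'^{c}_{\boldu_t}$ (with $v=\beta/(\lambda_\mathrm{max}b+2)$ and $\beta\asymp\lambda_{n_p}$). Likewise, your claim that the $P$-side failure term is ``inherited unchanged'' because the $P$-sample average ``does not involve the density ratio'' contradicts the structure of the paper's argument: the bound on $\max_t\|\boldw_t\|$ is not a plain concentration of the $P$-average (no boundedness or sub-Gaussianity of $\boldpsi$ is assumed) but a Chernoff bound \emph{conditioned} on $\xi'_{\boldu_t}$, and the relaxed threshold degrades its exponent from $n_p\beta^2$ to $n_p^{3/4}\beta^2$ — this is exactly where the $\sqrt{\log\frac{m^2+m}{2}/n_p^{3/4}}$ lower bound on $\lambda_{n_p}$ comes from, whereas in your account the $3/4$ power appears without a generating mechanism. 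Your deterministic steps and the final arithmetic $n_p\ge L_1 d^{8/3}(\log\frac{m^2+m}{2})^{4/3}$ (from $d\lambda_{n_p}\lesssim\mathrm{const}$ in Lemma~\ref{lem.4.maintext}) are fine; the genuine gap is the $n_p^{1/4}$-relaxed event and the explicit trade of Chernoff exponent against $Q$-sample budget that it implements.
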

See Appendix \ref{sec.proof.corol} for the proof. Corollary \ref{corol.relaxed} states that it is possible to 
drop the growth rate of $n_q$ on $n_p$ from 2 to 1 with the cost that $n_p$ has to grow with $d^{\frac{8}{3}}$ (rather than just $d^2$ in the previous case). This is an encouraging result, since with slight changes on growth rate with respect to $d$ and $\log(\frac{m^2+m}{2})$, we are able to consider a milder coupling between $n_p$ and $n_q$.

Moreover, under the weaker Assumption \ref{assum.smooth.ratiomodel}, $n_q$ now grows linearly with $d$. It shows the prices we need to pay when consider the magnitude of changes increasing with $d$.

So far, we have only considered the scaling quadruple $(n_p, n_q, d, m)$. However, it is also interesting to consider that the scalability of our theorem relative to $b$, the dimension of the pairwise feature vector. 
This is a realistic scenario: It may be difficult to know the true underlying model of MN in practice, and thus we may adopt a model that contains many features to be ``flexible enough'' to describe the interactions among data. In the following corollary, we restate Theorem \ref{them.the.main.theorem} with $b$ and a new scalar $s$, which is the maximum number of non-zero elements in a pairwise feature vector $\boldpsi$. We assume that the positions of non-zero elements are independent of each sample $\boldx$.  

\begin{corol}
	\label{corol.compex.model}
	Suppose that Assumptions \ref{assum.depen}, \ref{assum.incoherence}, \ref{assum.smooth}, \ref{assum.correct}, and \ref{assum.smooth.ratiomodel.nod} are satisfied, $\min_{t\in S} \|\boldtheta_t^*\|$ satisfies the condition in Theorem \ref{them.the.main.theorem}, and the regularization parameter is chosen so that 
	\begin{align*}
		\frac{8(2-\alpha)}{\alpha} \sqrt\frac{{M_1 s\log \frac{m^2+m}{2}}}{n_p} \le \lambda_{n_p} \le
		\frac{4 (2-\alpha)M_1}{\alpha} \min\left(\frac{\|\boldtheta^*\|}{\sqrt{b}}, 1\right),
	\end{align*}
	\[
	\]
	where $M_1=\lambda_{\text{max}}b+2$, $n_q \ge M_2s n_p^2 g'(m) $ and $M_2$ is some positive constant, and $g'(m) = \frac{\log\left((m^2+m){b \choose s}\right)}{(\log\frac{m^2+m}{2})^2}$. Then there exist some constant $L_1$ such that if $n_p\geq L_1 s d^2  \log \frac{m^2+m}{2}$, KLIEP has the same properties as those stated in Theorem \ref{them.the.main.theorem}. 
\end{corol}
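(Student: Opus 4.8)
The plan is to re-run the entire primal--dual witness argument behind Theorem~\ref{them.the.main.theorem} essentially verbatim, tracking precisely how the feature dimension $b$ enters the two probabilistic steps that set the scalings: the concentration of the loss gradient $\nabla\ell_{\mathrm{KLIEP}}(\boldtheta^*)$, which fixes the admissible range of $\lambda_{n_p}$, and the concentration of the empirical log-normalization $\log\hat N(\boldtheta)$ and its derivatives around their population counterparts (cf.\ Proposition~\ref{prop.norm.bound}), which fixes the required growth of $n_q$. Everything that is an \emph{assumption} rather than a derived bound---Assumptions~\ref{assum.depen}, \ref{assum.incoherence}, \ref{assum.smooth}, \ref{assum.correct}, and \ref{assum.smooth.ratiomodel.nod}---is imposed directly and carries over unchanged, so the $s$-dependence propagates \emph{only} through these two concentration inequalities; the deterministic parts of the witness construction are untouched.

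First I would revisit the gradient bound. For a group index $t=(u,v)$ the gradient block is a difference of two sample averages of $\boldpsi$, centered at zero by Assumption~\ref{assum.correct}, and I need a tail bound on $\max_{t\in E}\|\nabla_{\boldtheta_t}\ell_{\mathrm{KLIEP}}(\boldtheta^*)\|$. In Theorem~\ref{them.the.main.theorem} this block is treated as a generic $b$-dimensional sub-Gaussian vector, so its squared $\ell_2$-norm concentrates at scale $b/n_p$, producing the $\sqrt{M_1\log\frac{m^2+m}{2}/n_p}$ appearing in the lower bound on $\lambda_{n_p}$. Under the present hypothesis each realization of $\boldpsi$ has at most $s$ nonzero coordinates whose positions are independent of $\boldx$, so the effective variance proxy of the block scales with $s$ rather than $b$: summing $b$ coordinate variances, each active with frequency of order $s/b$, yields a total of order $s$. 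This replaces $b$ by $s$ under the square root, giving the stated lower bound $\tfrac{8(2-\alpha)}{\alpha}\sqrt{M_1 s\log\frac{m^2+m}{2}/n_p}$, while the upper bound on $\lambda_{n_p}$ (driven by Assumption~\ref{assum.smooth} and the minimum-signal condition) is unaffected.

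Next I would redo the uniform control of $\hat N$ and its derivatives (Proposition~\ref{prop.norm.bound}), which is what sets the $n_q$ requirement, and here two effects compound. First, the variance-proxy reduction above again replaces the effective dimension $b$ by $s$, and this is the source of the explicit factor $s$ in $n_q\ge M_2 s\,n_p^2 g'(m)$. Second, and unlike the single-block Euclidean-norm bound used for the gradient, this step requires a guarantee that is uniform over the possible locations of the $s$ nonzero entries within each block; covering these $\binom{b}{s}$ patterns and then union-bounding over the $(m^2+m)/2$ blocks turns the $\log(m^2+m)$ in the numerator of $g(m)$ into $\log\!\big((m^2+m)\binom{b}{s}\big)$, i.e.\ exactly $g'(m)$, while the denominator $(\log\frac{m^2+m}{2})^2$---inherited from the $\lambda_{n_p}^2$ scale that the normalization error must beat---is unchanged. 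Propagating the revised $\lambda_{n_p}$-range and $n_q$-bound through the sample-size inequality of Theorem~\ref{them.the.main.theorem} multiplies its right-hand side by $s$, yielding $n_p\ge L_1 s\,d^2\log\frac{m^2+m}{2}$, and the exponentially decaying tail \eqref{eq.them1.decay} is reproduced with new constants.

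The hard part will be making the $s$-versus-$b$ reduction rigorous so that the two effects appear together rather than collapsing back to $b$. Concretely, I must argue that, because the support pattern of $\boldpsi$ is independent of $\boldx$, conditioning on a fixed pattern reduces each block to a genuinely $s$-dimensional sub-Gaussian object controllable with variance proxy $s$, and only \emph{afterwards} pay the $\log\binom{b}{s}$ cover in the $\hat N$ step while avoiding it in the gradient step. Getting this order of conditioning and union bound correct---so that the proxy $s$ and the pattern count $\binom{b}{s}$ land in exactly the places dictated by the corollary---is the only genuinely new estimate; the remainder is bookkeeping inside the unchanged witness construction of Theorem~\ref{them.the.main.theorem}.
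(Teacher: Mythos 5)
Your overall architecture matches the paper's proof: only the gradient-bound lemma (Lemma~\ref{lem.2.maintext}) needs modification, the extra $\sqrt{s}$ enters the lower bound on $\lambda_{n_p}$ through an effective $s$-dimensionality of each block, and the ${b \choose s}$ pattern count is paid only in the union bound attached to the normalization-deviation events, hence only in the $n_q$-dependent tail term, producing $g'(m)$ --- this placement is exactly the paper's. However, there is a genuine gap in the probabilistic mechanism you propose for the gradient step. You treat each gradient block as a sub-Gaussian vector whose variance proxy you reduce from $b$ to $s$. No sub-Gaussianity of $\boldpsi$ (equivalently, of $\boldf(\boldx)$ under $P$ or $Q$) is assumed anywhere: Assumptions~\ref{assum.depen}--\ref{assum.smooth.ratiomodel.nod} constrain the density ratio and the log-normalizer, not the feature tails, and the paper stresses after Lemma~\ref{lem.2.maintext} that precisely for this reason a standard concentration inequality cannot be applied to $\boldw_t$. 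The actual engine is a Chernoff bound in which the moment generating function of $\langle v\boldu_t,\boldf(\boldx)-\nabla\hat{A}(\boldtheta^*)\rangle$ under $p$ is rewritten, via Assumption~\ref{assum.correct}, as $A(\boldtheta^*+v\boldu_t)-A(\boldtheta^*)-\langle v\boldu_t,\nabla\hat{A}(\boldtheta^*)\rangle$; this is controlled by conditioning on the events \eqref{eq.event.xi} (whose probability comes from Assumption~\ref{assum.smooth.ratiomodel.nod} through Proposition~\ref{prop.norm.bound}) and Taylor-expanding $\hat{A}$ with the Hessian bound $\lambda_{\mathrm{max}}$ of Assumption~\ref{assum.smooth}. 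In this route the $b$ of Theorem~\ref{them.the.main.theorem} enters \emph{deterministically} as $\|v\boldu_t\|^2=bv^2$ in \eqref{eq.emperical.upperbound}, and the corollary's $s$ arises simply because $\boldw_t$ has at most $s$ nonzero coordinates at fixed locations, so only $s$-sparse sign vectors $\boldu'_t$ with $\|\boldu'_t\|^2=s$ are needed --- no variance computation at all. Your ``each coordinate active with frequency of order $s/b$'' heuristic moreover contradicts the fixed-location hypothesis (the support is deterministic, not random), although your later remark about conditioning on a fixed pattern points in the right direction.

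A second, smaller misattribution: the explicit factor $s$ in $n_q \ge M_2 s\, n_p^2 g'(m)$ does not come from any sharpened concentration of $\hat{N}$ --- Proposition~\ref{prop.norm.bound} is used unchanged, since its rate depends only on the sub-Gaussianity of $r-1$ (Assumption~\ref{assum.smooth.ratiomodel.nod}), not on the features. It emerges from bookkeeping: the modified Taylor bound $\frac{\lambda_{\mathrm{max}}sv^2}{2}+sv^2$ changes the optimal Chernoff parameter $v$ (now coupled to $s$ through $\beta$ and $\lambda_{n_p}$), and the exponent of the $n_q$-term picks up the additive $s\log 2+\log{b \choose s}+\log\frac{m^2+m}{2}\le s\log\left[(m^2+m){b \choose s}\right]$ from the ${b \choose s}2^s$-fold union bound over the events $\xi_{\boldu'_t}$, which multiplies $P(\xi_{\boldu'_t}^c)$ only. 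If you repair the gradient step by adopting the log-normalizer MGF argument with $s$-sparse sign vectors, the remainder of your plan --- unchanged witness construction, pattern count charged only against the normalization events, and the resulting $n_p\ge L_1 s d^2\log\frac{m^2+m}{2}$ --- goes through as in the paper.
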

See Appendix \ref{sec.proof.corol} for the proof. From Corollary \ref{corol.compex.model}, we can see that required $n_p$ and $n_q$ for change detection grows only linearly with respect to $s$, and $n_q$ grows mildly with respect to $b \choose s$. Therefore, it is possible for one to consider a highly flexible model in practice. 


\subsection{Discussions}

From the above theorem, one may gather some interesting insights into change detection
based on density ratio estimation.

First, the required number of samples depends solely on $d$ and $m$ and is irrelevant to the number of edges of each MN. In contrast, separate graphical structural learning methods require more samples when each MN gets denser in terms of the number of edges or neighborhood \citep{MeinshausenBuhlmann,Ravikumar_2010,RaskuL1GMRF}. This establishes the superiority of the density-ratio approach in sparse change detection between dense MNs. In other words, in order to detect sparse changes, the density-ratio approach does not require the individual MN to be sparse. 

Second, the growth of $n_q$ is also lower-bounded and grows quadratically with respect to $n_p$. This result illustrates the consequence of introducing a sample approximated normalization term $\hat{N}(\boldtheta)$. An insufficient number of samples from $Q$ would lead to poor approximation of the normalization term, and makes change detection more difficult. 
Fortunately, such growth rate can be further relaxed, and with slightly increased sample complexity of $n_p$.

Finally, our theorem also points out the limits of the density-ratio approach. 
Our analysis shows that the density ratio model may not deviate too much from its mean $1$ over distribution $Q$. A previous study on another density ratio estimator also has a similar observation \citep{NC:Yamada+etal:2013}.
Since the density ratio indicates how much $P$ differs from $Q$, this analysis generally implies that to make KLIEP work, the discrepancy between $P$ and $Q$ should be mild. This is a reasonable assumption since we have already assumed that the changes in the MN structure are sparse. In high dimensional setting, it implies $P$ and $Q$ are similar. Similar assumption can be found in \cite{zhao2014direct} where the $\ell$-1 norm of the differences between two precision matrices is bounded.
\section{Proof of Support Consistency}
\label{sec.proof.them.1}

\subsection{The Proof Outline of the Main Theorem}
\label{sec.proof.maintext}
The procedure of the main proof partially follows the steps of previous support consistency proofs using the \emph{primal-dual witness} method \citep{WainwrightL1Sharp}, however, the problem settings are quite different:
First, $\ell_{\text{KLIEP}}$ is a \emph{likelihood ratio} between two densities which means that two sets of samples are involved in this proof and we have to consider the sparsity recovery conditions not only on one dataset, but with respect to two different MNs. Second, we did not explicitly limit the types of distribution for $P$ and $Q$, and the parameter of each factor $\boldtheta_t, t\in S\cup S^c$ is a vector rather than a scalar, which gives enough freedom for modelling highly complicated distributions. To the best of our knowledge, this is the first sparsity recovery analysis on learning changes between two type-free MNs. From now on, $\ell_\mathrm{KLIEP}$ is shortened as $\ell$.

First, define a dual variable $\hat{\boldz}$ associated with $\hat{\boldtheta}$ using the following equality:
	\begin{align}
	\label{eq.duality}
	\nabla \ell(\thetahat) + \lambda_{n_p} \hat{\boldz} = \boldzero.
	\end{align}
and if $\hat{\boldz}_t$ is the subgradient of $\|\hat{\boldtheta}_t\|$, i.e., $\hat{\boldz}_t \in \nabla_{\boldtheta_t} \|\hat{\boldtheta}_t\|, t\in S \cup S^c$, \eqref{eq.duality} is the optimality condition of \eqref{eq.obj.alter} and $\hat{\boldtheta}$ is an optimal solution to \eqref{eq.obj.alter}. Moreover, the next Lemma tells the relationship between dual variable $\hat{\boldz}$ and sparsity patterns of any other optimal solution of \eqref{eq.obj.alter}.
\begin{lem}
	\label{lem.1.maintext}
	If there exists an optimal $\hat{\boldtheta}$ of \eqref{eq.obj.alter} with associated $\hat{\boldz}$ in \eqref{eq.duality} such that $\|\hat{\boldz}_{t''}\| < 1$, for all $t''\in S^c$. Then any optimal $\tilde{\boldtheta}$ of \eqref{eq.obj.alter} should have $\tilde{\boldtheta}_{t''} = \boldzero$ for all $t'' \in S^c$.
\end{lem}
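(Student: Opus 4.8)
The plan is to exploit the convexity of the regularized objective $F(\boldtheta) := \ell(\boldtheta) + \lambda_{n_p} R(\boldtheta)$, where I abbreviate the group-lasso penalty as $R(\boldtheta) = \sum_{t\in S\cup S^c}\|\boldtheta_t\|$. Since $\ell$ is convex and $R$ is convex, $F$ is convex, so $\hat{\boldtheta}$ and any other minimizer $\tilde{\boldtheta}$ attain the same optimal value, $F(\hat{\boldtheta}) = F(\tilde{\boldtheta})$. The optimality condition \eqref{eq.duality} reads $\nabla\ell(\hat{\boldtheta}) = -\lambda_{n_p}\hat{\boldz}$ with $\hat{\boldz}\in\partial R(\hat{\boldtheta})$ (each $\hat{\boldz}_t\in\partial\|\hat{\boldtheta}_t\|$), which is exactly the assertion $\boldzero\in\partial F(\hat{\boldtheta})$ that makes $\hat{\boldtheta}$ a global minimizer.

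First I would record two one-sided inequalities at $\tilde{\boldtheta}$: by convexity of $\ell$, $\ell(\tilde{\boldtheta}) \ge \ell(\hat{\boldtheta}) + \langle\nabla\ell(\hat{\boldtheta}),\, \tilde{\boldtheta}-\hat{\boldtheta}\rangle$, and by $\hat{\boldz}\in\partial R(\hat{\boldtheta})$, $R(\tilde{\boldtheta})\ge R(\hat{\boldtheta}) + \langle\hat{\boldz},\, \tilde{\boldtheta}-\hat{\boldtheta}\rangle$. Substituting $\nabla\ell(\hat{\boldtheta})=-\lambda_{n_p}\hat{\boldz}$ into the first and multiplying the second by $\lambda_{n_p}$, the two linear terms cancel upon addition, yielding $F(\tilde{\boldtheta})\ge F(\hat{\boldtheta})$. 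Because we already know these values are equal, the two nonnegative gaps must each vanish; in particular the subgradient inequality for $R$ holds with equality, $R(\tilde{\boldtheta}) = R(\hat{\boldtheta}) + \langle\hat{\boldz},\, \tilde{\boldtheta}-\hat{\boldtheta}\rangle$.

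Next I would upgrade this equality to the statement $\hat{\boldz}\in\partial R(\tilde{\boldtheta})$. Starting from $R(\boldw)\ge R(\hat{\boldtheta})+\langle\hat{\boldz},\,\boldw-\hat{\boldtheta}\rangle$ for all $\boldw$ and substituting the tight identity $R(\hat{\boldtheta})=R(\tilde{\boldtheta})-\langle\hat{\boldz},\,\tilde{\boldtheta}-\hat{\boldtheta}\rangle$, I obtain $R(\boldw)\ge R(\tilde{\boldtheta})+\langle\hat{\boldz},\,\boldw-\tilde{\boldtheta}\rangle$ for every $\boldw$, i.e.\ $\hat{\boldz}$ is also a subgradient of $R$ at $\tilde{\boldtheta}$. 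I then invoke the separability of the group-lasso penalty, $\partial R(\tilde{\boldtheta})=\prod_t\partial\|\tilde{\boldtheta}_t\|$, so that $\hat{\boldz}_t\in\partial\|\tilde{\boldtheta}_t\|$ for every group $t$. Fixing $t''\in S^c$: if $\tilde{\boldtheta}_{t''}\ne\boldzero$, then $\partial\|\tilde{\boldtheta}_{t''}\|$ is the singleton $\{\tilde{\boldtheta}_{t''}/\|\tilde{\boldtheta}_{t''}\|\}$, whose norm equals $1$, forcing $\|\hat{\boldz}_{t''}\|=1$ and contradicting the hypothesis $\|\hat{\boldz}_{t''}\|<1$. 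Hence $\tilde{\boldtheta}_{t''}=\boldzero$ for all $t''\in S^c$, which is the claim.

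I expect the only delicate point to be the transfer of the subgradient from $\hat{\boldtheta}$ to $\tilde{\boldtheta}$ via the equality case, since it relies simultaneously on the equal optimal value and on the fact that the gradient and subgradient gaps are individually nonnegative so that both must be tight. The remaining ingredients—separability of $\partial R$ and the norm-one property of $\partial\|\cdot\|$ at nonzero points—are routine facts of convex analysis and require no estimation; in particular, no probabilistic argument enters this lemma, which is purely a deterministic consequence of strict dual feasibility.
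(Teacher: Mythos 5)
Your proof is correct and takes essentially the same approach as the paper: both arguments use the equal optimal values of the two minimizers, cancel the linear terms via $\nabla\ell(\hat{\boldtheta})=-\lambda_{n_p}\hat{\boldz}$, force the subgradient inequality for the penalty to be tight, and conclude from the fact that a unit-norm subgradient is required at any nonzero group. The only cosmetic difference is the last step: the paper deduces $\|\tilde{\boldtheta}_{t''}\|=\langle\tilde{\boldtheta}_{t''},\hat{\boldz}_{t''}\rangle$ and applies Cauchy--Schwarz with $\|\hat{\boldz}_{t''}\|<1$, whereas you formally transfer $\hat{\boldz}$ into $\partial\bigl(\sum_t\|\tilde{\boldtheta}_t\|\bigr)$ first --- the same equality-case argument in different notation.
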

See Appendix \ref{sec.proof.lemma.1} for the proof.

Now we illustrate the proof procedure of Theorem \ref{them.the.main.theorem}:
\begin{itemize}
	\item Solve the constrained optimization problem
	\begin{align}
	\label{eq.constrained.obj}
	\thetahat_{S} = \argmin_{{\boldtheta}_{S}} \ell\left(
	\begin{bmatrix}
	\boldtheta_S\\
	\boldzero
	\end{bmatrix}
	\right) + \lambda_{n_p} \sum_{t'\in S}\| \boldtheta_{t'} \|;
	\end{align}
	\item For all $t'\in S$, set $\hat{\boldz}_{t'}= \nabla \|\hat{\boldtheta}_{t'}\|$, and let $\thetahat = [\hat{\boldtheta}_{S}, \boldzero]$;
	\item Obtain $\hat{\boldz}_{t''}$ for all $t'' \in S^c$ using equality \eqref{eq.duality};
	\item Show $\max_{t''\in S^c}\|\boldz_{t''}\| < 1$ with high probability under certain conditions. According to Lemma \ref{lem.1.maintext}, we conclude that for any optimal $\tilde{\boldtheta}$ from \eqref{eq.obj.alter}, the correct sparsity pattern is recovered.
\end{itemize}

Bounding $\max_{t'' \in S^c} \|\boldz_{t''}\|$ requires obtaining $\boldz_{t''}$ from \eqref{eq.duality}. More specifically, from \eqref{eq.duality} we have:
\begin{align*}
\nabla \ell(\hat{\boldtheta}) + \lambda_{n_p}\hat{\boldz} = \boldzero 
\Rightarrow \nabla \ell(\hat{\boldtheta}) + \lambda_{n_p}\hat{\boldz} -\nabla \ell(\boldtheta^*)= -\nabla \ell(\boldtheta^*).
\end{align*}
Applying Mean-value Theorem, 
\begin{align}
\label{eq.qrw}
\underbrace{\nabla^2 \ell(\boldtheta^*)}_{\mathcal{I}}[\thetahat - \boldtheta^*]^\top + \lambda_{n_p}\hat{\boldz}& = \underbrace{-\nabla \ell(\boldtheta^*)}_{\boldw} + \underbrace{[\nabla^2 \ell(\boldtheta^*) - \nabla^2 \ell(\bar{\boldtheta})][\thetahat - \boldtheta^*]^\top}_{\boldg},
\end{align}
where $\bar{\boldtheta}$ is between $\boldtheta^*$ and $\hat{\boldtheta}$ in a coordinate fashion.
We can then rewrite \eqref{eq.qrw} in block-wise fashion:
\begin{align}
\label{eq.qwrsc}
\mathcal{I}_{S,S}[\thetahat_S - \boldtheta_S^*] + \lambda_{n_p}\zhat_{S}& = \boldw_{S} + \boldg_{S} \notag\\
\mathcal{I}_{t'', S}[\thetahat_S - \boldtheta_S^*] + \lambda_{n_p}\zhat_{t''}& = \boldw_{t''} + \boldg_{t''}, ~~ t'' \in S^c.
\end{align}
Substitute $\thetahat_S - \boldtheta^*_S = \mathcal{I}_{SS}^{-1}[\boldw_{S} + \boldg_{S} - \lambda_{n_p}\zhat_{S}]$ into \eqref{eq.qwrsc}, we have
\begin{align*}
\mathcal{I}_{t''S}\mathcal{I}_{SS}^{-1}[\boldw_{S} + \boldg_{S} - \lambda_{n_p}\zhat_{S}] + \lambda_{n_p}\zhat_{t''}& = \boldw_{t''} + \boldg_{t''}.
\end{align*}
Rearrange terms, we have
\begin{align*}
\lambda_{n_p}\zhat_{t''}& = \boldw_{t''} + \boldg_{t''} - \mathcal{I}_{t''S}\mathcal{I}_{SS}^{-1}[\boldw_{S} + \boldg_{S} - \lambda_{n_p}\zhat_{S}].
\end{align*}
According to triangle inequality,
\begin{align*}
\lambda_{n_p}\max_{t''\in S^c}\|\zhat_{t''}\| \le &\max_{t'' \in S^c} \|\boldw_{t''}\| + \max_{t'' \in S^c} \|\boldg_{t''}\| + \\
&\max_{t'' \in S^c} \|\mathcal{I}_{t''S}\mathcal{I}_{SS}^{-1}\|_1\left(\max_{t'\in S}\|\boldw_{t'}\|
 + \max_{t'\in S}\|\boldg_{t'}\| + \lambda_{n_p}\right).
\end{align*}
By assumption, $\max_{t''\in S^c} \|\mathcal{I}_{t''S}\mathcal{I}_{SS}^{-1}\|_1 \le (1- \alpha)$, and we obtain
\begin{align*}
\max_{t''\in S^c}\|\zhat_{t''}\| & \le \frac{(2-\alpha)}{\lambda_{n_p}}\left(\max_{t\in S\cup S^c}\|\boldw_t\|+\max_{t\in S \cup S^c}\|\boldg_t\| \right) + (1-\alpha).
\end{align*}
Now we need to show the boundedness of $\boldw$ and $\boldg$.

The boundedness of $\boldw$, which is the gradient of log-likelihood function on $\boldtheta^*$, is guaranteed by the following lemma:
\begin{lem}
	\label{lem.2.maintext}
	There exist constants $c = \lambda_{\text{max}}b+2,c'$ and $c''$ and if the regularization parameter $\lambda_{n_p}$ satisfies
	\[
	\frac{8 (2-\alpha) }{\alpha} \sqrt\frac{{c\log \frac{m^2+m}{2}}}{n_p} \le \lambda_{n_p} \le \frac{4 (2-\alpha)c}{\alpha} \min\left(\frac{\|\boldtheta^*\|}{\sqrt{b}}, 1\right),
	\]
	then 
	\begin{align}
	\label{eq.lemma2.prob}
	& P\left( \max_{t\in S^c \cup S} \|\boldw_t\| \geq 
	\frac{\alpha}{4(2-\alpha)}\lambda_{n_p}  \right) \leq \notag\\
	&\exp\left( -c'n_p \right) + 4\exp\left(- c''n_q \left(\frac{ \log \frac{m^2+m}{2}}{n_p}\right)^2 +b \log (m^2+m)\right) .
	\end{align}
\end{lem}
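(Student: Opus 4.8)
The plan is to read $\boldw_t=-\nabla_{\boldtheta_t}\ell(\boldtheta^*)$ directly off the gradient formula \eqref{gradient.def}, recognize it as an almost-centered average, and bound its fluctuations by two concentration arguments, one over the $P$-sample and one over the $Q$-sample. Evaluating \eqref{gradient.def} at $\boldtheta^*$ and recalling $\boldw=-\nabla\ell(\boldtheta^*)$ gives
\begin{align*}
\boldw_t=\frac{1}{n_p}\sum_{i=1}^{n_p}\boldf_t(\boldx_p^{(i)})-\frac{1}{n_q}\sum_{i=1}^{n_q}\hat{r}(\boldx_q^{(i)};\boldtheta^*)\,\boldf_t(\boldx_q^{(i)}).
\end{align*}
By the correct-model Assumption \ref{assum.correct}, $p=r(\cdot;\boldtheta^*)q$, hence $\mathbb{E}_p[\boldf_t]=\mathbb{E}_q[r(\cdot;\boldtheta^*)\boldf_t]$, so $\boldw_t$ would be exactly centered if the \emph{empirical} ratio $\hat{r}$ (which carries $\hat{N}(\boldtheta^*)$) were replaced by the true ratio $r$ (which carries $N(\boldtheta^*)$). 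The whole task is therefore to quantify (i) the sampling fluctuation around this common mean and (ii) the bias created by the empirical normalizer.

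Concretely, I would split $\boldw_t=A_t-B_t$, where $A_t=\frac{1}{n_p}\sum_i\boldf_t(\boldx_p^{(i)})-\mathbb{E}_p[\boldf_t]$ is the $P$-fluctuation and $B_t=\frac{1}{n_q}\sum_i\hat{r}(\boldx_q^{(i)};\boldtheta^*)\boldf_t(\boldx_q^{(i)})-\mathbb{E}_q[r(\cdot;\boldtheta^*)\boldf_t]$ is the $Q$-fluctuation. Writing $\hat{r}(\cdot;\boldtheta^*)=r(\cdot;\boldtheta^*)\,N(\boldtheta^*)/\hat{N}(\boldtheta^*)$ and setting $\gamma=N(\boldtheta^*)/\hat{N}(\boldtheta^*)$, I would further split $B_t=\gamma(\hat\mu_t-\mu_t)+(\gamma-1)\mu_t$, with $\hat\mu_t=\frac{1}{n_q}\sum_i r(\boldx_q^{(i)};\boldtheta^*)\boldf_t(\boldx_q^{(i)})$ and $\mu_t=\mathbb{E}_q[r(\cdot;\boldtheta^*)\boldf_t]$. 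This isolates three quantities: the $P$-average deviation $A_t$; the importance-weighted $Q$-average deviation $\hat\mu_t-\mu_t$; and the multiplicative normalizer error $\gamma-1$. The variance proxies for the first two are supplied by Assumption \ref{assum.smooth}: since $\mathcal{I}=\nabla^2\log\hat{N}(\boldtheta^*)$ encodes exactly the importance-weighted second moments of $\boldf$, the bound $\lambda_\mathrm{max}$ controls both the per-coordinate variance of $\hat\mu_t$ and the magnitude of $\mu_t$, which is why the constant $c=\lambda_\mathrm{max}b+2$ carries the factor $b$ (the dimension of each $\boldf_t$). The normalizer error $\gamma-1$, equivalently the deviation of $\hat{N}(\boldtheta^*)/N(\boldtheta^*)=\frac{1}{n_q}\sum_i r(\boldx_q^{(i)};\boldtheta^*)$ from its mean $1$, is controlled through the sub-Gaussian tail of $r-1$ under $Q$ (Assumption \ref{assum.smooth.ratiomodel.nod} and Proposition \ref{eq.assum.well.behave.nod}, bounded variance via Proposition \ref{prop.bounded.variance}), packaged as the log-normalizer bound of Proposition \ref{prop.norm.bound} in the Appendix.

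To assemble, set the target threshold $\tau=\frac{\alpha}{4(2-\alpha)}\lambda_{n_p}$ and allocate it across the pieces via $\|\boldw_t\|\le\|A_t\|+|\gamma|\,\|\hat\mu_t-\mu_t\|+|\gamma-1|\,\|\mu_t\|$. The lower bound on $\lambda_{n_p}$ forces $\tau\gtrsim\sqrt{c\log\frac{m^2+m}{2}/n_p}$, calibrating the $P$-side concentration of the (bounded-variance) average $A_t$ so that, after a union bound over the $\tfrac{m^2+m}{2}$ groups $t\in S\cup S^c$, it yields the first term $\exp(-c'n_p)$; the upper bound on $\lambda_{n_p}$ (through $\|\boldtheta^*\|/\sqrt b$) keeps the whole analysis inside the radius $\|\bolddelta\|\le\|\boldtheta^*\|$ on which Assumptions \ref{assum.smooth} and \ref{assum.smooth.ratiomodel.nod} are posed. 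The $Q$-side, dominated by the normalizer error entering multiplicatively through $\gamma$, produces the second term $4\exp\!\big(-c''n_q(\log\tfrac{m^2+m}{2}/n_p)^2+b\log(m^2+m)\big)$; crucially, the relevant deviation here scales like $\tau^2$ rather than $\tau$, so the exponent carries $n_q\,(\log/n_p)^2$ (equivalently $n_q\lambda_{n_p}^4$), and it is exactly this higher power that forces the quadratic coupling $n_q\gtrsim n_p^2$ downstream. The combinatorial factor $b\log(m^2+m)$ reflects a covering/union over the $b$-dimensional directions within each group and over the $\tfrac{m^2+m}{2}$ pairs, and remains additive in the exponent.

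The hard part will be the $Q$-side. Unlike $A_t$, the term $B_t$ is not a sum of independent bounded increments: the empirical normalizer $\hat{N}(\boldtheta^*)$ appears in every summand of the importance-weighted average and makes all $n_q$ samples dependent, and the products $r\,\boldf_t$ need not be bounded. Controlling them therefore requires the full sub-Gaussian strength of Assumption \ref{assum.smooth.ratiomodel.nod} rather than mere boundedness, and the most delicate bookkeeping is to show that the multiplicative normalizer error degrades the $n_q$-rate precisely to the $\lambda_{n_p}^4$ scale while the covering factor $b\log(m^2+m)$ stays additive — this is what ultimately determines the $n_q$-versus-$n_p$ sample-complexity trade-off recorded in Theorem \ref{them.the.main.theorem}.
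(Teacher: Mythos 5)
Your decomposition $\boldw_t=A_t-B_t$, with $B_t$ further split through $\gamma=N(\boldtheta^*)/\hat{N}(\boldtheta^*)$, founders at its very first step: under the hypotheses of Lemma \ref{lem.2.maintext} there is no exponential concentration available for either $A_t$ or $\hat\mu_t-\mu_t$. To produce the term $\exp(-c'n_p)$ you need the $P$-average of $\boldf_t$ to concentrate at sub-Gaussian rate, but the lemma assumes neither boundedness nor sub-Gaussianity of $\boldf_t(\boldx_p)$ --- the paper stresses exactly this point when discussing the lemma (``the sub-Gaussianity or the boundedness of the $\boldpsi$ of $P$ or $Q$ were not assumed, so the concentration inequality cannot be applied here''). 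Assumption \ref{assum.smooth} bounds the Hessian of $\log\hat{N}$, a $Q$-sample quantity; at best this is second-moment control, and bounded variance yields only Chebyshev-type polynomial tails, never $\exp(-c'n_p)$. The same defect kills your $Q$-side fluctuation term: Assumption \ref{assum.smooth.ratiomodel.nod} makes $r(\cdot;\boldtheta^*)-1$ sub-Gaussian under $Q$, but says nothing about the products $r\,\boldf_t$, which are unbounded and whose tails are not inherited from those of $r$ alone, so ``the full sub-Gaussian strength of Assumption \ref{assum.smooth.ratiomodel.nod}'' cannot control $\hat\mu_t-\mu_t$. Your plan is, in substance, the paper's proof of Lemma \ref{lemma5.maintext}: the split there into $a_{n_p}+b_{n_q}+c_{n_q}$ is precisely your $P$-fluctuation, normalizer-error, and importance-weighted pieces --- but that argument is only run under the stronger bounded-ratio Assumption \ref{assum.bounded.ratio}, which supplies $\|\boldf_t\|\le C_{\boldf_t,\mathrm{max}}$ and makes the vector Hoeffding inequality applicable. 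It does not transfer to Lemma \ref{lem.2.maintext}.

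The missing idea in the unbounded regime is to avoid concentrating the feature averages altogether. The paper computes the moment generating function of the gradient summands \emph{exactly} through the partition function: since $p=r(\cdot;\boldtheta^*)q$, for a zero-padded sign vector $\boldu_t$ one has $\mathbb{E}_p\left[\exp\langle v\boldu_t,\boldf\rangle\right]=N(\boldtheta^*+v\boldu_t)/N(\boldtheta^*)$, so the log-MGF of $\langle v\boldu_t,\boldf-\nabla\hat{A}(\boldtheta^*)\rangle$ equals $A(\boldtheta^*+v\boldu_t)-A(\boldtheta^*)-\langle v\boldu_t,\nabla\hat{A}(\boldtheta^*)\rangle$, with $A=\log N$ and $\hat{A}=\log\hat{N}$. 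On an event $\xi_{\boldu_t}$ of probability at least $1-2\exp(-v^4n_q/200)$ (Proposition \ref{prop.norm.bound}, which is where the sub-Gaussianity of $r-1$ actually enters), the population differences of $A$ are replaced by empirical differences of $\hat{A}$ up to slack $v^2$, and these are Taylor-bounded by $\tfrac{\lambda_\mathrm{max}bv^2}{2}$ via Assumption \ref{assum.smooth} since $\|\boldu_t\|^2=b$. A Chernoff bound conditioned on $\xi_{\boldu_t}$ (the $n_p$ summands are i.i.d.\ given the $Q$-sample), optimized at $v=\beta/(\lambda_\mathrm{max}b+2)$, yields both the constant $c=\lambda_\mathrm{max}b+2$ and --- because the event failure scales as $v^4\asymp\lambda_{n_p}^4$ --- the exponent $n_q\bigl(\log\tfrac{m^2+m}{2}/n_p\bigr)^2$ you correctly anticipated; the additive $b\log(m^2+m)$ comes from the union over $2^b$ sign vectors per block times $\tfrac{m^2+m}{2}$ blocks, combined with $\langle\mathrm{sign}(\boldw_t),\boldw_t\rangle\ge\|\boldw_t\|$, and the constraints $v\le\min(\|\boldtheta^*\|/\sqrt{b},1)$ give the upper bound on $\lambda_{n_p}$. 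So your instincts about where the $\lambda_{n_p}^4$ scaling and the radius restriction originate are right, but the device carrying the whole proof --- the exact MGF/log-partition identity that manufactures sub-Gaussianity on the $P$-side out of the smoothness of $\hat{A}$ --- is absent from your plan, and without it (or the bounded-ratio assumption) the two concentration steps you propose cannot be executed.
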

Quite different from similar lemmas in previous works (such as Lemma 2 in \citep{Ravikumar_2010}), Lemma \ref{lem.2.maintext} is not a simple concentration of sample mean converging to its population mean, since the gradient of the likelihood contains two sets of data from different distributions $P$ and $Q$. Moreover, the sub-Gaussianity or the boundedness of the $\boldpsi$ of $P$ or $Q$ were not assumed, so the concentration inequality cannot be applied here. Instead, via the smoothness behavior of the likelihood ratio function (Assumption \ref{assum.smooth}), we are able to derive such a boundedness of $\boldw$ without using any explicit properties of two distributions.

The consequence of such differences is important: this analysis allows us to consider a very wide range of distributions which may not be well-behaved (e.g. heavy-tailed), as long as the change between two distributions are minor. After all, all assumptions are imposed on the density ratio model $r$ only, rather than $P$ or $Q$. 
This analysis preserves the flexibility of the density ratio estimation methodology.
The proof of Lemma \ref{lem.2.maintext} can be found in Appendix \ref{sec.proof.lemma.2}.

The next lemma bounds the difference between the estimated parameter and the true parameter over the non-sparse indices, which is further used to bound $\boldg$ and derive the sample complexity.
\begin{lem}
	\label{lem.3.maintext}
	If 	$d\lambda_{n_p} \le \frac{\lambda_\mathrm{min}^2}{20\lambda_{3,\mathrm{max}}}$ and $\max_{t\in S^c \cup S} \|\boldw_t\| \le \frac{\lambda_{n_p}}{4}$ then $\|\boldtheta_S^* - \hat{\boldtheta}_S\| \le \frac{10}{\lambda_\mathrm{min}} \sqrt{d}\lambda_{n_p}$,
\end{lem}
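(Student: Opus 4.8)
The plan is to run the standard convexity argument of the primal--dual witness method, adapted to the group structure. Write $\boldu = \boldtheta_S - \boldtheta^*_S$ for the restricted error and define the centered objective
\[
G(\boldu) = \ell\!\left(\begin{bmatrix}\boldtheta^*_S + \boldu \\ \boldzero\end{bmatrix}\right) - \ell\!\left(\begin{bmatrix}\boldtheta^*_S \\ \boldzero\end{bmatrix}\right) + \lambda_{n_p}\sum_{t'\in S}\left(\|\boldtheta^*_{t'} + \boldu_{t'}\| - \|\boldtheta^*_{t'}\|\right).
\]
Since $\ell$ and the group norm are convex, $G$ is convex, $G(\boldzero)=0$, and its minimizer in $\boldu$ is exactly $\hat{\boldu}=\thetahat_S-\boldtheta^*_S$ from \eqref{eq.constrained.obj}. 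The guiding principle is: if $G(\boldu)>0$ everywhere on the sphere $\|\boldu\|=B$ with $B=\frac{10}{\lambda_\mathrm{min}}\sqrt{d}\lambda_{n_p}$, then the minimizer must lie inside the corresponding ball. Indeed, were $\|\hat{\boldu}\|>B$, the segment from $\boldzero$ to $\hat{\boldu}$ would meet the sphere at some $\boldu'=t\hat{\boldu}$, $t\in(0,1)$, and convexity would give $G(\boldu')\le tG(\hat{\boldu})+(1-t)G(\boldzero)\le 0$, contradicting positivity on the sphere. Hence $\|\thetahat_S-\boldtheta^*_S\|\le B$, which is the claim.

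It therefore suffices to lower-bound $G$ on $\|\boldu\|=B$. First I would Taylor-expand the smooth part with an exact second-order remainder, $\ell(\boldtheta^*_S+\boldu)-\ell(\boldtheta^*_S)=\nabla_S\ell(\boldtheta^*)^\top\boldu+\tfrac{1}{2}\boldu^\top\nabla^2_{SS}\ell(\boldtheta^*+a\boldu)\boldu$ for some $a\in[0,1]$, and split the Hessian as $\mathcal{I}_{SS}$ plus the correction $\nabla^2_{SS}\ell(\boldtheta^*+a\boldu)-\mathcal{I}_{SS}$. The four pieces are then bounded separately: the quadratic form obeys $\tfrac{1}{2}\boldu^\top\mathcal{I}_{SS}\boldu\ge\tfrac{1}{2}\lambda_\mathrm{min}\|\boldu\|^2$ by Assumption \ref{assum.depen}; the linear term satisfies $|\boldw_S^\top\boldu|\le\max_t\|\boldw_t\|\sum_{t'\in S}\|\boldu_{t'}\|\le\tfrac{\lambda_{n_p}}{4}\sqrt{d}\|\boldu\|$ using the hypothesis on $\boldw$ and Cauchy--Schwarz over the $d$ active groups; and the penalty term is at least $-\lambda_{n_p}\sum_{t'\in S}\|\boldu_{t'}\|\ge-\lambda_{n_p}\sqrt{d}\|\boldu\|$ by the reverse triangle inequality.

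The remaining Hessian-correction term is the crux. Writing the Hessian difference via the fundamental theorem of calculus as the integral of the directional derivative of $\nabla^2_{SS}\ell$ along $a\boldu$, and contracting each group mode against the tensor spectral-norm bound of Assumption \ref{assum.smooth}, gives $\|\nabla^2_{SS}\ell(\boldtheta^*+a\boldu)-\mathcal{I}_{SS}\|\le\lambda_{3,\mathrm{max}}\sum_{t'\in S}\|\boldu_{t'}\|\le\lambda_{3,\mathrm{max}}\sqrt{d}\|\boldu\|$, so this term contributes at least $-\tfrac{1}{2}\lambda_{3,\mathrm{max}}\sqrt{d}\|\boldu\|^3$. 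Collecting all four bounds and evaluating at $\|\boldu\|=B$ yields
\[
G(\boldu)\ge\frac{d\lambda_{n_p}^2}{\lambda_\mathrm{min}}\left(37.5-\frac{500\,\lambda_{3,\mathrm{max}}\,d\lambda_{n_p}}{\lambda_\mathrm{min}^2}\right),
\]
and the hypothesis $d\lambda_{n_p}\le\lambda_\mathrm{min}^2/(20\lambda_{3,\mathrm{max}})$ forces the bracketed factor to be at least $37.5-25=12.5>0$, establishing strict positivity on the sphere.

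I expect the main obstacle to be the Hessian-correction term, for two reasons. First, extracting the clean factor $\lambda_{3,\mathrm{max}}\sqrt{d}$ from the tensor contraction requires care with the group and spectral-norm conventions of Assumption \ref{assum.smooth}. Second, that assumption is only valid for perturbations with $\|\bolddelta\|\le\|\boldtheta^*\|$, so I must check that every intermediate point $\boldtheta^*+sa\boldu$ with $\|\boldu\|=B$ stays in this region, i.e.\ that $B\le\|\boldtheta^*\|$; this is the one place where the admissible upper range of $\lambda_{n_p}$ in \eqref{eq.lambda.condition}, together with $d\lambda_{n_p}\le\lambda_\mathrm{min}^2/(20\lambda_{3,\mathrm{max}})$, is genuinely needed.
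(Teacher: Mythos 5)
Your proposal is correct and follows essentially the same route as the paper's proof: the same centered convex function $G$ on the sphere of radius $B=\frac{10}{\lambda_\mathrm{min}}\sqrt{d}\lambda_{n_p}$, the same Taylor expansion with the gradient, penalty, and Hessian pieces bounded group-wise via Cauchy--Schwarz, and the same use of Assumptions \ref{assum.depen} and \ref{assum.smooth} to control the third-order correction by $\lambda_{3,\mathrm{max}}\sqrt{d}\|\boldu\|$. The only (immaterial) difference is bookkeeping: the paper first forces $\Lambda_{\mathrm{min}}$ of the intermediate Hessian above $\lambda_\mathrm{min}/2$ via Weyl's inequality under the pre-condition $\sqrt{d}\lambda_{3,\mathrm{max}}\|\bolddelta_S\|\le\lambda_\mathrm{min}/2$, whereas you carry the cubic remainder explicitly and absorb it at the end using $d\lambda_{n_p}\le\lambda_\mathrm{min}^2/(20\lambda_{3,\mathrm{max}})$, arriving at the same strict positivity (both arguments leave the same margin $12.5\,d\lambda_{n_p}^2/\lambda_\mathrm{min}$), and your closing check that $B\le\|\boldtheta^*\|$ matches the condition the paper's proof likewise invokes.
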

The boundedness of $\boldg$ is finally given by 
\begin{lem}
	\label{lem.4.maintext}
	If $	\lambda_{n_p} d \le \frac{\lambda_\mathrm{min}^2}{100\lambda_{3,\mathrm{max}}}\frac{\alpha}{4(2-\alpha)},$ and $\max_{t \in S \cup S^c} \|\boldw_t\|\le \frac{\lambda_{n_p}}{4}$, then $\max_{t \in S \cup S^c}\|\boldg_t\| \leq \frac{\alpha\lambda_{n_p} }{4(2-\alpha)}$.
\end{lem}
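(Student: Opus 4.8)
The quantity to control is the block $\boldg_t$ of the remainder term introduced in the mean-value expansion \eqref{eq.qrw}, namely $\boldg = [\nabla^2\ell(\boldtheta^*) - \nabla^2\ell(\bar{\boldtheta})][\thetahat - \boldtheta^*]^\top$, with $\bar{\boldtheta}$ lying on the segment joining $\boldtheta^*$ and $\thetahat$. The plan is to express this remainder through the third-order derivative tensor of $\ell$ and then bound its size using the smoothness constant $\lambda_{3,\mathrm{max}}$ of Assumption \ref{assum.smooth} together with the parameter-error bound supplied by Lemma \ref{lem.3.maintext}.

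First I would set $\boldu = \thetahat - \boldtheta^*$ and observe that, by the primal-dual witness construction, both $\thetahat$ and $\boldtheta^*$ vanish on $S^c$; hence $\boldu$ is supported on $S$ and $\|\boldu\| = \|\thetahat_S - \boldtheta^*_S\|$. The hypothesis $\max_{t}\|\boldw_t\| \le \lambda_{n_p}/4$ combined with the stated condition $d\lambda_{n_p} \le \frac{\lambda_\mathrm{min}^2}{100\lambda_{3,\mathrm{max}}}\frac{\alpha}{4(2-\alpha)} \le \frac{\lambda_\mathrm{min}^2}{20\lambda_{3,\mathrm{max}}}$ (the last inequality holding because $\frac{\alpha}{4(2-\alpha)} \le \frac{1}{4}$ for $\alpha\in(0,1]$) verifies the hypotheses of Lemma \ref{lem.3.maintext}, yielding $\|\boldu\| \le \frac{10}{\lambda_\mathrm{min}}\sqrt{d}\lambda_{n_p}$. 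I would also record that $\|\boldu\| \le \|\boldtheta^*\|$, which follows from $\|\boldtheta^*\| \ge \min_{t\in S}\|\boldtheta^*_t\| \ge \frac{10}{\lambda_\mathrm{min}}\sqrt{d}\lambda_{n_p}$ under the signal-strength condition of Theorem \ref{them.the.main.theorem}; this guarantees that every intermediate point met below stays inside the ball $\{\bolddelta : \|\bolddelta\| \le \|\boldtheta^*\|\}$ on which Assumption \ref{assum.smooth} is valid.

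Next I would rewrite the Hessian difference as a tensor contraction. Writing $\boldv = \bar{\boldtheta} - \boldtheta^*$, so that $\|\boldv\| \le \|\boldu\|$, and applying the fundamental theorem of calculus to the block-$t$ row of $\nabla^2\ell$ along the segment $\boldtheta^* + s\boldv$, the block $\boldg_t$ becomes a double contraction of the third-derivative tensor $\nabla_{\boldtheta}\nabla^2\ell$ evaluated at intermediate points: one slot is contracted against the error $\boldu$, one against the displacement $\boldv$, and the remaining slot is the free block-$t$ index. Invoking the full symmetry of the third-derivative tensor, the relevant slice has spectral norm equal to $\vertiii{\nabla_{\boldtheta_t}\nabla^2\ell}$, which Assumption \ref{assum.smooth} bounds by $\lambda_{3,\mathrm{max}}$ uniformly over the ball. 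Hence $\|\boldg_t\| \le \lambda_{3,\mathrm{max}}\|\boldu\|\,\|\boldv\| \le \lambda_{3,\mathrm{max}}\|\boldu\|^2$.

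Finally I would substitute the error bound, obtaining $\max_{t}\|\boldg_t\| \le \lambda_{3,\mathrm{max}}\left(\frac{10}{\lambda_\mathrm{min}}\sqrt{d}\lambda_{n_p}\right)^2 = \frac{100\,\lambda_{3,\mathrm{max}}\,d\,\lambda_{n_p}^2}{\lambda_\mathrm{min}^2}$, and the hypothesis $d\lambda_{n_p} \le \frac{\lambda_\mathrm{min}^2}{100\lambda_{3,\mathrm{max}}}\frac{\alpha}{4(2-\alpha)}$ reduces the right-hand side exactly to $\frac{\alpha\lambda_{n_p}}{4(2-\alpha)}$, which is the claim. The main obstacle, and the only genuinely delicate point, is the tensor bookkeeping in the contraction step: one must justify that the block-$t$ row of the Hessian difference is controlled by the spectral norm of the slice $\nabla_{\boldtheta_t}\nabla^2\ell$ (through symmetry of third derivatives) and that the coordinate-dependent intermediate points arising from the mean-value expansion can all be absorbed into a single uniform spectral-norm bound over $\{\bolddelta:\|\bolddelta\|\le\|\boldtheta^*\|\}$; the remaining algebra is routine.
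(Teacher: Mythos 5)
Your proposal is correct and follows essentially the same route as the paper's proof: a mean-value expansion of the Hessian difference controlled by the uniform third-derivative bound $\vertiii{\nabla_{\boldtheta_t}\nabla^2\ell(\boldtheta^*+\bolddelta)}\le\lambda_{3,\mathrm{max}}$ from Assumption \ref{assum.smooth}, combined with the error bound $\|\thetahat_S-\boldtheta^*_S\|\le\frac{10}{\lambda_\mathrm{min}}\sqrt{d}\lambda_{n_p}$ from Lemma \ref{lem.3.maintext}, yielding $\|\boldg_t\|\le\lambda_{3,\mathrm{max}}\|\thetahat-\boldtheta^*\|^2\le\frac{100\lambda_{3,\mathrm{max}}d\lambda_{n_p}^2}{\lambda_\mathrm{min}^2}\le\frac{\alpha\lambda_{n_p}}{4(2-\alpha)}$. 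Your explicit verification that the stated condition implies Lemma \ref{lem.3.maintext}'s hypothesis (via $\frac{\alpha}{4(2-\alpha)}\le\frac{1}{4}$) and that the intermediate points stay in the ball $\|\bolddelta\|\le\|\boldtheta^*\|$ are details the paper handles only implicitly, so your write-up is if anything slightly more careful.
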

See Appendix \ref{sec.proof.lemma.3} and \ref{sec.proof.lemma.4} for proofs. Using Lemma \ref{lem.2.maintext}, \ref{lem.3.maintext} and \ref{lem.4.maintext},  we have
$
\max_{t''\in S^c}\|\zhat_{t''}\| \le 1-\frac{\alpha}{2} < 1.
$

To show the correct non-zero pattern recovery, it suffices to show \[\max_{t\in S^c \cup S} \|\hat{\boldtheta}_t-\boldtheta_t^*\| < \frac{1}{2}\min_{t\in S} \|\boldtheta_t^*\|.\] Since Lemma 3 shows 
$\max_{t\in S\cup S^c} \|\hat{\boldtheta}_t-\boldtheta_t^*\|\leq \|\hat{\boldtheta} - \boldtheta^*\| < \frac{10}{\lambda_\text{min}} \sqrt{d}\lambda_{n_p},
$
we just need $\min_{t\in S} \|\boldtheta_t^*\| > \frac{20}{\lambda_\text{min}} \sqrt{d}\lambda_{n_p}$ to ensure such recovery. 

\subsection{Sample Complexity}
\label{sec.sample.complexity}
The sample complexity for $n_p$ and $n_q$ are derived from the conditions of the Lemmas. 
To make Lemma \ref{lem.2.maintext} holds, we may set 
$
\lambda_{n_p} = C\sqrt\frac{{\log \frac{m^2+m}{2}}}{n_p}, 
$
where $C$ is chosen so that the lower bound of $\lambda_{n_p}$ in the Lemma \ref{lem.2.maintext} is satisfied. Since the upper-bound is a constant while such setting of $\lambda_{n_p}$ is always decaying as $n_p$ grows, it is automatically satisfied at some point. 

Moreover, $\lambda_{n_p}$ should also satisfy the upper-bound condition in Lemma \ref{lem.4.maintext}:
$
		\lambda_{n_p} d \le \frac{\lambda_\mathrm{min}^2}{100\lambda_{3,\mathrm{max}}}\frac{\alpha}{4(2-\alpha)}, 
$ 
and this inequality can be satisfied when $n_p = \Omega(d^2\log \frac{m^2+m}{2})$.

The upper-bound of $\lambda_{n_p}$ is in Lemma \ref{lem.4.maintext} is tighter than it is in Lemma \ref{lem.3.maintext}, so the condition of Lemma \ref{lem.3.maintext} is automatically satisfied. However, one still needs to make sure that the tail probability term that involves $n_q$ in Lemma \ref{lem.2.maintext} decays, i.e.,
\begin{align*}
	4\exp\left(- c''n_q \left(\frac{ \log \frac{m^2+m}{2}}{n_p}\right)^2 +b \log (m^2+m)\right) \rightarrow 0, n_p \rightarrow \infty.
\end{align*}
This can be guaranteed by setting $n_q = \Omega(n_p^2 g(m))$.

%

\section{Analysis of Assumptions}
\label{sec.analysis.assums}


In this section, we investigate the conditions under which the maximum (minimum) eigenvalues of likelihood ratio derivatives are bounded. We show that under mild regularity conditions of sample statistics of distribution $Q$ and tightened smoothness conditions of the density ratio model, Assumption \ref{assum.depen} and \ref{assum.smooth} holds automatically. 

\subsection{Bounded Density Ratio Model}
Since the derivatives of the log-likelihood ratio expresses the curvature of our objective function, we expect the smoothness of the density ratio model $r(\boldx;\boldtheta)$ may play an important role in such analysis.
To begin with, consider a simple bounded-ratio model by replacing the smoothness Assumption \ref{assum.smooth.ratiomodel.nod} (or \ref{assum.smooth.ratiomodel}) with a tightened Assumption \ref{assum.bounded.ratio}:
\begin{assum}[Smooth Density Ratio Model Assumption]
	\label{assum.bounded.ratio}
	For any vector $\bolddelta \in \mathbb{R}^{\text{dim}(\boldtheta^*)}$ such that $\|\bolddelta\|\leq \|\boldtheta^*\|$, the following inequality holds:
	\begin{align*}
	0 < C_\mathrm{min} \le r(\boldx, \boldtheta^* + \bolddelta) \le C_\mathrm{max} < \infty.
	\end{align*}
	As consequences, $\frac{1}{C_\mathrm{ratio}}\le \hat{r}(\boldx;\boldtheta^*+\bolddelta) \le C_\mathrm{ratio}$ and $\|\boldf_t(\boldx)\| \le C_{\boldf_t,\mathrm{max}}$, where  $C_{\boldf_t,\mathrm{max}}$ and $C_\mathrm{ratio}$ are all constants.
\end{assum}
%

Since Assumption \ref{assum.bounded.ratio} is stronger than Assumption \ref{assum.smooth.ratiomodel.nod} or \ref{assum.smooth.ratiomodel} (for appropriately chosen $C_\mathrm{max}$ and $C_\mathrm{min}$), the proof of Theorem \ref{them.the.main.theorem} still holds if one uses above assumptions to substitute Assumption \ref{assum.smooth.ratiomodel.nod} or \ref{assum.smooth.ratiomodel}. However, as we will demonstrate later, an improved sample complexity for $\min(n_p, n_q)$ can be derived. 

As Assumption \ref{assum.depen} and \ref{assum.smooth} are constructed using samples from distribution $Q$, it is also natural to assume some basic regularity conditions on sample statistics:

\begin{assum}[Bounded Moments]
	\label{assum.bounded.moments.q}
	The feature transform $\boldf(\boldx) \in \mathbb{R}^{b(m^2+m)/2}$, where $\boldx$ is drawn from $Q$, has upper-bounded moments with probability one: i.e.
	\begin{align*}
		\max_{t\in S \cup S^c}\hat{\mathbb{E}}_q \left[\|\boldf_t(\boldx)\|\right] \le D_\mathrm{max, 1} < \infty, \\
		\left\|\hat{\mathbb{E}}_q\left[\boldf(\boldx)\boldf(\boldx)^\top\right] \right\| \text{ and } ~ \left\|\widehat{\mathrm{Cov}} _q\left[\boldf(\boldx)\right]\right\| \le D_\mathrm{max,2} < \infty\\
	\end{align*}
	and 
	\begin{align}
		\label{eq.lambda2.lower}
		\Lambda_\mathrm{min}\left\{\widehat{\mathrm{Cov}}_q\left[\boldf_S(\boldx)\right]\right\} &\ge D_\mathrm{min,2} > 0,
	\end{align}
	is bounded with probability $1- \delta_{n_q}$.
\end{assum}
$\widehat{\mathrm{Cov}}$ is the sample covariance estimator and $\hat{\mathbb{E}}_q\left[ \boldg(\boldx) \right] = \frac{1}{n_q} \sum_{i=1}^{n_q} \boldg(\boldx^{(i)})$ is the empirical expectation over samples drawn from $Q$. From now on, we remove subscript $p$ or $q$ from a random sample $\boldx$ when summing up, as long as the indices give enough context for telling in which distribution the sample is drawn. For example, $\sum_{i=1}^{n_q} \boldx^{(i)}$ is a summation over samples drawn from distribution $Q$.

Of course, one may impose similar bounded moments constrains on the corresponding population quantities, then the above assumption automatically holds with high probability under certain regularity conditions. To avoid lengthy proofs, we stick to assumptions using sample quantities in this paper.

The following propositions show Assumptions \ref{assum.bounded.ratio} and \ref{assum.bounded.moments.q} guarantee the boundedness of the derivatives of the likelihood ratio function.

\begin{prop}[Bounded Hessian]
	\label{prop.bounded.2nd}
	For any vector $\bolddelta \in \mathbb{R}^{\text{dim}(\boldtheta^*)}$ such that $\|\bolddelta\|\leq \|\boldtheta^*\|$, if Assumptions \ref{assum.bounded.ratio} and \ref{assum.bounded.moments.q} hold then $\|\nabla_{\boldtheta}^2 \ell(\boldtheta^* + \bolddelta)\| \le 2C_{\mathrm{ratio}} D_{\mathrm{max,2}}$ with probability $1-\delta_{n_q}$.
\end{prop}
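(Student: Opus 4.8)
The plan is to identify the Hessian $\nabla^2\ell(\boldtheta^*+\bolddelta)$ with an importance-weighted empirical covariance of the linearized features $\boldf$, and then to control its spectral norm term by term using the two-sided ratio bound of Assumption \ref{assum.bounded.ratio} together with the moment bound of Assumption \ref{assum.bounded.moments.q}.

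First I would observe that the part of $\ell_{\mathrm{KLIEP}}$ that is linear in $\boldtheta$ (the sample average of $\boldf$ over the $P$-data) is annihilated by the second derivative, so that $\nabla^2\ell(\boldtheta)=\nabla^2\log\widehat{N}(\boldtheta)$. Differentiating the empirical log-sum-exp $\log\widehat{N}(\boldtheta)=\log\frac{1}{n_q}\sum_{j=1}^{n_q}\exp(\boldtheta^\top\boldf(\boldx_q^{(j)}))$ twice yields, with the normalized weights $w_j=\frac{1}{n_q}\hat{r}(\boldx_q^{(j)};\boldtheta)$ (which sum to one),
\[
\nabla^2\log\widehat{N}(\boldtheta)=\sum_{j=1}^{n_q} w_j\,\boldf(\boldx_q^{(j)})\boldf(\boldx_q^{(j)})^\top-\Big(\sum_{j=1}^{n_q} w_j\,\boldf(\boldx_q^{(j)})\Big)\Big(\sum_{j=1}^{n_q} w_j\,\boldf(\boldx_q^{(j)})\Big)^\top,
\]
i.e. the covariance of $\boldf$ under the tilted empirical measure with weights $w_j$. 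I then specialize to $\boldtheta=\boldtheta^*+\bolddelta$.

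Next I would apply the triangle inequality to bound $\|\nabla^2\log\widehat{N}(\boldtheta^*+\bolddelta)\|$ by the sum of the spectral norms of the weighted second-moment matrix and the weighted mean outer product. For the second moment, Assumption \ref{assum.bounded.ratio} gives $\hat{r}(\boldx_q^{(j)};\boldtheta^*+\bolddelta)\le C_{\mathrm{ratio}}$, and since each $\boldf(\boldx_q^{(j)})\boldf(\boldx_q^{(j)})^\top$ is positive semidefinite this yields the Loewner domination $\sum_j w_j\,\boldf\boldf^\top\preceq C_{\mathrm{ratio}}\,\hat{\mathbb{E}}_q[\boldf\boldf^\top]$; by monotonicity of the spectral norm on the PSD cone and Assumption \ref{assum.bounded.moments.q}, its spectral norm is at most $C_{\mathrm{ratio}}D_{\mathrm{max},2}$ with probability $1-\delta_{n_q}$. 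For the mean outer product, I would use the elementary fact that $\big(\sum_j w_j\boldf\big)\big(\sum_j w_j\boldf\big)^\top\preceq\sum_j w_j\,\boldf\boldf^\top$ (a mean outer product is always dominated by the corresponding second moment), so that its spectral norm is controlled by the same quantity $C_{\mathrm{ratio}}D_{\mathrm{max},2}$. Adding the two contributions gives the claimed bound $2C_{\mathrm{ratio}}D_{\mathrm{max},2}$.

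I expect no genuinely hard step once the Hessian has been recognized as a weighted covariance; the estimate is then routine. The only points that require care are justifying the Loewner-order manipulations and the spectral-norm monotonicity so the per-term bounds are legitimate, and tracking the probability, which enters solely through the moment bound of Assumption \ref{assum.bounded.moments.q} (holding with probability $1-\delta_{n_q}$) while the ratio bound of Assumption \ref{assum.bounded.ratio} holds deterministically. I note in passing that a slicker route, bounding the covariance directly by the second moment via $\mathrm{Cov}\preceq \hat{\mathbb{E}}_w[\boldf\boldf^\top]$, would even give the smaller constant $C_{\mathrm{ratio}}D_{\mathrm{max},2}$, but the two-term triangle-inequality argument above matches the stated factor of two.
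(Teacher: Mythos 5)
Your proof is correct, but it takes a genuinely different route from the paper. The paper factorizes the Hessian as $\nabla^2_{\boldtheta}\ell(\boldtheta) = F\,\nabla^2_{\boldg}\mathrm{LSE}(g_1,\dots,g_{n_q})\,F^\top$, recognizes the log-sum-exp Hessian as a weighted graph Laplacian of a fully connected graph, and invokes a known spectral bound on the largest eigenvalue of such Laplacians to get $\|\nabla^2_{\boldg}\mathrm{LSE}\| \le \max_i 2\hat{r}(\boldx^{(i)};\boldtheta)/n_q \le 2C_\mathrm{ratio}/n_q$, which it then multiplies by $\|FF^\top\| \le n_q D_\mathrm{max,2}$. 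You instead expand $\nabla^2\log\hat{N}$ directly as a covariance under the tilted weights $w_j = \hat{r}(\boldx_q^{(j)};\boldtheta)/n_q$ (exactly the representation the paper itself records later in Proposition \ref{prop.is.moments}) and control both terms by elementary Loewner-order arguments: $\sum_j w_j \boldf_j\boldf_j^\top \preceq C_\mathrm{ratio}\,\hat{\mathbb{E}}_q[\boldf\boldf^\top]$ from the deterministic ratio bound, plus domination of the mean outer product by the second moment. Your probability accounting also matches the paper's: randomness enters only through Assumption \ref{assum.bounded.moments.q}, while the bound $\hat{r}\le C_\mathrm{ratio}$ from Assumption \ref{assum.bounded.ratio} holds deterministically. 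What each approach buys: yours is self-contained (no external Laplacian eigenvalue result) and, as you observe, the one-step domination $\widehat{\mathrm{Cov}}_w[\boldf] \preceq \hat{\mathbb{E}}_w[\boldf\boldf^\top]$ actually sharpens the constant to $C_\mathrm{ratio}D_\mathrm{max,2}$, showing the paper's factor of $2$ is an artifact of its looser estimate; the paper's Laplacian decomposition, on the other hand, is reused wholesale in the proofs of Propositions \ref{prop.bounded.3rd} and \ref{prop.bounded.min.2nd} (the third-derivative bound and the minimum-eigenvalue lower bound), so it amortizes across the whole section in a way your argument would need to be adapted to match.
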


\begin{prop}[Bounded 3rd-order Derivative]
	\label{prop.bounded.3rd}
	For any vector $\bolddelta \in \mathbb{R}^{\text{dim}(\boldtheta^*)}$ such that $\|\bolddelta\|\leq \|\boldtheta^*\|$, if Assumption \ref{assum.bounded.ratio} and \ref{assum.bounded.moments.q} holds, then
	\[
    \max_{t\in S \cup S^c}\vertiii{\nabla_{\boldtheta_t}\nabla^2 \ell(\boldtheta^* + \bolddelta)} \le
	6C^2_{\mathrm{ratio}} D_\mathrm{max,1}D_\mathrm{max,2}
	\]
	with probability $1-\delta_{n_q}$.
\end{prop}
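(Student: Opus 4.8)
The plan is to exploit that only the log-normalization term of $\ell$ is nonlinear, so that $\nabla_{\boldtheta_t}\nabla^2\ell(\boldtheta^*+\bolddelta)=\nabla_{\boldtheta_t}\nabla^2\log\hat N(\boldtheta^*+\bolddelta)$, and then to identify this third-order object explicitly as a reweighted third central moment of the linearized features $\boldf$. Introducing the softmax-type weights $p_i:=\hat{r}(\boldx^{(i)};\boldtheta^*+\bolddelta)/n_q$, which are nonnegative and sum to one, and writing $\mathbb{E}_{\hat r}[\cdot]:=\sum_{i=1}^{n_q}p_i(\cdot)$ with $\boldm:=\mathbb{E}_{\hat r}[\boldf]$ and $\boldm_t:=\mathbb{E}_{\hat r}[\boldf_t]$, repeated differentiation of the log-sum-exp structure of $\log\hat N$ gives the Hessian as the reweighted covariance $\nabla^2\log\hat N=\mathrm{Cov}_{\hat r}[\boldf]=\mathbb{E}_{\hat r}[(\boldf-\boldm)(\boldf-\boldm)^\top]$ and the third derivative as the tensor whose $(\boldu,\boldv,\boldw)$-contraction is $\mathbb{E}_{\hat r}[\langle\boldu,\boldf_t-\boldm_t\rangle\,\langle\boldv,\boldf-\boldm\rangle\,\langle\boldw,\boldf-\boldm\rangle]$. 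I would record these cumulant identities first, since they reduce the entire problem to bounding a weighted third central moment.

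Next I would reduce the tensor spectral norm to a supremum over a single unit vector in block $t$: because the last two modes are symmetric, $\vertiii{\nabla_{\boldtheta_t}\nabla^2\log\hat N}=\sup_{\|\boldu\|=1}\big\|\mathbb{E}_{\hat r}[\langle\boldu,\boldf_t-\boldm_t\rangle\,(\boldf-\boldm)(\boldf-\boldm)^\top]\big\|$, i.e.\ the operator norm of the directional derivative of the covariance along $\boldu$. I would then split the block-$t$ factor as $\langle\boldu,\boldf_t-\boldm_t\rangle=\langle\boldu,\boldf_t\rangle-\langle\boldu,\boldm_t\rangle$. The mean piece contributes $-\langle\boldu,\boldm_t\rangle\,\mathrm{Cov}_{\hat r}[\boldf]$, whose norm is at most $\|\boldm_t\|\cdot\|\mathrm{Cov}_{\hat r}[\boldf]\|$; here I would bound $\|\boldm_t\|\le\mathbb{E}_{\hat r}[\|\boldf_t\|]\le C_{\mathrm{ratio}}\,\hat{\mathbb{E}}_q[\|\boldf_t\|]\le C_{\mathrm{ratio}}D_{\mathrm{max,1}}$, using the pointwise reweighting $p_i\le C_{\mathrm{ratio}}/n_q$ guaranteed by Assumption \ref{assum.bounded.ratio} to pass from $\hat r$-weighted averages to plain $Q$-empirical averages, and $\|\mathrm{Cov}_{\hat r}[\boldf]\|\le 2C_{\mathrm{ratio}}D_{\mathrm{max,2}}$ directly from Proposition \ref{prop.bounded.2nd}. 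This piece therefore contributes the leading $2C_{\mathrm{ratio}}^2 D_{\mathrm{max,1}}D_{\mathrm{max,2}}$, and already explains the appearance of both $C_{\mathrm{ratio}}^2$ (two passages to $Q$-averages) and the product $D_{\mathrm{max,1}}D_{\mathrm{max,2}}$.

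The remaining piece, $\mathbb{E}_{\hat r}[\langle\boldu,\boldf_t\rangle\,(\boldf-\boldm)(\boldf-\boldm)^\top]$, is the crux and the main obstacle: it is genuinely a third-order quantity, yet Assumption \ref{assum.bounded.moments.q} only controls first and second moments, so a naive expansion into raw moments does not close. The key is to keep the two full-space factors \emph{paired} as the centered outer product $(\boldf-\boldm)(\boldf-\boldm)^\top$ so that they can be absorbed into the covariance operator, while treating the single block-$t$ factor as a first-moment weight. Concretely, I would decompose the signed scalar $\langle\boldu,\boldf_t^{(i)}\rangle$ into positive and negative parts, so that each resulting matrix is positive semidefinite and its norm equals $\sup_{\|\boldz\|=1}\mathbb{E}_{\hat r}[(\text{part})\,\langle\boldz,\boldf-\boldm\rangle^2]$, and then bound the block-$t$ weight in $\hat r$-expectation by $C_{\mathrm{ratio}}D_{\mathrm{max,1}}$ while the quadratic form contributes $\|\mathrm{Cov}_{\hat r}[\boldf]\|\le 2C_{\mathrm{ratio}}D_{\mathrm{max,2}}$; the cross terms involving $\boldm$ are controlled with the Jensen bound $\|\boldm\|^2\le\|\mathbb{E}_{\hat r}[\boldf\boldf^\top]\|\le C_{\mathrm{ratio}}D_{\mathrm{max,2}}$, which is the same device that produces the factor $2$ in Proposition \ref{prop.bounded.2nd}.

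Finally I would collect the handful of terms produced by this centering and part-splitting; each is of the form (constant)$\times C_{\mathrm{ratio}}^2 D_{\mathrm{max,1}}D_{\mathrm{max,2}}$, and summing the numerical prefactors yields the stated bound $6C_{\mathrm{ratio}}^2 D_{\mathrm{max,1}}D_{\mathrm{max,2}}$. Since every inequality above holds on the single event on which Assumption \ref{assum.bounded.moments.q} is valid, the conclusion holds with probability $1-\delta_{n_q}$, with no additional randomness introduced. I expect the only delicate point to be the third paragraph: ensuring that the mixed moment is bounded using purely first/second-order controls, which forces the specific pairing of the two centered features into the covariance and the semidefinite splitting of the block-$t$ factor, rather than any direct third-moment estimate.
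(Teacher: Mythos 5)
Your setup is sound and structurally close to the paper's own argument: the cumulant identities for $\log \hat{N}$ (Hessian $=\mathrm{Cov}_{\hat{r}}[\boldf]$ with softmax weights $p_i=\hat{r}(\boldx^{(i)};\boldtheta^*+\bolddelta)/n_q$, third derivative $=$ the reweighted third central moment), the reduction of $\vertiii{\cdot}$ to $\sup_{\|\boldu\|=1}$ of a matrix spectral norm, and the mean-piece bound $\|\boldm_t\|\cdot\|\mathrm{Cov}_{\hat{r}}[\boldf]\|\le 2C^2_{\mathrm{ratio}}D_{\mathrm{max,1}}D_{\mathrm{max,2}}$ are all correct. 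But the crux step fails exactly where you yourself flagged the delicacy. For the raw piece you assert, after the positive/negative-part split, that $\sup_{\|\boldz\|=1}\mathbb{E}_{\hat{r}}\bigl[a^{\pm}\langle\boldz,\boldf-\boldm\rangle^2\bigr]$ can be bounded by $\mathbb{E}_{\hat{r}}[a^{\pm}]\cdot\|\mathrm{Cov}_{\hat{r}}[\boldf]\|\le C_{\mathrm{ratio}}D_{\mathrm{max,1}}\cdot 2C_{\mathrm{ratio}}D_{\mathrm{max,2}}$. That is a factorization $\mathbb{E}[XY]\le\mathbb{E}[X]\,\mathbb{E}[Y]$ for nonnegative, generally positively correlated quantities, and it is false: with two equally weighted atoms and $a^{+}=(0,2)$, $\langle\boldz,\boldf-\boldm\rangle^{2}=(0,2)$, the left side is $2$ while the right side is $1$. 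Nothing in Assumptions \ref{assum.bounded.ratio} and \ref{assum.bounded.moments.q} prevents $a^{+}$ from concentrating on the samples where the quadratic form is largest, and the step cannot be repaired by H\"{o}lder or Cauchy--Schwarz either, since those produce mixed higher moments you also do not control. Keeping the two centered factors ``paired'' into a covariance does not, by itself, make the first-moment weight separable from them; so as written, the decomposition does not close, and the claimed total of $6$ is unverified.

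The paper closes this gap by a different decoupling that never multiplies $\hat{r}$-averages. Writing the problematic part of the third derivative in pair-difference (Laplacian) form, $\frac{1}{s^2}\sum_i f_k(\boldx^{(i)})\sum_{j\neq i} e_i e_j\,(F_j-F_i)(F_j-F_i)^\top$, it bounds the \emph{pair weights uniformly}, $e_i e_j/s^2\le C^2_{\mathrm{ratio}}/n_q^2$ --- spending the pointwise ratio bound twice, which is where the factor $C^2_{\mathrm{ratio}}$ actually comes from --- after which the remaining \emph{unweighted} sums genuinely factor across indices: $\frac{1}{n_q}\sum_i\|f_k(\boldx^{(i)})\|\le D_{\mathrm{max,1}}$ and $\bigl\|\sum_{j\neq i}(F_j-F_i)(F_j-F_i)^\top\bigr\|\le 2n_q D_{\mathrm{max,2}}$, giving $2C^2_{\mathrm{ratio}}D_{\mathrm{max,1}}D_{\mathrm{max,2}}$ for this piece. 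The piece coming from differentiating $1/s^2$ (the analogue of your mean term, with scalar weight $2e_i/s\le 2C_{\mathrm{ratio}}/n_q$ multiplying the $i$-independent matrix $F\,\nabla^2_{\boldg}\mathrm{LSE}\,F^\top$, whose norm is $\le 2C_{\mathrm{ratio}}D_{\mathrm{max,2}}$ as in Proposition \ref{prop.bounded.2nd}) gives $4C^2_{\mathrm{ratio}}D_{\mathrm{max,1}}D_{\mathrm{max,2}}$, and the two pieces sum to the stated $6C^2_{\mathrm{ratio}}D_{\mathrm{max,1}}D_{\mathrm{max,2}}$. The fix for your third paragraph is therefore: bound the softmax weights pointwise \emph{before} attempting to separate the feature factors, rather than taking $\hat{r}$-expectations of each factor separately; only then do first- and second-order empirical moment controls suffice for this genuinely third-order object.
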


\begin{prop}[Eigenvalue Lower Bound of Invertible Hessian Submatrix]
	\label{prop.bounded.min.2nd}
	For any vector $\bolddelta \in \mathbb{R}^{\text{dim}(\boldtheta^*)}$ such that $\|\bolddelta\|\leq \|\boldtheta^*\|$, if Assumption \ref{assum.bounded.ratio} and \ref{assum.bounded.moments.q} holds, then
	\begin{align*}
	\Lambda_\mathrm{min}\left[\nabla^2_{\boldtheta_S} \ell(\boldtheta^* + \bolddelta)\right]
	\ge \frac{D_\mathrm{min,2}}{C^2_\mathrm{ratio}}
	\end{align*} 
	with probability $1-\delta_{n_q}$.
\end{prop}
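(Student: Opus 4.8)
The plan is to realize the Hessian submatrix as a weighted sample covariance of the feature vector $\boldf_S$ over the $Q$-sample, and then to compare this weighted covariance with the unweighted one controlled by Assumption~\ref{assum.bounded.moments.q}.

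First I would compute the Hessian explicitly. The first term of $\ell$ is linear in $\boldtheta$, so $\nabla^2\ell(\boldtheta)=\nabla^2\log\hat{N}(\boldtheta)$, and differentiating the log-sum-exp twice gives
\[
\nabla^2_{\boldtheta_S}\ell(\boldtheta^*+\bolddelta)=\sum_{i=1}^{n_q} w_i\,\boldf_S(\boldx_q^{(i)})\boldf_S(\boldx_q^{(i)})^\top-\Big(\sum_{i=1}^{n_q} w_i\,\boldf_S(\boldx_q^{(i)})\Big)\Big(\sum_{i=1}^{n_q} w_i\,\boldf_S(\boldx_q^{(i)})\Big)^\top,
\]
where $w_i=\frac{1}{n_q}\hat{r}(\boldx_q^{(i)};\boldtheta^*+\bolddelta)$ are exactly the importance weights appearing in the gradient \eqref{gradient.def} and satisfy $\sum_i w_i=1$. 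Hence the $S$-block of the Hessian is precisely the weighted covariance $\mathrm{Cov}_w[\boldf_S]$.

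Next, I would fix an arbitrary unit vector $\boldv$ and set $a_i=\boldv^\top\boldf_S(\boldx_q^{(i)})$, so that $\boldv^\top\nabla^2_{\boldtheta_S}\ell\,\boldv=\mathrm{Var}_w[a]$. The key algebraic device is the pairwise-difference identity $\mathrm{Var}_w[a]=\frac{1}{2}\sum_{i,j}w_iw_j(a_i-a_j)^2$, valid because $\sum_i w_i=1$, together with its uniform-weight counterpart $\boldv^\top\widehat{\mathrm{Cov}}_q[\boldf_S]\boldv=\frac{1}{2n_q^2}\sum_{i,j}(a_i-a_j)^2$. Assumption~\ref{assum.bounded.ratio} bounds $\hat{r}$ away from zero, whence $w_iw_j\ge 1/(n_q^2C_\mathrm{ratio}^2)$; substituting this bound termwise yields $\mathrm{Var}_w[a]\ge\frac{1}{C_\mathrm{ratio}^2}\boldv^\top\widehat{\mathrm{Cov}}_q[\boldf_S]\boldv$. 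Finally, the eigenvalue bound \eqref{eq.lambda2.lower} of Assumption~\ref{assum.bounded.moments.q} gives $\boldv^\top\widehat{\mathrm{Cov}}_q[\boldf_S]\boldv\ge D_\mathrm{min,2}$ for every unit $\boldv$ on an event of probability $1-\delta_{n_q}$; combining the two inequalities and minimizing over $\boldv$ delivers $\Lambda_\mathrm{min}[\nabla^2_{\boldtheta_S}\ell(\boldtheta^*+\bolddelta)]\ge D_\mathrm{min,2}/C_\mathrm{ratio}^2$ on the same event.

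The only genuinely delicate point is the comparison of the weighted and unweighted variances: the weighted covariance is not a mere rescaling of the unweighted one, since reweighting shifts the mean and the second moment simultaneously. The pairwise-difference representation is what makes this clean, as it collapses the whole comparison to the elementwise weight bound $w_iw_j\ge 1/(n_q^2C_\mathrm{ratio}^2)$ and removes the need to control the change in mean separately. Everything else is routine, and the probability $1-\delta_{n_q}$ is inherited verbatim from Assumption~\ref{assum.bounded.moments.q}.
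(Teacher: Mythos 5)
Your proposal is correct and is essentially the paper's own argument: your directionwise pairwise-difference identity $\mathrm{Var}_w[a]=\frac{1}{2}\sum_{i,j}w_iw_j(a_i-a_j)^2$ is exactly the paper's Laplacian decomposition of the Hessian into $\frac{1}{2}\sum_{i\neq j}\frac{e_ie_j}{s^2}(F_{S,i}-F_{S,j})(F_{S,i}-F_{S,j})^\top$, stated in quadratic-form rather than matrix language. The termwise bound $w_iw_j\ge 1/(n_q^2C^2_\mathrm{ratio})$ and the appeal to \eqref{eq.lambda2.lower} likewise coincide with the paper's steps $\frac{e_ie_j}{s^2}\ge\left(\frac{1}{C_\mathrm{ratio}n_q}\right)^2$ and $\Lambda_\mathrm{min}\left[\widehat{\mathrm{Cov}}_q\left[\boldf_S(\boldx)\right]\right]\ge D_\mathrm{min,2}$.
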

Proofs of the above propositions are listed in \ref{sec.proof.prop.2nd}, \ref{sec.proof.prop.3nd} and \ref{sec.proof.prop.min.2nd} of Appendix.

\subsection{Sufficient Conditions under Bounded Density Ratio}
Now, we give a variation of Theorem \ref{them.the.main.theorem} based on the totally bounded density ratio model. 
Consider the objective
\begin{align*}
	\label{eq.obj.alter.npnq}
	\thetahat = \argmin_{\boldtheta} \ell(\boldtheta) &+ \lambda_{n_p, n_q} \sum_{t'\in S} \| \boldtheta_{t'} \| + \lambda_{n_p,n_q} \sum_{t''\in S^c}\|\boldtheta_{t''}\|,
\end{align*}
which is identical to \eqref{eq.obj.alter} but the regularization parameter is now determined with respect to both $n_q$ and $n_q$.
\begin{corol}
	\label{corol.bounded.ratio}
	Suppose that Assumptions  \ref{assum.incoherence}, \ref{assum.correct}, \ref{assum.bounded.ratio} and \ref{assum.bounded.moments.q} as well as $\min_{t\in S} \|\boldtheta^*_t\| \geq \frac{10}{\lambda_\text{min}} \sqrt{d}\lambda_{n_p, n_q}$ are satisfied.
	Suppose also that the regularization parameter is chosen so that
	\begin{align*}
	\frac{24(2-\alpha)}{\alpha} \sqrt\frac{{M \log \frac{m^2+m}{2}}}{\min(n_p,n_q)} &\le \lambda_{n_p,n_q}, 
	\end{align*}
	where $M$ is a positive constant.
	Then there exist some constants $L$ and $K$ such that if $\min(n_p,n_q)\geq L d^2\log  \frac{m^2+m}{2}$, with the probability at least $1- 4\exp\left( - K \lambda_{n_p, n_q}^2 \min(n_p, n_q) \right)  - \delta_{n_q}$, KLIEP has the same properties as those stated in Theorem \ref{them.the.main.theorem}.
\end{corol}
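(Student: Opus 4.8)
The plan is to reuse the primal-dual witness scaffold of Theorem \ref{them.the.main.theorem} almost verbatim, after first arguing that the bounded-ratio hypotheses make the now-discarded Assumptions \ref{assum.depen} and \ref{assum.smooth} automatic. Concretely, I would invoke Propositions \ref{prop.bounded.2nd}, \ref{prop.bounded.3rd} and \ref{prop.bounded.min.2nd}: under Assumptions \ref{assum.bounded.ratio} and \ref{assum.bounded.moments.q}, on an event of probability $1-\delta_{n_q}$ one has $\Lambda_\mathrm{min}(\mathcal{I}_{SS}) \ge D_\mathrm{min,2}/C_\mathrm{ratio}^2 =: \lambda_\mathrm{min}$, $\|\nabla^2\ell(\boldtheta^*+\bolddelta)\| \le 2C_\mathrm{ratio}D_\mathrm{max,2} =: \lambda_\mathrm{max}$, and $\vertiii{\nabla_{\boldtheta_t}\nabla^2\ell(\boldtheta^*+\bolddelta)} \le 6C_\mathrm{ratio}^2 D_\mathrm{max,1}D_\mathrm{max,2} =: \lambda_{3,\mathrm{max}}$ uniformly over $\|\bolddelta\|\le\|\boldtheta^*\|$. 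Hence Assumptions \ref{assum.depen} and \ref{assum.smooth} hold with explicit constants on this event, so the structural steps of the witness argument — solving the restricted program \eqref{eq.constrained.obj}, setting $\zhat_{t'} = \nabla\|\thetahat_{t'}\|$, and the block decomposition \eqref{eq.qwrsc} leading to $\max_{t''\in S^c}\|\zhat_{t''}\| \le \frac{2-\alpha}{\lambda_{n_p,n_q}}\bigl(\max_t\|\boldw_t\| + \max_t\|\boldg_t\|\bigr) + (1-\alpha)$ via Assumption \ref{assum.incoherence} — transfer unchanged.

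The single genuinely new ingredient, and the main obstacle, is re-deriving the gradient bound (the analog of Lemma \ref{lem.2.maintext}) under boundedness so as to obtain the symmetric $\min(n_p,n_q)$ rate. I would write $\boldw_t = -\nabla_{\boldtheta_t}\ell(\boldtheta^*)$ and use the correct-model identity $\mathbb{E}_p[\boldf_t] = \mathbb{E}_q[r(\boldx;\boldtheta^*)\boldf_t]$ to center, decomposing $\boldw_t$ into (i) the $P$-sample deviation $\hat{\mathbb{E}}_p[\boldf_t] - \mathbb{E}_p[\boldf_t]$, (ii) the $Q$-sample deviation $\frac{1}{n_q}\sum_i r(\boldx^{(i)};\boldtheta^*)\boldf_t(\boldx^{(i)}) - \mathbb{E}_q[r\boldf_t]$, and (iii) a normalization-replacement term arising from $\hat{r} = \exp(\cdot)/\hat{N}$ versus $r = \exp(\cdot)/N(\boldtheta^*)$. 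Because Assumption \ref{assum.bounded.ratio} bounds both the ratio and $\|\boldf_t\|$, each of (i) and (ii) is an average of bounded vectors and concentrates at rates $\sqrt{\log\frac{m^2+m}{2}/n_p}$ and $\sqrt{\log\frac{m^2+m}{2}/n_q}$ by a Hoeffding/Bernstein bound and a union bound over the $\frac{m^2+m}{2}$ groups; for (iii) I would first show $\hat{N}(\boldtheta^*)$ concentrates around the fixed constant $N(\boldtheta^*)$ at rate $\sqrt{\log/n_q}$ (again bounded summands, since $r$ and hence $\exp(\boldtheta^{*\top}\boldf)$ are bounded) and then propagate this through the bounded numerator. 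The net bound $P\bigl(\max_t\|\boldw_t\| \ge \frac{\alpha}{4(2-\alpha)}\lambda_{n_p,n_q}\bigr) \le 4\exp\bigl(-K\lambda_{n_p,n_q}^2\min(n_p,n_q) + b\log(m^2+m)\bigr)$ replaces the asymmetric $n_q(\log/n_p)^2$ exponent of Lemma \ref{lem.2.maintext} by a symmetric $\lambda_{n_p,n_q}^2\min(n_p,n_q)$, which is precisely what the regularization window $\lambda_{n_p,n_q} \ge \frac{24(2-\alpha)}{\alpha}\sqrt{M\log\frac{m^2+m}{2}/\min(n_p,n_q)}$ is calibrated to exploit.

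With this gradient bound in hand, Lemmas \ref{lem.3.maintext} and \ref{lem.4.maintext} apply verbatim — they consume only $\lambda_\mathrm{min}$, $\lambda_{3,\mathrm{max}}$ and the event $\max_t\|\boldw_t\|\le\frac{\lambda_{n_p,n_q}}{4}$ — yielding $\|\thetahat_S - \boldtheta_S^*\| \le \frac{10}{\lambda_\mathrm{min}}\sqrt{d}\,\lambda_{n_p,n_q}$ and $\max_t\|\boldg_t\| \le \frac{\alpha}{4(2-\alpha)}\lambda_{n_p,n_q}$. Substituting into the $\zhat_{S^c}$ bound gives $\max_{t''\in S^c}\|\zhat_{t''}\| \le 1-\frac{\alpha}{2} < 1$, so Lemma \ref{lem.1.maintext} delivers both uniqueness and $\hat{S}^c = S^c$, while the signal condition $\min_{t\in S}\|\boldtheta_t^*\| \ge \frac{10}{\lambda_\mathrm{min}}\sqrt{d}\,\lambda_{n_p,n_q}$ combined with the $\ell_2$ bound forces $\hat{S} = S$. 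The sample complexity is then read off as in Section \ref{sec.sample.complexity}: the Lemma \ref{lem.4.maintext} constraint $\lambda_{n_p,n_q}d \le \frac{\lambda_\mathrm{min}^2}{100\lambda_{3,\mathrm{max}}}\frac{\alpha}{4(2-\alpha)}$ together with $\lambda_{n_p,n_q} \asymp \sqrt{\log\frac{m^2+m}{2}/\min(n_p,n_q)}$ forces $\min(n_p,n_q) = \Omega(d^2\log\frac{m^2+m}{2})$, and the failure probability is the sum of the new $4\exp(-K\lambda_{n_p,n_q}^2\min(n_p,n_q))$ term and the $\delta_{n_q}$ inherited from the propositions, matching the stated guarantee. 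The crux of the whole argument is step (iii) of the gradient decomposition, where controlling $\hat{N}$ so that $n_p$ and $n_q$ enter symmetrically is exactly what dissolves the product coupling responsible for the $n_q = \Omega(n_p^2)$ requirement in the unbounded case.
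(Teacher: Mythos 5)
Your proposal is correct and matches the paper's own route: the paper proves Corollary \ref{corol.bounded.ratio} precisely by invoking Propositions \ref{prop.bounded.2nd}, \ref{prop.bounded.3rd} and \ref{prop.bounded.min.2nd} to make Assumptions \ref{assum.depen} and \ref{assum.smooth} automatic on the $1-\delta_{n_q}$ event, and by replacing Lemma \ref{lem.2.maintext} with Lemma \ref{lemma5.maintext}, whose proof uses exactly your three-term decomposition --- your terms (i), (ii), (iii) are the paper's $a_{n_p}$, $c_{n_q}$, $b_{n_q}$, with $b_{n_q}$ controlled through Hoeffding concentration of $\hat{N}(\boldtheta^*)/N(\boldtheta^*)$ around $1$ --- before Lemmas \ref{lem.3.maintext} and \ref{lem.4.maintext} and the primal-dual witness argument are reused verbatim. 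Your identification of the symmetric $\min(n_p,n_q)$ concentration of the normalization term as the step that dissolves the $n_q=\Omega(n_p^2)$ coupling is exactly the paper's point.
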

The proof of this corollary is done by replacing Lemma \ref{lem.2.maintext} with Lemma \ref{lemma5.maintext}:  
	\begin{lem}
		\label{lemma5.maintext}
		If $\lambda_{n_p, n_q} \ge \frac{24(2-\alpha)}{\alpha}\cdot\sqrt{\frac{c\log \frac{(m^2+m)}{2}}{\min(n_p,n_q)}}$, then 
		\[	P\left(\max_{t \in S\cup S^c}\|\boldw_t\| \ge \frac{\alpha \lambda_{n_p, n_q}}{4(2-\alpha)} \right)\le  4\exp\left(-c''\min(n_p,n_q) \right),\]
		where $c$ and $c''$ are some constants.
	\end{lem}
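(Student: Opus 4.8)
The plan is to establish this as the bounded-ratio counterpart of Lemma~\ref{lem.2.maintext}. Recall $\boldw=-\nabla\ell(\boldtheta^*)$, so that for each group index $t\in S\cup S^c$,
\[
\boldw_t=\frac{1}{n_p}\sum_{i=1}^{n_p}\boldf_t(\boldx_p^{(i)})-\frac{1}{n_q}\sum_{i=1}^{n_q}\hat{r}(\boldx_q^{(i)};\boldtheta^*)\,\boldf_t(\boldx_q^{(i)}).
\]
By Assumption~\ref{assum.correct} and the importance-sampling identity $\mathbb{E}_p[\boldf_t(\boldx)]=\mathbb{E}_q[r(\boldx;\boldtheta^*)\boldf_t(\boldx)]$, the population counterpart of $\boldw_t$ vanishes, so bounding $\|\boldw_t\|$ reduces to controlling two fluctuations around this common mean. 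The essential gain over Lemma~\ref{lem.2.maintext} is that under Assumption~\ref{assum.bounded.ratio} both $\boldf_t$ and $\hat{r}$ are \emph{bounded}, so the delicate smoothness-based arguments can be replaced by direct concentration of bounded random variables; this is exactly what yields the symmetric $\min(n_p,n_q)$ rate.

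First I would split $\boldw_t=\boldw_t^{P}-\boldw_t^{Q}$ with $\boldw_t^{P}=\frac{1}{n_p}\sum_i\boldf_t(\boldx_p^{(i)})-\mathbb{E}_p[\boldf_t]$ and $\boldw_t^{Q}=\frac{1}{n_q}\sum_i\hat{r}(\boldx_q^{(i)};\boldtheta^*)\boldf_t(\boldx_q^{(i)})-\mathbb{E}_p[\boldf_t]$, and treat the two separately. For $\boldw_t^{P}$ the summands $\boldf_t(\boldx_p^{(i)})$ are i.i.d.\ with $\|\boldf_t\|\le C_{\boldf_t,\mathrm{max}}$ (Assumption~\ref{assum.bounded.ratio}); applying a bounded-difference inequality to the scalar $\|\boldw_t^{P}\|$, together with the elementary bound $\mathbb{E}\|\boldw_t^{P}\|\le C_{\boldf_t,\mathrm{max}}/\sqrt{n_p}$, gives a tail $P(\|\boldw_t^{P}\|\ge\tau)\le 2\exp(-c_0 n_p\tau^2)$ for any $\tau$ exceeding the negligible $O(n_p^{-1/2})$ mean.

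The $Q$-term is the only genuinely non-routine step, because the empirical normalizer hidden inside each $\hat{r}(\boldx_q^{(i)};\boldtheta^*)=\exp(\boldtheta^{*\top}\boldf(\boldx_q^{(i)}))/\hat{N}(\boldtheta^*)$ couples all $n_q$ summands, so they are not independent and Hoeffding cannot be applied termwise. The remedy is to write the importance-sampled average as a ratio $A_n/B_n$, where $A_n=\frac{1}{n_q}\sum_i e_i\boldf_t(\boldx_q^{(i)})$, $B_n=\hat{N}(\boldtheta^*)=\frac{1}{n_q}\sum_i e_i$, and $e_i=\exp(\boldtheta^{*\top}\boldf(\boldx_q^{(i)}))=N(\boldtheta^*)\,r(\boldx_q^{(i)};\boldtheta^*)$. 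Assumption~\ref{assum.bounded.ratio} makes each $e_i$ bounded and $B_n$ bounded away from $0$, with $\mathbb{E}[B_n]=N(\boldtheta^*)$ and $\mathbb{E}[A_n]=N(\boldtheta^*)\mathbb{E}_p[\boldf_t]$, so that $A_n$ and $B_n$ are now averages of \emph{independent} bounded quantities and each concentrates at rate $\exp(-c_1 n_q\tau^2)$. Feeding these into the perturbation identity
\[
\frac{A_n}{B_n}-\frac{\mathbb{E}[A_n]}{\mathbb{E}[B_n]}=\frac{A_n-\mathbb{E}[A_n]}{B_n}-\frac{\mathbb{E}[A_n]}{B_n\,\mathbb{E}[B_n]}\bigl(B_n-\mathbb{E}[B_n]\bigr)
\]
and invoking the two-sided bound on $B_n$ yields $P(\|\boldw_t^{Q}\|\ge\tau)\le 4\exp(-c_2 n_q\tau^2)$.

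Finally I would combine the two parts with $\tau=\frac{\alpha\lambda_{n_p,n_q}}{4(2-\alpha)}$, use $\min(n_p,n_q)$ to dominate both $n_p$ and $n_q$ in the exponents, and union-bound over the $\frac{m^2+m}{2}$ groups $t\in S\cup S^c$, which contributes a $\log\frac{m^2+m}{2}$ factor. The constant $24$ in the stated lower bound on $\lambda_{n_p,n_q}$ is chosen precisely so that $\min(n_p,n_q)\tau^2$ dominates this union-bound factor, collapsing the total probability to the claimed $4\exp(-c''\min(n_p,n_q))$. The main obstacle, as noted, is isolating the $\hat{N}$ coupling in the $Q$-term through the numerator/denominator split, where the two-sided boundedness of $\hat{r}$—and hence of $B_n$—supplied by Assumption~\ref{assum.bounded.ratio} is indispensable; everything else is constant bookkeeping.
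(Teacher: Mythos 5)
Your proposal is correct and follows essentially the same route as the paper's proof: the paper also splits $\boldw_t$ into the $P$-side fluctuation plus two $Q$-side terms, where your numerator/denominator perturbation of $A_n/B_n$ is exactly the paper's decomposition into $c_{n_q}$ (the true-ratio-weighted average concentrating to $\mathbb{E}_{qr_{\boldtheta^*}}[\boldf_t]$) and $b_{n_q}$ (the $\hat{N}(\boldtheta^*)/N(\boldtheta^*)$ fluctuation multiplied by a factor bounded via $C_\mathrm{ratio}C_{\boldf_t,\mathrm{max}}$). The concentration tools (bounded-vector Hoeffding/McDiarmid for the averages, scalar Hoeffding for $\hat{N}/N$) and the final union bound over the $\frac{m^2+m}{2}$ groups, with the constant $24$ absorbing the $\log\frac{m^2+m}{2}$ factor, match the paper's argument.
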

See Appendix \ref{sec.proof.lemma.5} for the proof.
Note that we have ditched Assumption \ref{assum.depen} and \ref{assum.smooth} since we have already shown that they are automatically satisfied with probability 1-$\delta_{n_q}$ given Assumption \ref{assum.bounded.ratio} and \ref{assum.bounded.moments.q}.

\subsection{Smoothness Assumption Relaxed}
\label{sec.relaxed.assumptions}
In the previous derivation, the assumption of boundedness of density ratio model guarantees its empirical counterpart $\hat{r}(\boldx;\boldtheta)$ is always upper-bounded by $C_\mathrm{ratio}$ and is lower bounded by $\frac{1}{C_\mathrm{ratio}}$, but this was somewhat restrictive. 
In this section, we discuss a relaxation of Assumption \ref{assum.bounded.ratio} as follows:
\begin{assum}[Smooth Density Ratio Model Assumption]
	\label{assum.bounded.ratio.3}
	For any vector $\bolddelta \in \mathbb{R}^{\text{dim}(\boldtheta^*)}$ such that $\|\bolddelta\|\leq \|\boldtheta^*\|$, the following inequality holds:
	\begin{align}
	0 < r(\boldx, \boldtheta^* + \bolddelta) \le C_\mathrm{max}&, \notag
	\\
	\label{eq.assum.bounded.expectation}
	\mathbb{E}_q \left[\inf_{\bolddelta \in \mathbb{R}^{\mathrm{dim}(\boldtheta^*)}: \|\bolddelta\|
		\leq \|\boldtheta^*\|}   r(\boldx, \boldtheta^* + \bolddelta) \right]  &\ge 1 - c, 0< c < 1,
	\end{align}
	where $c$ is a constant.
\end{assum}
Now the strictly positive lower-bound of the density ratio model is removed, and we add a new uniform lower-bound on the expectation of density ratio model around the true model. Such condition allows us to control the tail of the empirical density ratio model  so that the ratio model of a specific sample $\boldx$ from $Q$, i.e., $\inf_{\bolddelta \in \mathbb{R}^{\mathrm{dim}(\boldtheta^*)}: \|\bolddelta\|
	\leq \|\boldtheta^*\|} \hat{r}(\boldx;\boldtheta+\bolddelta)$, would not deviate ``too much'' from $1$.

Note that since smoothness Assumption \ref{assum.bounded.ratio.3} is still stronger than \ref{assum.smooth.ratiomodel.nod} or \ref{assum.smooth.ratiomodel}, the proof of Theorem \ref{them.the.main.theorem} can be used without modification to show the support consistency when this assumption is substituted. 
However, the proof of Corollary \ref{corol.bounded.ratio} cannot be used when Assumption \ref{assum.bounded.ratio.3} is imposed, since the proof requires the boundedness of $\|\boldf_t\|$ which is not implied by this assumption.
From now on, we show that such relaxed regime together with moment-bounding Assumption \ref{assum.bounded.moments.q} also allow us to bound eigenvalues of derivatives of the likelihood function.


First, we give an example showing that if the expectation of the density ratio derivative is bounded over $Q$, the above assumption holds. 
\begin{prop}
\label{prop.bounded.min}
	If
	$
		\|\boldtheta^*\|\mathbb{E}_{q}\left[ \sup_{\bolddelta \in \mathbb{R}^{\mathrm{dim}(\boldtheta^*)}: \|\bolddelta\|
			\leq \|\boldtheta^*\|}\|\nabla r(x, \boldtheta^*+ \bolddelta)\|\right] \le c, 0<c<1
	$
	then \eqref{eq.assum.bounded.expectation} holds. 
\end{prop}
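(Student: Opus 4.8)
The plan is to combine two ingredients: the fact that under the correct-model Assumption~\ref{assum.correct} the density ratio has unit mean under $Q$, and a first-order Taylor expansion with integral remainder that controls how far $r(\boldx;\boldtheta^*+\bolddelta)$ can dip below $r(\boldx;\boldtheta^*)$ in terms of the gradient. First I would record that $\mathbb{E}_q\left[r(\boldx;\boldtheta^*)\right]=1$: by Assumption~\ref{assum.correct} we have $p(\boldx)=q(\boldx)r(\boldx;\boldtheta^*)$, so integrating both sides gives $\int q(\boldx)r(\boldx;\boldtheta^*)\,\dx=\int p(\boldx)\,\dx=1$. This is the anchor value from which the infimum cannot stray too far.

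Next I would establish, for each fixed $\boldx$ and every admissible perturbation $\bolddelta$ with $\|\bolddelta\|\le\|\boldtheta^*\|$, the pointwise lower bound
\begin{align*}
r(\boldx;\boldtheta^*+\bolddelta)
= r(\boldx;\boldtheta^*) + \int_0^1 \langle \nabla r(\boldx;\boldtheta^*+t\bolddelta),\,\bolddelta\rangle\,dt
\ge r(\boldx;\boldtheta^*) - \|\boldtheta^*\|\sup_{\|\bolddelta'\|\le\|\boldtheta^*\|}\|\nabla r(\boldx;\boldtheta^*+\bolddelta')\|,
\end{align*}
where the equality is the fundamental theorem of calculus along the segment joining $\boldtheta^*$ to $\boldtheta^*+\bolddelta$, and the inequality follows from Cauchy--Schwarz together with $\|\bolddelta\|\le\|\boldtheta^*\|$. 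The point worth checking carefully is that every interpolating point lies in the admissible ball: since $\|t\bolddelta\|\le\|\bolddelta\|\le\|\boldtheta^*\|$ for all $t\in[0,1]$, each $\boldtheta^*+t\bolddelta$ is of the form $\boldtheta^*+\bolddelta'$ with $\|\bolddelta'\|\le\|\boldtheta^*\|$, so the supremum over the segment is dominated by the supremum over the whole ball, which is exactly the quantity appearing in the hypothesis.

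Because the right-hand side of the displayed bound no longer depends on $\bolddelta$, taking the infimum over all admissible $\bolddelta$ on the left leaves the bound unchanged, and then taking $\mathbb{E}_q$ yields
\begin{align*}
\mathbb{E}_q\left[\inf_{\|\bolddelta\|\le\|\boldtheta^*\|} r(\boldx;\boldtheta^*+\bolddelta)\right]
\ge \mathbb{E}_q\left[r(\boldx;\boldtheta^*)\right] - \|\boldtheta^*\|\,\mathbb{E}_q\left[\sup_{\|\bolddelta'\|\le\|\boldtheta^*\|}\|\nabla r(\boldx;\boldtheta^*+\bolddelta')\|\right]\ge 1 - c,
\end{align*}
using the unit-mean identity and the hypothesis of the proposition. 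This is precisely \eqref{eq.assum.bounded.expectation}. I do not expect a serious obstacle here; the only genuine care needed is the measurability and admissibility bookkeeping around the infimum and supremum (ensuring the interpolated parameters stay inside the ball and that the pointwise bound survives taking expectations), both of which are routine given the convexity of the ball $\{\bolddelta:\|\bolddelta\|\le\|\boldtheta^*\|\}$.
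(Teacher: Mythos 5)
Your proposal is correct and follows essentially the same route as the paper's proof in Appendix \ref{sec.proof.bounded.min}: both anchor at $\mathbb{E}_q\left[r(\boldx;\boldtheta^*)\right]=1$, control the deviation $\inf_{\bolddelta} r(\boldx;\boldtheta^*+\bolddelta)-r(\boldx;\boldtheta^*)$ by a first-order bound of the form $\|\bolddelta\|\,\|\nabla r\|\le\|\boldtheta^*\|\sup_{\|\bolddelta'\|\le\|\boldtheta^*\|}\|\nabla r(\boldx;\boldtheta^*+\bolddelta')\|$, and conclude via the hypothesis. Your use of the fundamental theorem of calculus with integral remainder (with the explicit check that the segment stays in the ball) is merely a slightly more careful rendering of the mean-value bound the paper uses implicitly, not a different argument.
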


The proof is listed in \ref{sec.proof.bounded.min} of Appendix. This proposition intuitively shows that as long as the density ratio model is smooth in the first-order, and the changes in parameter is not too drastic, our new assumption holds. 

\paragraph{Bounding $\|\nabla_{\boldtheta}^2 \ell(\boldtheta)\|$ and $\vertiii{\nabla_{\boldtheta_t}\nabla^2 \ell(\boldtheta)}$} 
It can be seen from Propositions \ref{prop.bounded.2nd} and \ref{prop.bounded.3rd} that upper-bounding the second or the third order derivative relies on the upper-bound of the empirical density ratio model $\hat{r} \le C_\mathrm{ratio}$. However, under this new assumption, the empirical density ratio model is no longer explicitly bounded. Now we derive the upper-boundedness of $\hat{r}$ using the new assumption.
\begin{prop}[Uniformly Upper-bounded $\hat{r}$]
	\label{prop.bounded.ratio.model.uniform}
If Assumption \ref{assum.bounded.ratio.3} holds
	\begin{align*}
		\sup_{\bolddelta \in \mathbb{R}^{\mathrm{dim}(\boldtheta^*)}: \|\bolddelta\|
				\leq \|\boldtheta^*\|} \hat{r}(\boldx;\boldtheta^*+\bolddelta) \le C'_\mathrm{ratio} < \infty, 
	\end{align*}
	holds with probability at least $1-\exp\left(-\frac{2n_q\epsilon^2}{C_\mathrm{max}^2}\right)$.
\end{prop}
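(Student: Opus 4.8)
The plan is to reduce the supremum over $\bolddelta$ to a single, $\bolddelta$-free empirical average that concentrates via Hoeffding's inequality. The starting point is a clean rewriting of $\hat r$ in terms of the true ratio $r$. Writing the exponent as $S(\boldx;\boldtheta)=\sum_{u\ge v}\boldtheta_{u,v}^\top\boldpsi(x_u,x_v)$, the definitions of $r$ and $\hat N$ give $\exp(S(\boldx;\boldtheta))=N(\boldtheta)\,r(\boldx;\boldtheta)$, so that $\hat N(\boldtheta)=N(\boldtheta)\cdot\frac1{n_q}\sum_{i=1}^{n_q}r(\boldx_q^{(i)};\boldtheta)$ and hence
\[
\hat r(\boldx;\boldtheta)=\frac{r(\boldx;\boldtheta)}{\frac1{n_q}\sum_{i=1}^{n_q}r(\boldx_q^{(i)};\boldtheta)}.
\]
The factors of $N(\boldtheta)$ cancel exactly, and the problem splits cleanly into bounding the numerator above and the denominator below.

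The numerator is immediate: Assumption \ref{assum.bounded.ratio.3} gives $r(\boldx;\boldtheta^*+\bolddelta)\le C_\mathrm{max}$ for every admissible $\bolddelta$ (and every $\boldx$). The essential step is a lower bound on the denominator that is \emph{uniform} in $\bolddelta$. To this end I would introduce $R(\boldx):=\inf_{\bolddelta:\|\bolddelta\|\le\|\boldtheta^*\|}r(\boldx;\boldtheta^*+\bolddelta)$ and observe that for every admissible $\bolddelta$,
\[
\frac1{n_q}\sum_{i=1}^{n_q}r(\boldx_q^{(i)};\boldtheta^*+\bolddelta)\ \ge\ \frac1{n_q}\sum_{i=1}^{n_q}R(\boldx_q^{(i)}),
\]
whose right-hand side no longer depends on $\bolddelta$. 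This is exactly why \eqref{eq.assum.bounded.expectation} is phrased through the infimum inside the expectation: it lets me discard the supremum over the continuum of admissible $\bolddelta$ without any covering or union-bound argument. I expect this uniformity to be the main obstacle if one proceeds more naively, and the infimum formulation of the assumption is precisely what resolves it.

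It then remains to control the $\bolddelta$-free average $\frac1{n_q}\sum_i R(\boldx_q^{(i)})$. Since $0<r\le C_\mathrm{max}$ forces $R(\boldx)\in[0,C_\mathrm{max}]$ and the $Q$-samples are i.i.d., while \eqref{eq.assum.bounded.expectation} gives $\mathbb{E}_q[R]\ge 1-c$, Hoeffding's inequality for bounded variables yields
\[
P\!\left(\frac1{n_q}\sum_{i=1}^{n_q}R(\boldx_q^{(i)})\le 1-c-\epsilon\right)\le\exp\!\left(-\frac{2n_q\epsilon^2}{C_\mathrm{max}^2}\right),
\]
which matches the stated failure probability exactly (the range $C_\mathrm{max}$ enters as $(b-a)^2=C_\mathrm{max}^2$). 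Choosing $\epsilon<1-c$, which is possible because $c<1$, makes $1-c-\epsilon>0$, and on the complementary event the denominator is bounded below by $1-c-\epsilon$ for \emph{all} admissible $\bolddelta$ simultaneously.

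Combining the two bounds, on this high-probability event every admissible $\bolddelta$ satisfies $\hat r(\boldx;\boldtheta^*+\bolddelta)\le C_\mathrm{max}/(1-c-\epsilon)$, and taking the supremum over $\bolddelta$ preserves the inequality since the bound was established for each $\bolddelta$ separately. Setting $C'_\mathrm{ratio}=C_\mathrm{max}/(1-c-\epsilon)<\infty$ then gives the claim with the asserted probability. Apart from the uniform denominator bound, every step is routine bookkeeping.
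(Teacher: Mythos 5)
Your proposal is correct and matches the paper's own proof essentially step for step: the same identity $\hat r(\boldx;\boldtheta)=r(\boldx;\boldtheta)\big/\frac1{n_q}\sum_i r(\boldx_q^{(i)};\boldtheta)$, the same swap of the infimum inside the empirical average to obtain a $\bolddelta$-free lower bound on the denominator, the same Hoeffding application to the bounded variable $\inf_\bolddelta r(\boldx;\boldtheta^*+\bolddelta)\in[0,C_\mathrm{max}]$ via \eqref{eq.assum.bounded.expectation}, and the same constant $C'_\mathrm{ratio}=C_\mathrm{max}/(1-c-\epsilon)$ with $0<\epsilon<1-c$. No gaps to report.
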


The proof is in \ref{sec.proof.model.uniform} in Appendix. Therefore, replace $C_\mathrm{ratio}$ in \eqref{eq.laplacian.trick}, \eqref{eq.bounding.3rdorder2} and \eqref{eq.d3.nodi} with $C'_\mathrm{ratio}$, and we have the Assumption \ref{assum.smooth} holds with high probability. 
%


\paragraph{Bounding Minimum Eigenvalue of $\nabla_{\boldtheta_S}^2 \ell(\boldtheta^*)$}
Under Assumption \ref{assum.bounded.ratio.3}, the lower-bound of empirical density ratio model $\min_j \hat{r}(\boldx^{(j)};\boldtheta^*)$ is no longer valid since the density ratio can approach to 0. Here we illustrate another proof showing the boundedness of the minimum eigenvalue using concentration inequalities. 
	\begin{prop}
		\label{prop.bounded.normalization.ratio}
		If Assumption \ref{assum.bounded.ratio.3} holds, then
		\begin{align}
		\label{eq.bounded.ratio.model.over.q}
		1-\epsilon \le \frac{1}{n_q}\sum_{i=1}^{n_q} r(\boldx^{(i)};\boldtheta^*) = \frac{\hat{N}(\boldtheta^*)}{N(\boldtheta^*)} \le 1+\epsilon
		\end{align}
		holds with probability at least $1-2\exp \left( -\frac{2n_q \epsilon^2}{C_\mathrm{max}^2}\right)$.
	\end{prop}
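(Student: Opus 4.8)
The plan is to recognize the central quantity as a sample mean of i.i.d.\ \emph{bounded} random variables with a known expectation, and then invoke Hoeffding's inequality; the two-sided tail will produce exactly the stated factor of $2$.

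First I would establish the stated equality directly from the definitions. Writing $w(\boldx) = \exp\left(\sum_{u,v=1,u\ge v}^m \boldtheta^*_{u,v}{}^\top \boldpsi(x_u,x_v)\right)$, we have $r(\boldx;\boldtheta^*) = w(\boldx)/N(\boldtheta^*)$, while $N(\boldtheta^*) = \mathbb{E}_q[w(\boldx)]$ and $\hat{N}(\boldtheta^*) = \frac{1}{n_q}\sum_{i=1}^{n_q} w(\boldx^{(i)})$. Dividing the empirical normalizer by the population one therefore yields $\frac{1}{n_q}\sum_{i=1}^{n_q} r(\boldx^{(i)};\boldtheta^*) = \hat{N}(\boldtheta^*)/N(\boldtheta^*)$, which is the middle identity in \eqref{eq.bounded.ratio.model.over.q}. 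This step is purely bookkeeping.

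Next I would pin down the law of the summands. Since the $\boldx^{(i)}$ are drawn i.i.d.\ from $Q$, the variables $Y_i := r(\boldx^{(i)};\boldtheta^*)$ are i.i.d. Taking $\bolddelta = \boldzero$ in Assumption \ref{assum.bounded.ratio.3} (admissible because $\|\boldzero\| \le \|\boldtheta^*\|$) gives $0 < Y_i \le C_\mathrm{max}$, so each $Y_i$ lies in the interval of length $C_\mathrm{max}$. Moreover, by the normalization constraint $\int q(\boldx) r(\boldx;\boldtheta^*)\,\dx = 1$ — equivalently, by Assumption \ref{assum.correct}, $\mathbb{E}_q[r(\boldx;\boldtheta^*)] = \int p(\boldx)\,\dx = 1$ — the sample mean is centered at exactly $1$, not merely bounded.

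Finally I would apply Hoeffding's inequality to the bounded i.i.d.\ sequence $Y_1,\dots,Y_{n_q} \in [0,C_\mathrm{max}]$, obtaining
\begin{align*}
P\left(\left|\frac{1}{n_q}\sum_{i=1}^{n_q} r(\boldx^{(i)};\boldtheta^*) - 1\right| \ge \epsilon\right) \le 2\exp\left(-\frac{2 n_q \epsilon^2}{C_\mathrm{max}^2}\right),
\end{align*}
whose complement is precisely the two-sided bound \eqref{eq.bounded.ratio.model.over.q}. The argument mirrors the one behind Proposition \ref{prop.bounded.ratio.model.uniform}, the only difference being that the two-sided deviation here costs the extra factor of $2$ in front of the exponential. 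Since every step is a direct substitution apart from the concentration bound itself, there is no genuine obstacle; the only points needing care are confirming that $\bolddelta = \boldzero$ is feasible so that boundedness transfers to $r(\boldx;\boldtheta^*)$, and that the expectation equals exactly $1$ so that the centering in Hoeffding's inequality coincides with the target value.
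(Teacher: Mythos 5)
Your proposal is correct and coincides with the paper's own proof: the paper likewise observes that $r(\boldx;\boldtheta^*)-1$ is a zero-mean random variable bounded via Assumption \ref{assum.bounded.ratio.3} and applies the standard two-sided Hoeffding inequality to obtain the factor $2\exp\left(-\frac{2n_q\epsilon^2}{C_\mathrm{max}^2}\right)$. The details you supply — the identity $\frac{1}{n_q}\sum_{i} r(\boldx^{(i)};\boldtheta^*) = \hat{N}(\boldtheta^*)/N(\boldtheta^*)$, the feasibility of $\bolddelta=\boldzero$, and $\mathbb{E}_q[r(\boldx;\boldtheta^*)]=1$ — are exactly the steps the paper leaves implicit.
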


	\begin{prop}
	\label{prop.bounded.min.lambda}
		If Assumption \ref{assum.bounded.ratio.3} holds and the importance-sampled covariance using \textbf{true} density ratio model $r(\boldx;\boldtheta^*)$ satisfies
		\begin{align*}
			\Lambda_{\mathrm{min}}\left(\widehat{\mathrm{Cov}}_{qr_{\boldtheta^*}}\left[\boldf_S(\boldx)\right]\right) \ge D'_\mathrm{min,2},
		\end{align*}
		with probability $\delta_{n_q}$,
		then
$
			\Lambda_{\mathrm{min}}\left(\nabla^2_{\boldtheta_S} \ell(\boldtheta^*)\right)\ge D'_\mathrm{min,2}/(1+\epsilon)^2, \forall \epsilon < \infty
$ 
holds with probability at least $1-2\exp \left( -\frac{2n_q \epsilon^2}{C_\mathrm{max}^2}\right) - \delta_{n_q}$.
	\end{prop}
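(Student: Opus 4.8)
The plan is to recognise that the Hessian $\nabla^2_{\boldtheta_S}\ell(\boldtheta^*)=\nabla^2_{\boldtheta_S}\log\hat{N}(\boldtheta^*)$ is exactly an importance-sampled empirical covariance of $\boldf_S$ weighted by the \emph{empirical} density ratio, and then to transfer the eigenvalue bound assumed for the \emph{true}-ratio covariance $\widehat{\mathrm{Cov}}_{qr_{\boldtheta^*}}[\boldf_S]$ using the normalization control from Proposition \ref{prop.bounded.normalization.ratio}. First I would differentiate the log-sum-exp normalizer directly. Writing $w_j=\hat{r}(\boldx^{(j)};\boldtheta^*)/n_q$, which sum to $1$ by the very definition of $\hat{N}$, the standard weighted-covariance identity gives
\[
\nabla^2_{\boldtheta_S}\ell(\boldtheta^*)=\sum_{j=1}^{n_q} w_j\,\boldf_S(\boldx^{(j)})\boldf_S(\boldx^{(j)})^\top-\left(\sum_{j=1}^{n_q} w_j\,\boldf_S(\boldx^{(j)})\right)\left(\sum_{j=1}^{n_q} w_j\,\boldf_S(\boldx^{(j)})\right)^\top .
\]

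Next I would tie the empirical weights to the true-ratio weights. Since $\hat{r}(\boldx;\boldtheta^*)=r(\boldx;\boldtheta^*)\,N(\boldtheta^*)/\hat{N}(\boldtheta^*)$, the two weightings differ only by the \emph{uniform} positive scalar $\kappa:=N(\boldtheta^*)/\hat{N}(\boldtheta^*)$. Abbreviating $A=\tfrac1{n_q}\sum_j r(\boldx^{(j)};\boldtheta^*)\boldf_S(\boldx^{(j)})\boldf_S(\boldx^{(j)})^\top$ and $\boldm=\tfrac1{n_q}\sum_j r(\boldx^{(j)};\boldtheta^*)\boldf_S(\boldx^{(j)})$, the assumed matrix is $\widehat{\mathrm{Cov}}_{qr_{\boldtheta^*}}[\boldf_S]=A-\boldm\boldm^\top$, while the display above equals $\kappa A-\kappa^2\boldm\boldm^\top=\kappa^2\!\left(\kappa^{-1}A-\boldm\boldm^\top\right)$. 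By Proposition \ref{prop.bounded.normalization.ratio}, on an event of probability at least $1-2\exp(-2n_q\epsilon^2/C_\mathrm{max}^2)$ we have $\hat{N}(\boldtheta^*)/N(\boldtheta^*)\in[1-\epsilon,1+\epsilon]$, hence $\kappa^2\ge 1/(1+\epsilon)^2$ and $\kappa^{-1}=\hat{N}(\boldtheta^*)/N(\boldtheta^*)\le 1+\epsilon$.

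Then I would push the bound through positive-semidefinite monotonicity. In the regime $\kappa^{-1}\ge 1$ one writes $\kappa^{-1}A-\boldm\boldm^\top=(A-\boldm\boldm^\top)+(\kappa^{-1}-1)A\succeq A-\boldm\boldm^\top$, using $A\succeq 0$; multiplying by $\kappa^2>0$ and invoking the hypothesis $\widehat{\mathrm{Cov}}_{qr_{\boldtheta^*}}[\boldf_S]\succeq D'_\mathrm{min,2}\,I$ (which holds on a separate event of probability $\ge 1-\delta_{n_q}$) yields $\nabla^2_{\boldtheta_S}\ell(\boldtheta^*)\succeq \tfrac{1}{(1+\epsilon)^2}\,\widehat{\mathrm{Cov}}_{qr_{\boldtheta^*}}[\boldf_S]$, and therefore $\Lambda_\mathrm{min}\!\left(\nabla^2_{\boldtheta_S}\ell(\boldtheta^*)\right)\ge D'_\mathrm{min,2}/(1+\epsilon)^2$. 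A union bound over the two events produces the stated probability $1-2\exp(-2n_q\epsilon^2/C_\mathrm{max}^2)-\delta_{n_q}$.

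The hard part will be the complementary regime $\hat{N}(\boldtheta^*)<N(\boldtheta^*)$, i.e.\ $\kappa^{-1}<1$, where $(\kappa^{-1}-1)A\preceq 0$ and the naive monotonicity step fails because the subtracted rank-one mean term $\boldm\boldm^\top$ now interacts adversely with the rescaling. Here I would instead combine the Cauchy--Schwarz relation $\boldm\boldm^\top\preceq\big(\tfrac1{n_q}\sum_j r(\boldx^{(j)};\boldtheta^*)\big)A=\kappa^{-1}A$ with the upper moment control $A\preceq C_\mathrm{max}\,\hat{\mathbb{E}}_q[\boldf_S\boldf_S^\top]\preceq C_\mathrm{max}D_\mathrm{max,2}\,I$ (from Assumptions \ref{assum.bounded.ratio.3} and \ref{assum.bounded.moments.q}), so that the shortfall $(1-\kappa^{-1})\,\Lambda_\mathrm{max}(A)\le\epsilon\,C_\mathrm{max}D_\mathrm{max,2}$ cannot erode the bound below $D'_\mathrm{min,2}/(1+\epsilon)^2$ once $\epsilon$ is taken small relative to $D'_\mathrm{min,2}/(C_\mathrm{max}D_\mathrm{max,2})$. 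Controlling this cross term, rather than the computation of the Hessian itself, is the only delicate point; everything else is bookkeeping and a union bound.
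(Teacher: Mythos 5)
Your high-level strategy --- viewing $\nabla^2_{\boldtheta_S}\ell(\boldtheta^*)$ as an $\hat{r}$-weighted covariance, relating $\hat{r}$ to $r$ through the single scalar $\kappa=N(\boldtheta^*)/\hat{N}(\boldtheta^*)$, controlling that scalar with Proposition \ref{prop.bounded.normalization.ratio}, and finishing with a union bound --- is exactly the paper's strategy, and your Case $\kappa^{-1}\ge 1$ is sound. The genuine gap is in the complementary regime, and it is an artifact of your choice of representation. The paper defines $\widehat{\mathrm{Cov}}_{qr_{\boldtheta^*}}[\boldf_S]$ through the \emph{pairwise difference} form
\begin{align*}
\widehat{\mathrm{Cov}}_{qr_{\boldtheta^*}}\left[\boldf_S\right]=\frac{1}{2n_q^2}\sum_{i\neq j} r(\boldx^{(i)};\boldtheta^*)\,r(\boldx^{(j)};\boldtheta^*)\left(\boldf_S(\boldx^{(i)})-\boldf_S(\boldx^{(j)})\right)\left(\boldf_S(\boldx^{(i)})-\boldf_S(\boldx^{(j)})\right)^\top,
\end{align*}
which in your notation equals $\kappa^{-1}A-\boldm\boldm^\top$ (since the weights $r_i/n_q$ sum to $\kappa^{-1}$, not to $1$), \emph{not} your plug-in $A-\boldm\boldm^\top$. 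Under the pairwise form the weights enter only as products $w_iw_j$, so replacing $r$ by $\hat{r}=\kappa r$ rescales the entire matrix by exactly $\kappa^2$: one gets the identity $\nabla^2_{\boldtheta_S}\ell(\boldtheta^*)=\left(N(\boldtheta^*)/\hat{N}(\boldtheta^*)\right)^2\widehat{\mathrm{Cov}}_{qr_{\boldtheta^*}}[\boldf_S]$ (for the Hessian the weights $\hat{r}_i/n_q$ sum to one, so its pairwise and plug-in forms coincide, as established in the proof of Proposition \ref{prop.bounded.min.2nd}). Then $\hat{N}(\boldtheta^*)/N(\boldtheta^*)\le 1+\epsilon$ on the event of Proposition \ref{prop.bounded.normalization.ratio} gives $\Lambda_\mathrm{min}\left(\nabla^2_{\boldtheta_S}\ell(\boldtheta^*)\right)\ge D'_\mathrm{min,2}/(1+\epsilon)^2$ for \emph{every} $\epsilon$, with no case split and no mean cross-term at all; your ``hard part'' never arises.

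Moreover, your proposed repair of the regime $\kappa^{-1}<1$ does not deliver the stated bound. Your estimate yields $\Lambda_\mathrm{min}\ge D'_\mathrm{min,2}-\epsilon\, C_\mathrm{max}D_\mathrm{max,2}$ (after multiplying by $\kappa^2\ge 1$), and $D'_\mathrm{min,2}-\epsilon\, C_\mathrm{max}D_\mathrm{max,2}\ge D'_\mathrm{min,2}/(1+\epsilon)^2$ holds if and only if $C_\mathrm{max}D_\mathrm{max,2}\le D'_\mathrm{min,2}\,(2+\epsilon)/(1+\epsilon)^2$ --- a condition on the \emph{constants} (essentially $C_\mathrm{max}D_\mathrm{max,2}\lesssim 2D'_\mathrm{min,2}$) that follows from nothing you assumed and will typically fail, since $C_\mathrm{max}D_\mathrm{max,2}$ bounds the largest-eigenvalue scale while $D'_\mathrm{min,2}$ is a smallest-eigenvalue bound. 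Taking $\epsilon$ small does not rescue this: both sides approach $D'_\mathrm{min,2}$ linearly in $\epsilon$, with mismatched slopes. Two further defects: the proposition asserts the bound for all $\epsilon$, whereas your argument only covers a constrained $\epsilon$-regime; and your Case 2 imports Assumption \ref{assum.bounded.moments.q} (via $D_\mathrm{max,2}$), which is not among the proposition's hypotheses --- only Assumption \ref{assum.bounded.ratio.3} and the eigenvalue condition are granted. Switching to the pairwise representation, as the paper does, removes all three defects simultaneously.
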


	Proofs are listed in \ref{sec.prop.normalization.ratio} and \ref{sec.prop.min.lambda}, Appendix. In fact, following the derivations used in above proofs, we can re-write derivatives of $\ell(\boldtheta)$ as higher-order sample statistics importance-sampled by samples from $Q$ with the empirical density ratio model $\hat{r}(\boldx;\boldtheta^*)$. See Proposition \ref{prop.is.moments} in Appendix for a precise statement.
	
\subsection{Bounded Density Ratio Assumption: How Strong Is It?}
\label{sec.bounded.ratio.discussion}
In this section, we have considered a few stronger alternative assumptions to Assumptions \ref{assum.smooth.ratiomodel.nod} and \ref{assum.smooth.ratiomodel} in order to derive the boundedness of derivatives of the likelihood function, which are crucial to the proof of Theorem \ref{them.the.main.theorem}. However, it is natural to ask, how strong these assumptions are?

The main advantage of the density ratio based change detection described in \cite{liu2014ChangeDetection} is that such a method does not limit itself to certain distributions. Therefore, limiting the differences between two distributions help us avoid making assumptions on individual MNs.

In fact, the totally bounded density ratio assumption (Assumption \ref{assum.bounded.ratio}) is very well justified through our ``interest'': learning the changes between patterns (MNs). The power of  density ratio is the magnitude of the changes between two density functions. If the change itself is ``insanely'' big, such a change detection task would not make any sense in the first place. Unfortunately, such a restriction will rule out some common distributions for change detection, such as Gaussian-distribution whose density ratio value is 
not necessarily upper-bounded. Nonetheless, it does not forbid us to consider \emph{truncated Gaussian distributions} where we focus on a ``confined area'' as our interested region of learning changes.

To loosen this restriction, we utilize another fact that density ratio, like density functions, are naturally lower-bounded by 0. 
Therefore, Section \ref{sec.relaxed.assumptions} is dedicated to the case where the density ratio can decay unbounded toward 0 and thus it allows us to consider the sufficient statistics $\boldf_t$ with unbounded $\ell_2$ norm. 
Illustrative figures of the applicability of our smoothness assumptions are given in Section \ref{fig.applicatility.smooth.ass}, Appendix.

\section{Synthetic Experiments}
\label{sec.exp}
In this section, we validate our theorem and compare KLIEP with a state of the art method on synthetic datasets. For a practical usage of KLIEP, see Section \ref{sec.gene} for details. The MATLAB code skeleton that is used for our experiments can be found at \url{http://www.ism.ac.jp/~liu/software.html}.

If all the \emph{sufficient conditions} in Theorem \ref{them.the.main.theorem} are satisfied, the solution of our optimization problem in \eqref{eq.obj.alter} should successfully recover the sparsity pattern in $\boldtheta^*$ with high probability. Therefore, we can validate our theorem by examining the probability of successful detection of changed edges, i.e., the proportion of the simulation where the method exactly recovers the support of the changed edges. 
We set the regularization parameter as a scaling variable: $\lambda_{n_p} = C\sqrt{\frac{\log m}{n_p}}$, where $C$ is a chosen constant, so the right side inequality of \eqref{eq.lambda.condition} may be satisfied at some point as $n_p$ grows. As $\log \frac{m^2+m}{2}$ is upper-bounded by $2\log m$ if $m>1$, the left side of \eqref{eq.lambda.condition} is also satisfied if $C$ is appropriately chosen.
Note that this is \emph{not} how the hyper-parameter is chosen in practice.

Now using the same reasoning illustrated in Section \ref{sec.sample.complexity}, we can deduce that 
when fixing $d$, the number of samples $n_p$ required for detecting the correct sparse changes grows linearly with $\log m$, so the success rate versus $n_p / \log m$ plot should align well for MNs with different number of nodes (dimensions) $m$.

Moreover, our theorem does not have ``a preference'' on any specific graph structure (such as trees or stars), nor the connectivity of each \emph{individual} MN. Therefore, as long as the number of changed edges $d$ is the same, the success rate plot should have similar behaviours for MNs with different structures. This is a unique feature of \emph{direct change detection} comparing to methods involving learning two separate MNs. See \ref{sec.exp.settings} in Appendix for detailed experimental settings.
\begin{figure*}[t]
	\subfigure[4-neighbour Lattice, Gaussian, without scaling with $\log m$. ]{
		\label{fig.exp.lattice}\includegraphics[width=.45\textwidth]{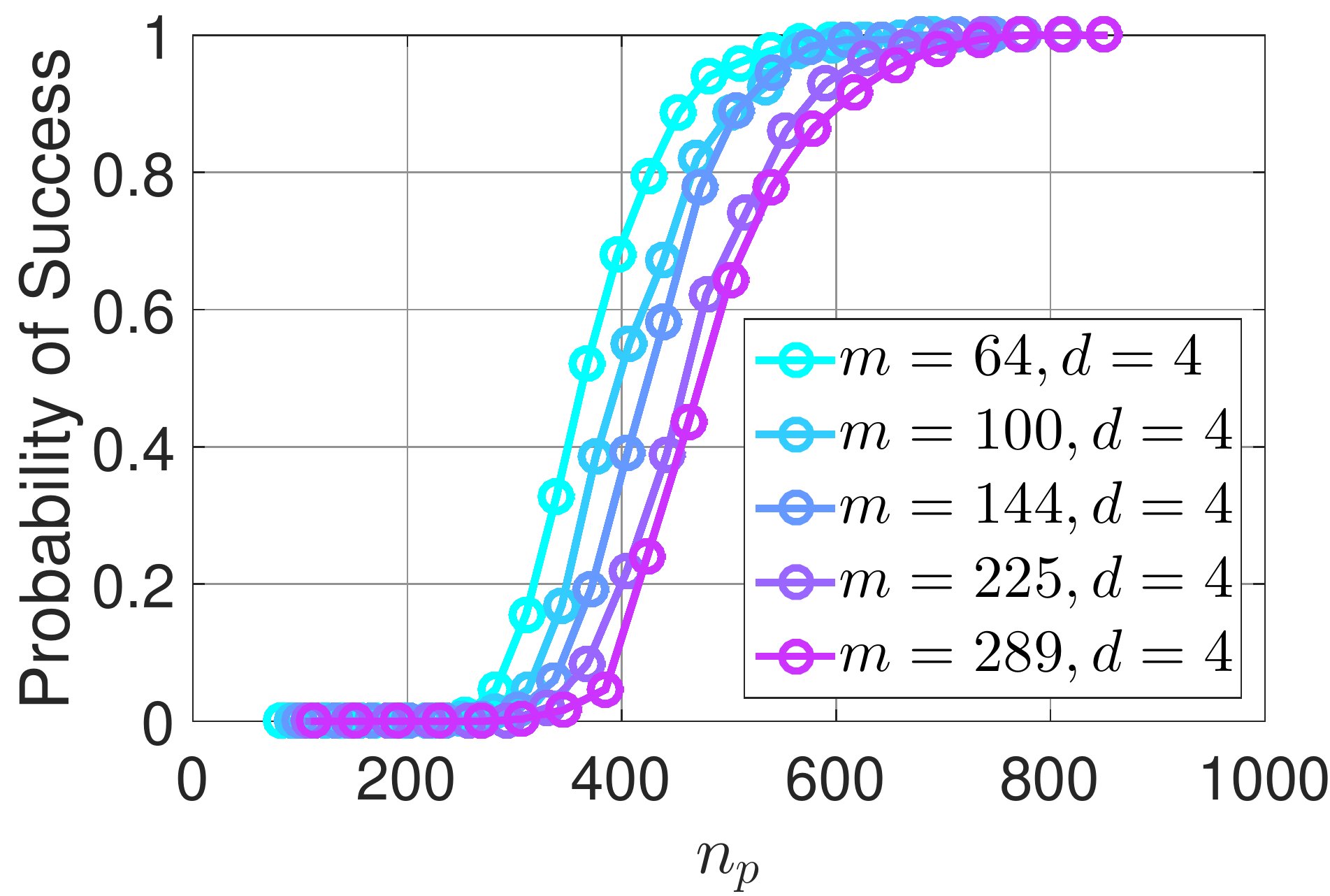}}
	\subfigure[Lattice, Gaussian, rescaled by $\log m.$]{\includegraphics[width=.45\textwidth]{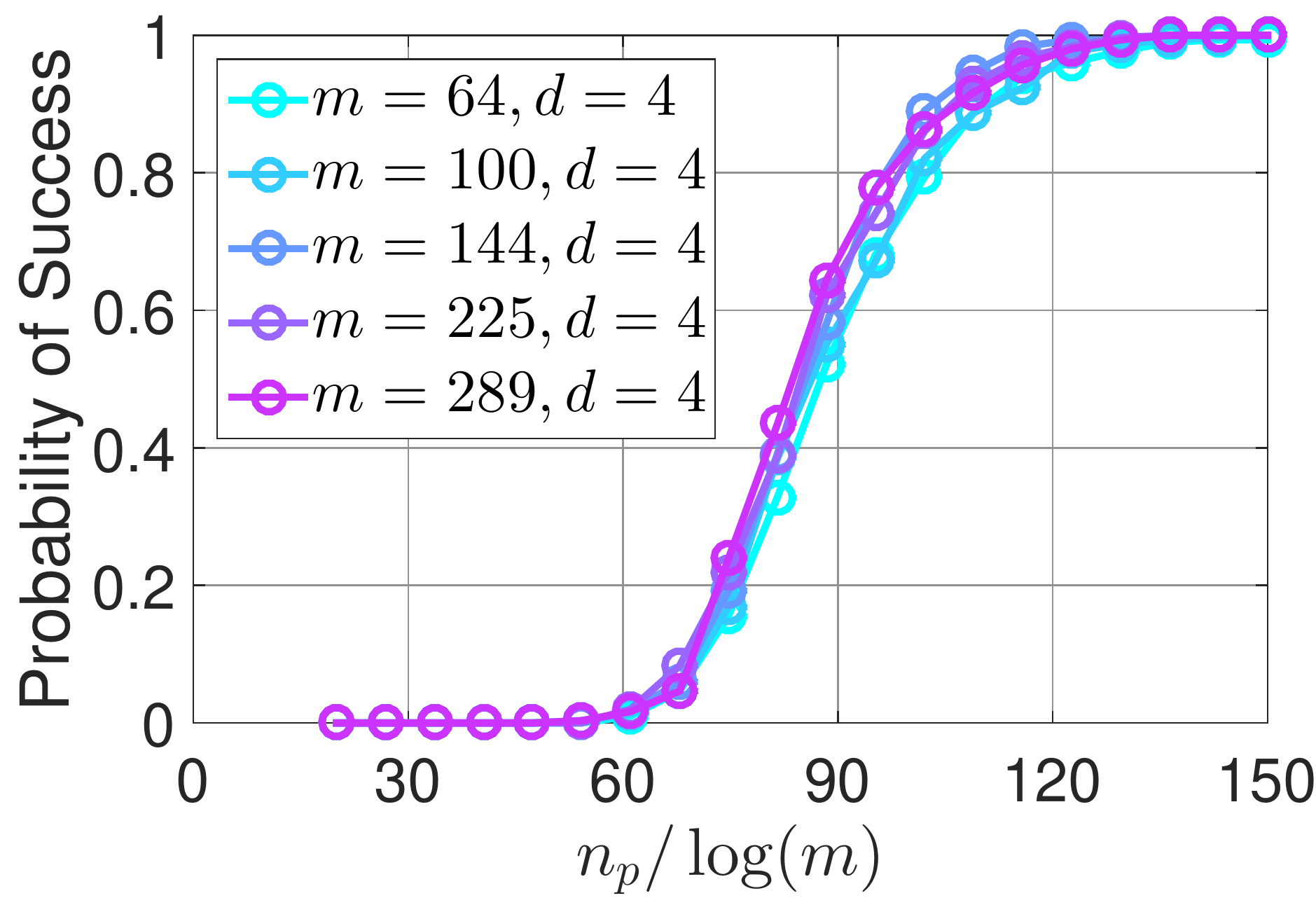}}
	\subfigure[Random structure ($\sim 5$\% connectivity), Gaussian, without scaling with $\log m$.]{\includegraphics[width=.45\textwidth]{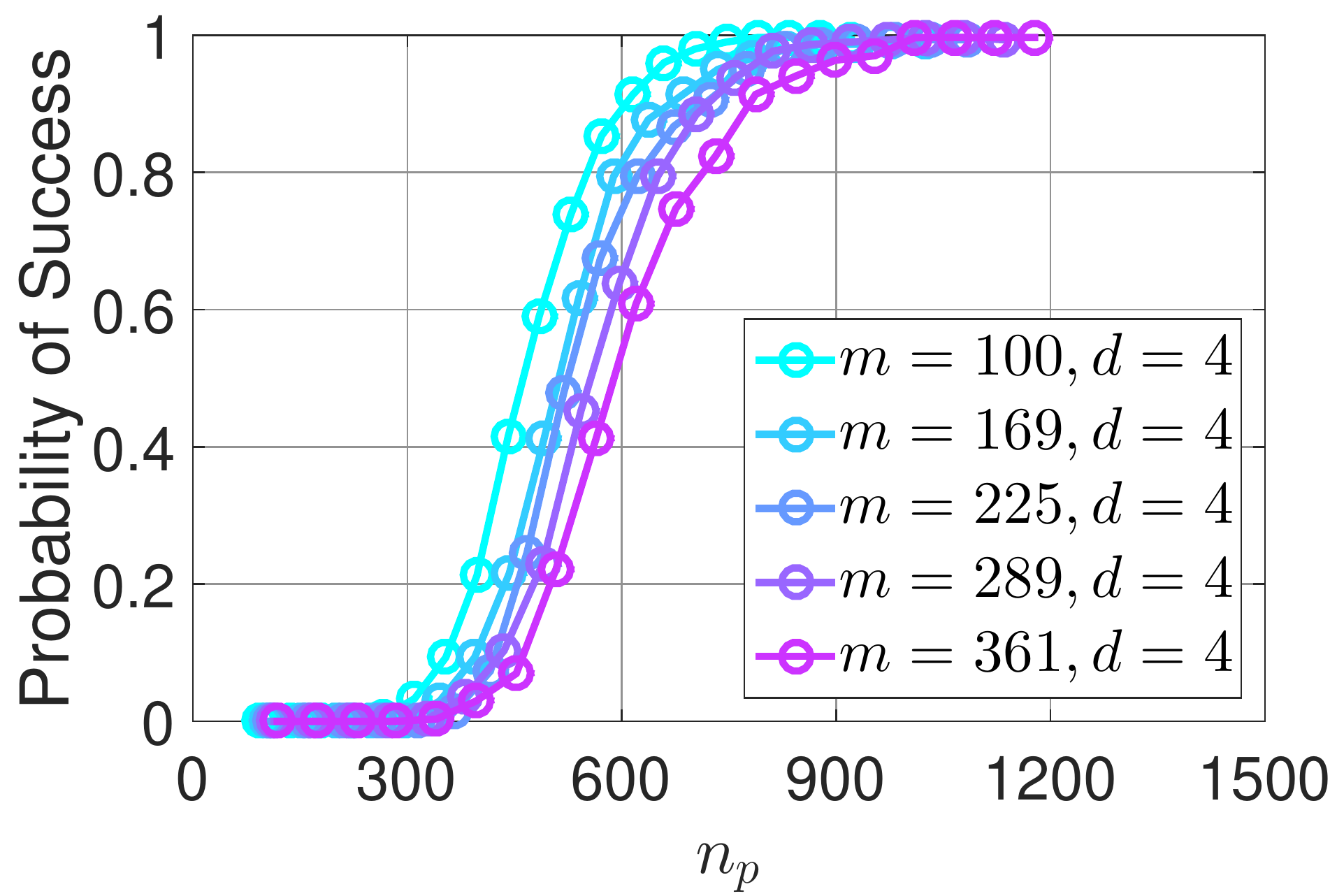}}
	\subfigure[Random structure, Gaussian,  rescaled by $\log m$.]{\includegraphics[width=.45\textwidth]{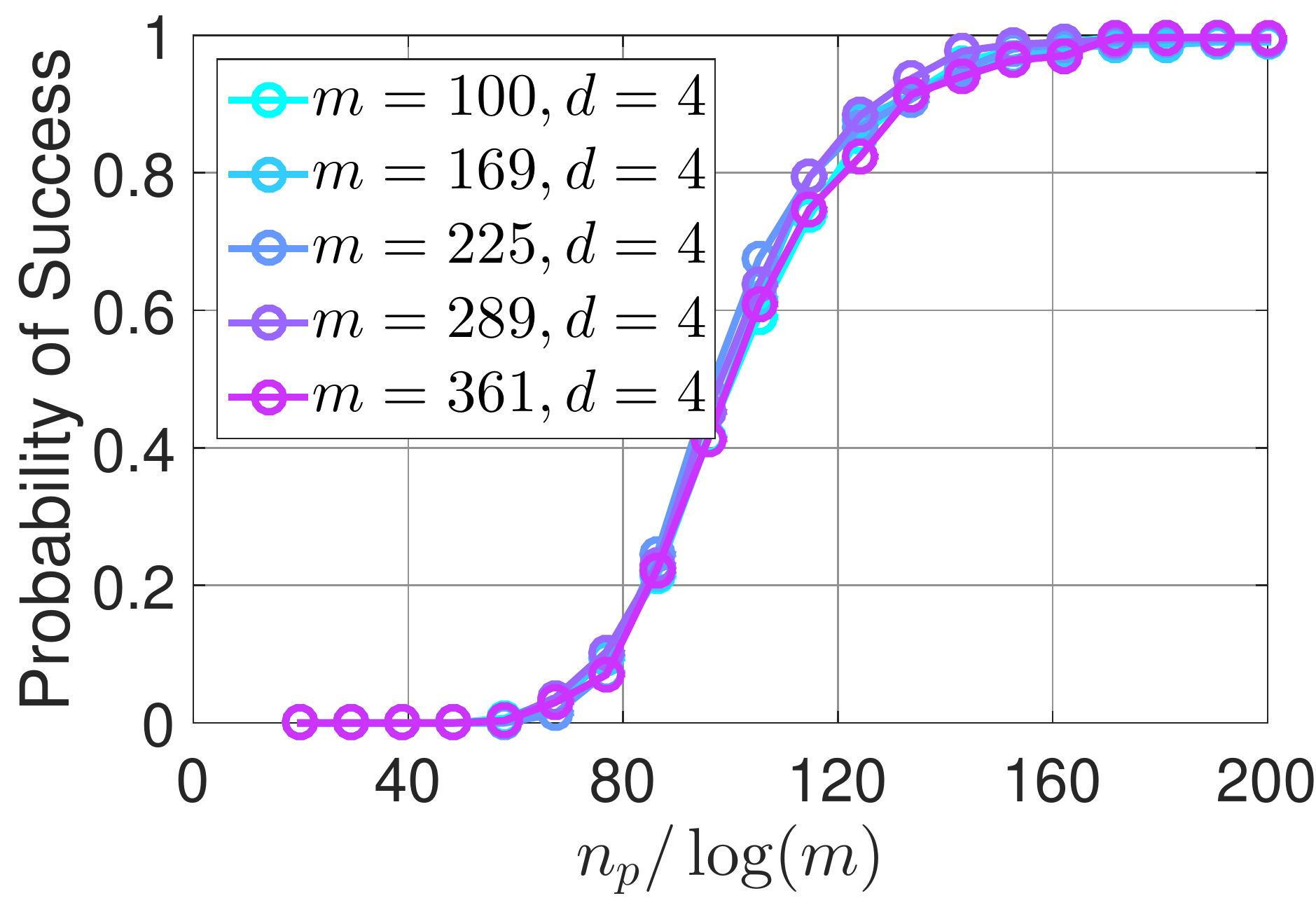}}
	\caption{The relationship between $n_p$ and $\log m$, while $n_q=1000$ is kept fixed. Success rates are computed over 300 runs, same below.}
	\label{fig.q.1000}
\end{figure*}
\subsection{$n_p$ versus $\log m$}
\label{sec.exp.gaussian}
We now illustrate above effects via experiments. Since the density ratio estimator involves two sets of data with size $n_p$ and $n_q$, to avoid complication, we first set $n_q$ to a sufficiently large value ($n_q = 1000$), and examine the relationship between $m$ and $n_p$ with $d=4$ fixed. The results in Figure \ref{fig.q.1000} show, all success rate curves align well over MNs of different sizes for both ``lattice'' or randomly shaped structures.
\begin{figure*}[t]
	\subfigure[Random structure, $n_q=1000$.]{\label{fig.1000.uns}\includegraphics[width=.45\textwidth]{rand1000_norm_0}}
	\subfigure[Random structure $n_q=500$]{\label{fig.500}\includegraphics[width=.45\textwidth]{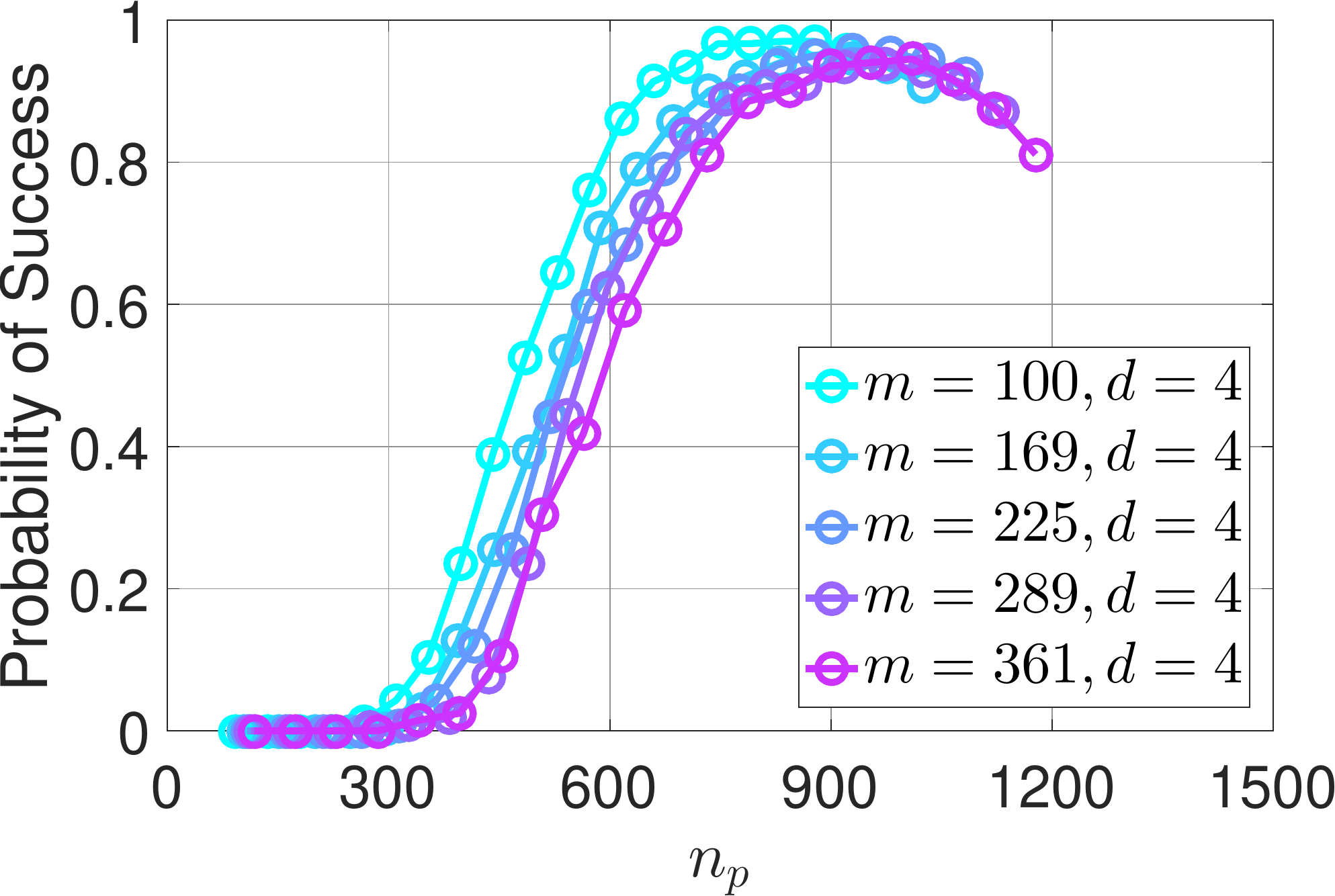}}
	\subfigure[Lattice, $n_q=0.01n_p^2$]{\label{fig.nq2}\includegraphics[width=.45\textwidth]{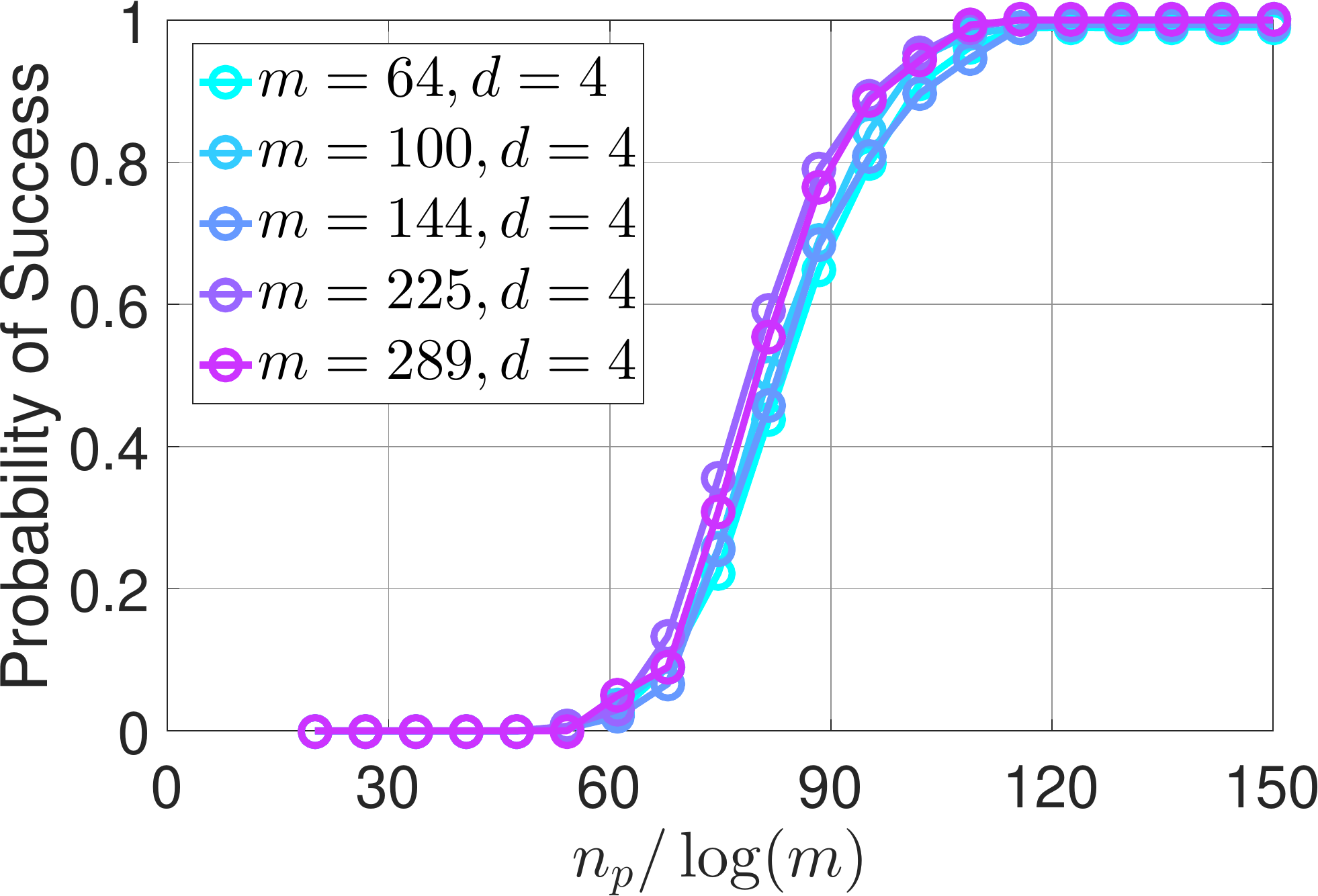}}
	\subfigure[Lattice, Truncated Gaussian, $n_p = n_q$]{
		\label{fig.truncated.gaussian}
		\includegraphics[width=.45\textwidth]{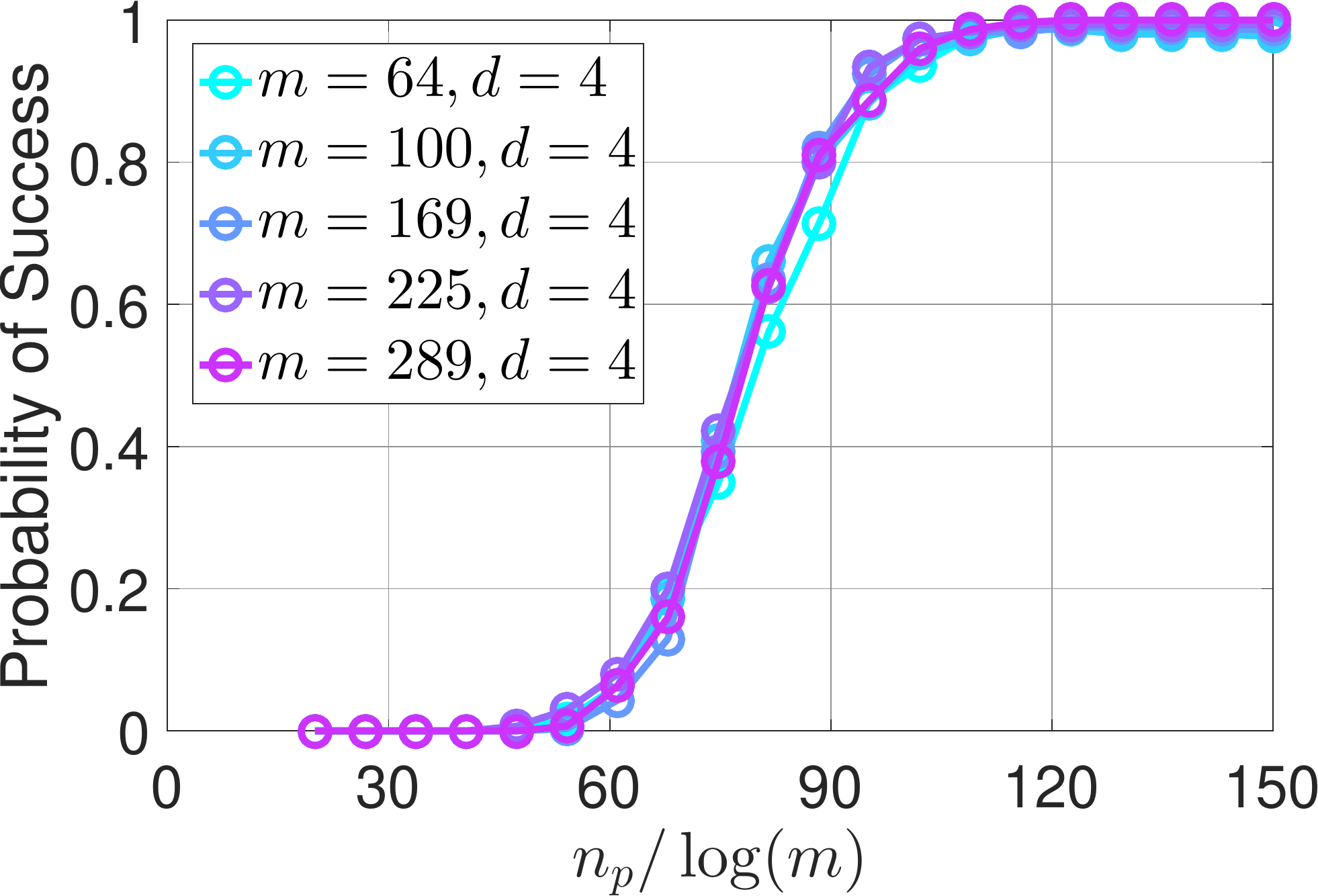}}
	\caption{The relationship between $n_p$ and $\log m$, when varying $n_q$. (d) Gaussian is truncated within a ball centred at the origin with the radius of $15$.}
\end{figure*}
\subsection{Changing $n_q$}\label{sec.exp.changing.nq} Our theorem also states that $n_q$ should also satisfy a certain relationship with $n_p$. In this experiment, we vary $n_q$ to observe the change of success rate pattern using the ``random'' and ``lattice'' dataset in the previous experiment. As we can see from Figure \ref{fig.500}, when $n_q = 500$, we \emph{cannot} reach 100\% success rate even for an ever growing $n_p$ and the probability of success even decays in the final stage.
This can be explained by \eqref{eq.them1.decay} in Theorem \ref{them.the.main.theorem}. If $n_q$ is large enough, the second term in \eqref{eq.them1.decay} can be safely ignored. However, as $\lambda_{n_p}$ decays when $n_p$ grows, a small $n_q$ may not be able to suppress the second term and the overall probability of success starts to decay eventually.
By setting $n_q = 0.01n_p^2$ we obtain a perfectly aligned result (Figure \ref{fig.nq2}), as our theorem indicated. It also shows that though $n_q$ is required to grow quadratically with $n_p$, it  can be rescaled by a small constant (in this case, 0.01). 
Moreover, Corollary \ref{corol.bounded.ratio} points out if the density ratio model is bounded, we may relax the coupling condition between $n_p$ and $n_q$. To verify this, we truncate a Gaussian distribution by rejecting samples fall out of a ball centered at origin with radius 15,
then let $n_q = n_p$. From Figure \ref{fig.truncated.gaussian} we can see a similar patter of success rates alignment.

\begin{figure*}[t]
	\begin{center}
		\subfigure[$d = 2,4,6,8,16,$ and $ 32, n_q = 0.01n^2_p$]{
			\label{fig.dd.1}
			\includegraphics[width=.45\textwidth]{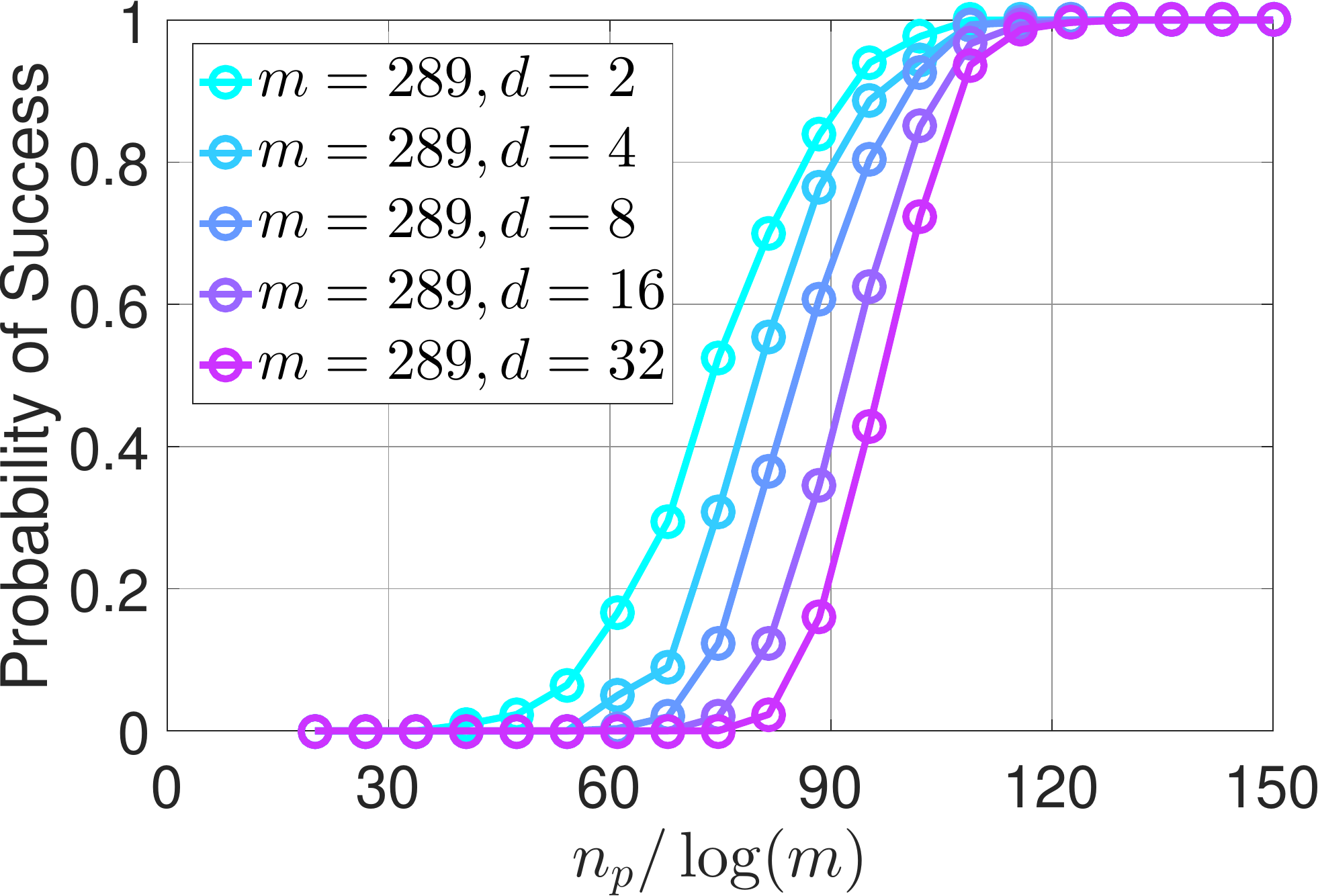}}
		\subfigure[``8-shaped Distribution'', $n_q = 5n_p$, without rescaling with $\log m$ ]{\label{fig.latt.nongaussian.diamond}\includegraphics[width=.45\textwidth]{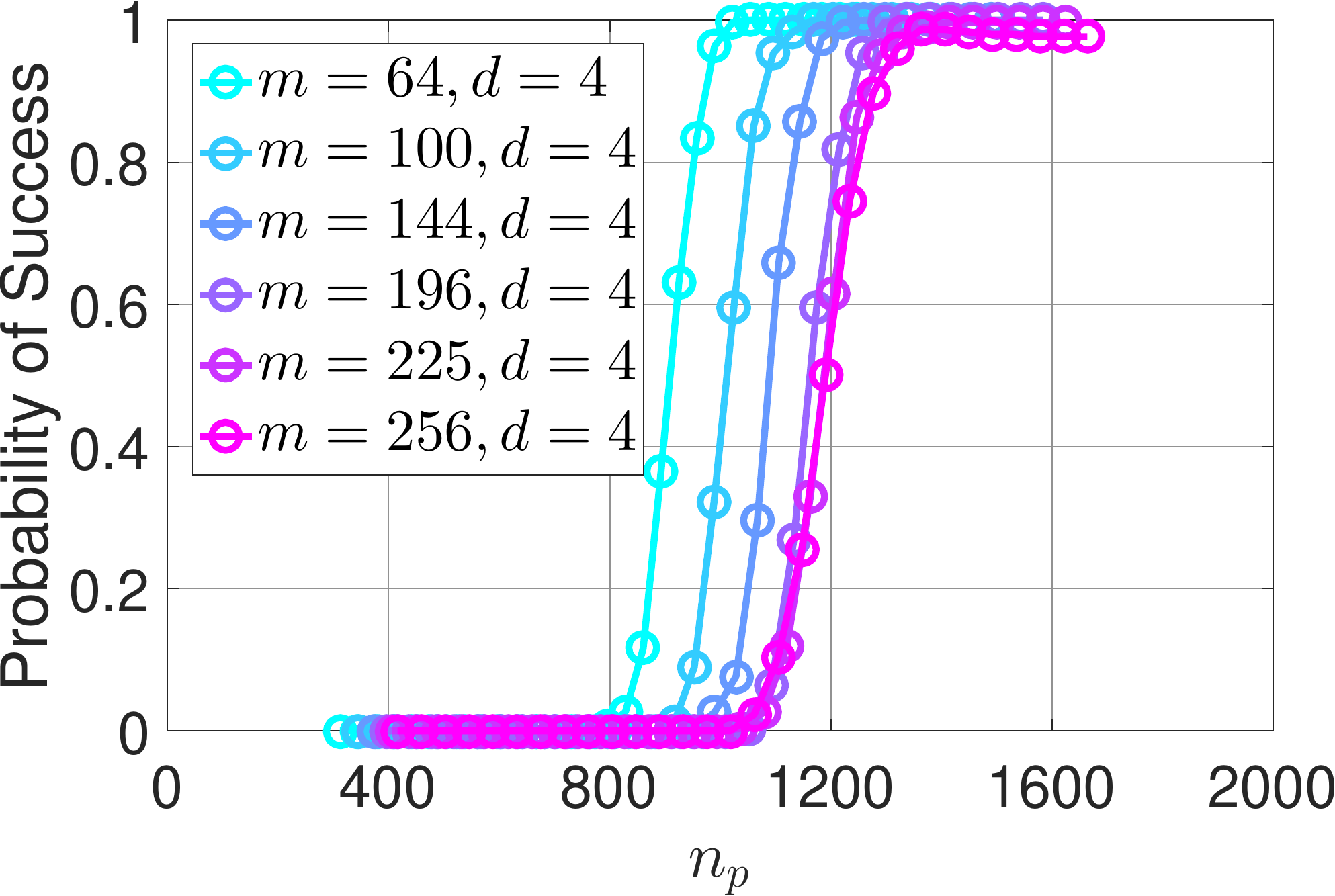}}
		\subfigure[``8-shaped'', $n_q = 5n_p$ ]{\label{fig.latt.nongaussian.8}\includegraphics[width=.45\textwidth]{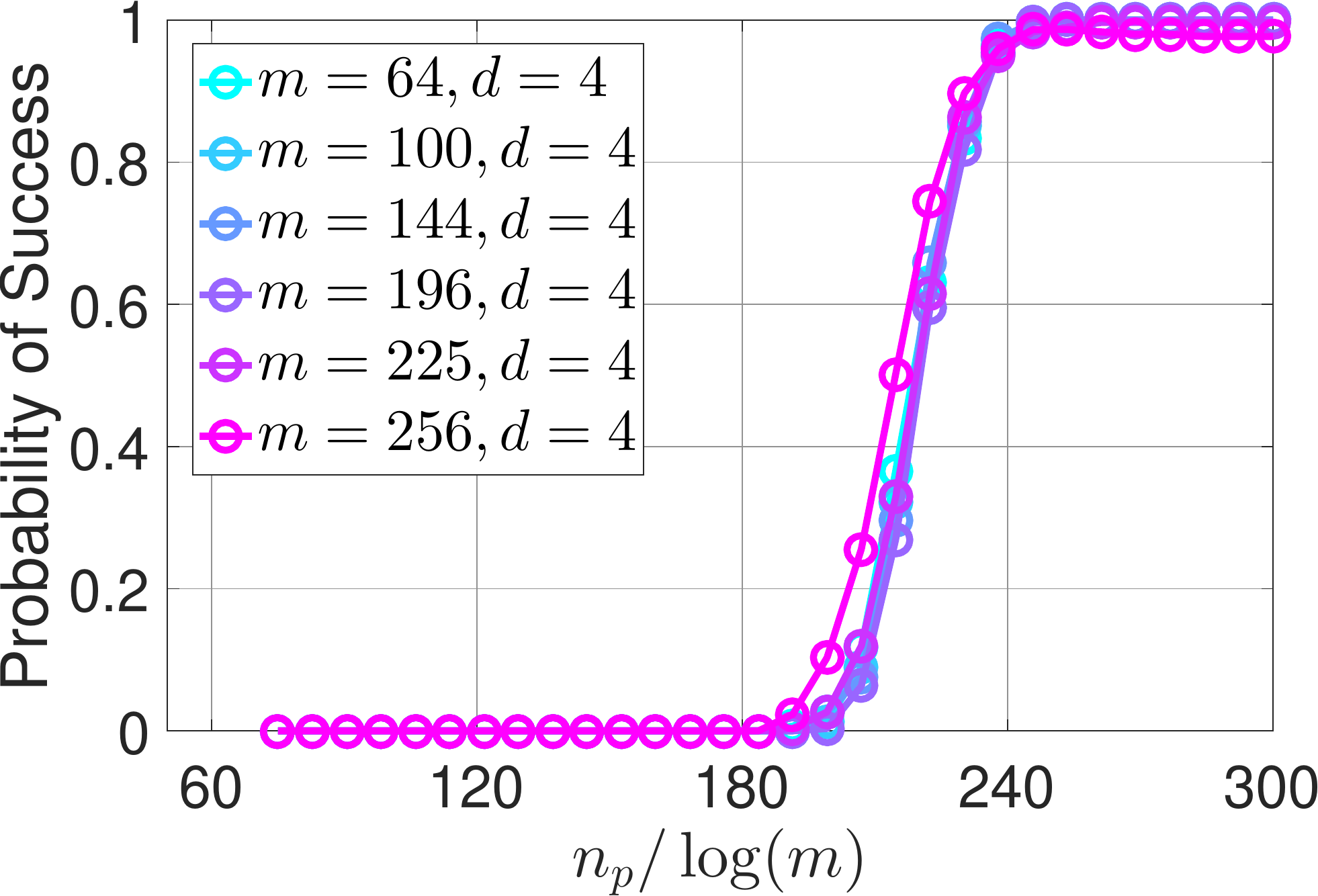}}
		\subfigure[ROC comparison]{
			\includegraphics[width=.45\textwidth]{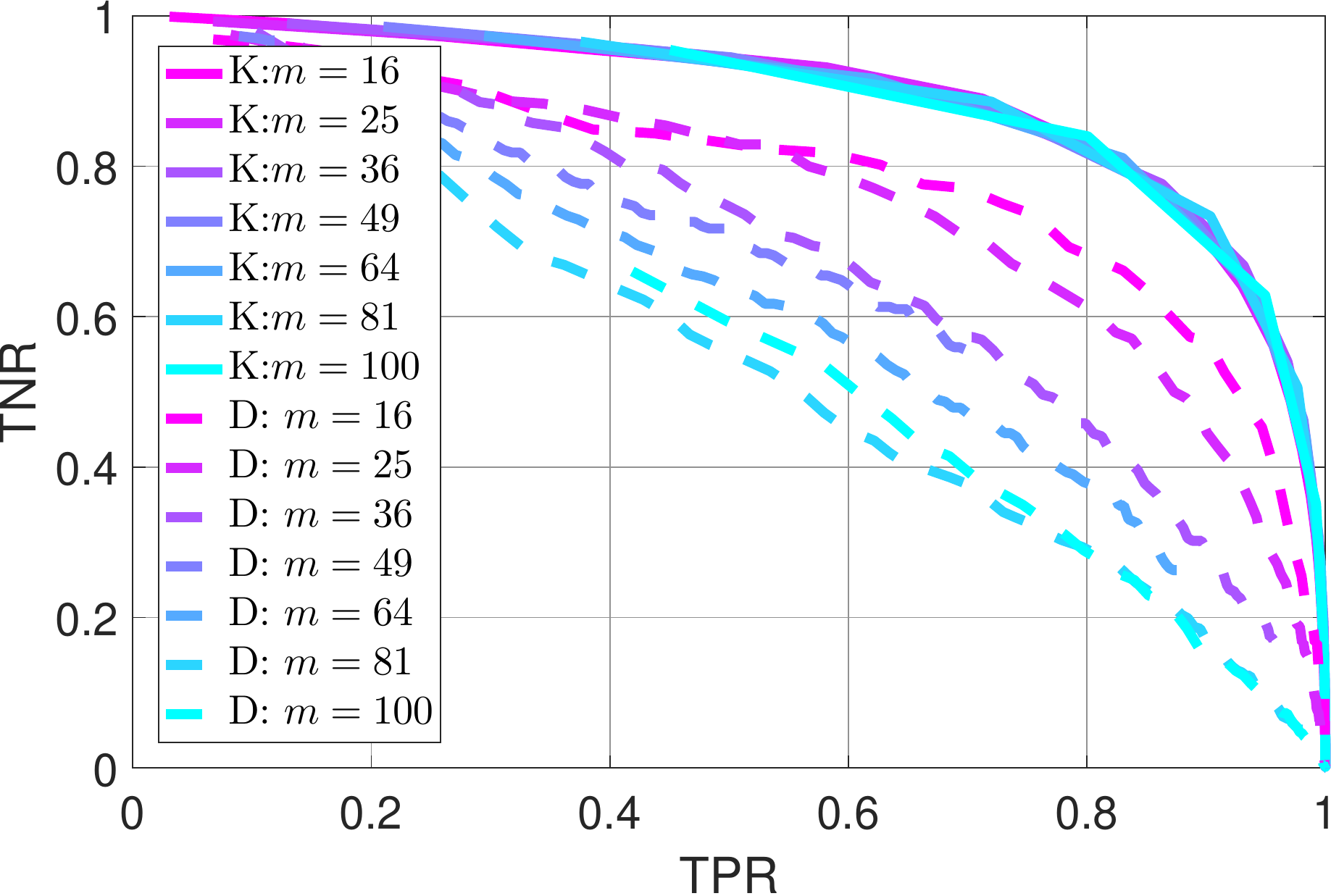}
			\label{fig.roc}
		}
	\end{center}
	\caption{The relationship between $n_p$ and $\log m$, when varying $d$ (a) or under a non-Gaussian distribution (b and c). (d) is the ROC comparison between (K)LIEP and (D)ifferential learning.}
\end{figure*}
\subsection{Changing $d$}\label{sec.exp.changing.d}
As Theorem \ref{them.the.main.theorem} indicates, $n_p$ should grow at least quadratically with $d$, the number of \emph{changed edges}. However, in our experiments, it shows such condition is overly conservative. Figure \ref{fig.dd.1} shows the success rate depends on $d$ only very mildly (see $n_p/\log m$ needed for success rates passing 80\%), which is a good news. This indicates that the bound can be tightened under certain regimes.
\subsection{Non-Gaussian distribution} \label{sec.exp.nongau}
We next perform experiments on the truncated ``8''-shaped distribution (see Figure \ref{fig.contour} in Appendix for details). The MNs are constructed as lattices, and the samples are generated via slice sampling \citep{slicesampling}. 
Figure \ref{fig.latt.nongaussian.8} shows, for the lattice grids with dimensions $m=60 \sim 256$, the curves of success rates are well-aligned with the setting $n_q = 5n_p$. 
\subsection{Comparison with Differential Network Leanring}
In this section, we conduct experimental comparison between KLIEP and the differential network learning proposed in \cite{zhao2014direct}. 
where the following constrained objective is minimized:
\begin{align*}
\hat{\boldDelta} = \argmin_{\boldDelta} \| \boldDelta \|_1 ~~ \text{subject to }
\| \hat{\boldSigma}^{(p)} \boldDelta \hat{\boldSigma}^{(q)} + \hat{\boldSigma}^{(p)} - \hat{\boldSigma}^{(q)}\|_{\infty} <\epsilon,
\end{align*}
where $\hat{\boldSigma}^{(p)}$ and $\hat{\boldSigma}^{(q)}$ are the sample covariance matrices, $\epsilon$ is a small constant and $\hat{\boldDelta}$ is the estimated differential network. To obtain a sparse solution, \citeauthor{zhao2014direct} thresholds the solution at a certain level $\tau$, i.e. for all $u,v$ that $\Theta_{u,v}<\tau$ is thresholded to $0$. 

We compare the performance between KLIEP and differential network learning by Receiver Operating Characteristic (ROC) plot of True Positive and True Negative rate described in \cite{zhao2014direct}. See Appendix \ref{sec.exp.diff} for more detailed settings.
Experimental data are constructed using the synthetic Gaussian 4-neighbor lattice MN grid described in Section \ref{sec.exp.gaussian}. However, instead of fixing some key variables, we adopt a more practical setting: $n_p = n_q = 50$ and $d = \sqrt{m}$ (i.e. the number of changed edges increases with $m$). The ROC curves of KLIEP (solid) and the differential method (dashed) are reported in Figure \ref{fig.roc}. 

As it can be seen from Figure \ref{fig.roc}, KLIEP performs consistently better than the differential network learning method. 
Particularly, when $m$ and $d$ increases, the performance almost remains the same for KLIEP while for the differential method, it decays significantly. 

Figure \ref{fig.roc} is plotted up to $m=100$, due to the fact that the differential method fails to return an output within a reasonable amount of time ($10$ hours) when $m$ is scaled up to $121$, using either authors' or our implementation. 
This is consistent with the authors' claim in \cite{zhao2014direct}, where the differential method requires more than $14$ hours to obtain the result for a MN sized $120$. 
As for KLIEP, we can compute a full ROC curve within $3$ hours even for $m=625$ when $n_p=n_q=50$. 

\section{Gene Expression Analysis}
\label{sec.gene}
\begin{figure*}[t]
	\includegraphics[width=.9\textwidth]{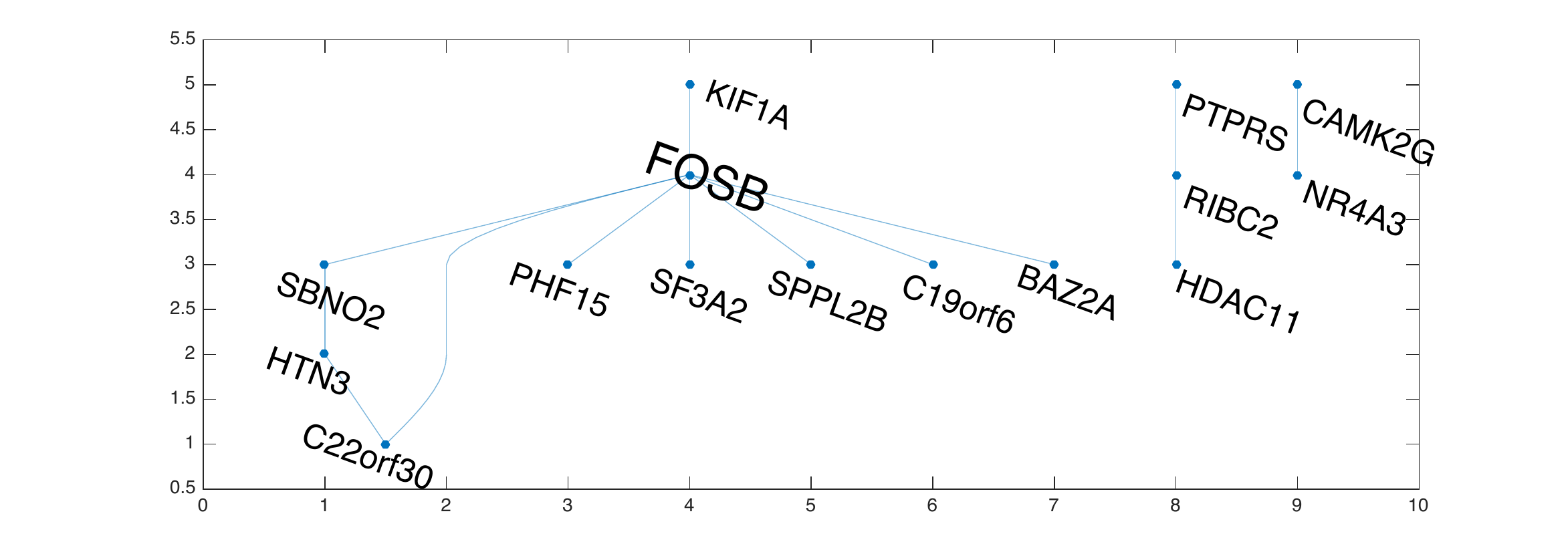}
	\caption{Gene expression change graph highlighting a hub node}
	\label{fig.gene1}
\end{figure*}
We applied KLIEP to gene expression profiles
for estimating changes in gene networks activated by
two different stimuli: epidermal growth factor (EGF) and
heregulin (HRG).
EGF is known to induce proliferation in MCF7 human breast cancer cells, while
HRG induces differentiation.
We used the gene expression data from \cite{nagashima2007quantitative}.
The expression profiles were
from cells stimulated with 
two controls, resulting in 29 EGF and 28 HRG sample conditions ($n_p = 29, n_q = 28$).
We extracted 1,835 genes ($m = 1835$) from the gene set
in \cite{nagashima2007quantitative} by selecting genes with high expression variance (at least
three times the mean of all variances). The values were log2-transformed and
normalized using the 2\% trimmed mean before finally getting the
respective ratios with the controls. The change graph is obtained by reducing the regularization parameter until $|\hat{S}| > 10$.

Figure \ref{fig.gene1} shows a learned change network.
Each node represents a gene, and each edge indicates
that the regulation between them is different
from EGF stimuli to HRG stimuli.
The leftmost large component includes 10 genes and 10 interactions,
and a hub node of the component is the FOSB gene,
which is a member of the Fos family of transcription factors,
regulating expressions of other genes.
This indicates that KLIEP successfully found that
FOSB regulates other genes without any prior knowledge,
and suggests that the regulation has been changed between stimuli.
Moreover, FOSB is known as a regulator of
cell proliferation and differentiation \cite{nagashima2007quantitative},
showing that the detected change network agrees with
biological knowledge.
The result confirms that KLIEP can
detect known biologically significant changes from
the expression data. We also swapped $P$ and $Q$, and confirmed that detected changes are similar.

To evaluate the reliability of this experiment, we also conducted bootstrap experiments which are detailed in Section \ref{sec.bootstrap} in the Appendix.
\section{Discussion}
\label{sec.discussion}
Let's further discuss some properties of KLIEP.
\subsection{Asymmetry of KLIEP}
It can be noticed that KLIEP is asymmetric, i.e. the structural change learning performance may differ when swapping $P$ and $Q$. This is caused by directly parametrizing the density ratio which is naturally asymmetric. Such asymmetry is also reflected in Theorem \ref{them.the.main.theorem}, where the sample complexity is not the same as $n_p = \Omega(d\log m(m+1)/2)$ and $n_q= \Omega(n_p^2)$, though they are ``symmetrized'' in Corollary \ref{corol.bounded.ratio} as $\min(n_p, n_q) = \Omega(d\log(m(m+1)/2))$ under the stronger bounded ratio assumption (Assumption \ref{assum.bounded.ratio}). These analysis shed some light on choosing $P$ and $Q$ in practice when two datasets are given: $Q$ should be ``wide'' and more ``spread-out'' compared to $P$, so that the boundedness of the density ratio can be guaranteed, since a ``sharp'' $Q$ would lead to very sharp and unbounded density ratio. This insight implies that KLIEP is in fact a \emph{directional} method, and achieves better performance when the change itself shows a tendency of evolving from one ``general'' state to another more ``specialized'' state.


Symmetric measures of differences between densities are also available without using the density ratio, such as $\ell$-2 distance \cite{sugiyama2013density}. In general, such difference cannot be parameterized using the difference between parameters of individual MNs, thus cannot serve as a difference measure for direct structure change learning. 
An alternative is the differential network learning \cite{zhao2014direct} where the objective function is symmetric for both $P$ and $Q$.  However such an objective is only sensible when both $P$ and $Q$ are Gaussian and it cannot be easily generalized to non-Gaussian cases.

Moreover, inspired by Corollary \ref{corol.bounded.ratio}, we may consider a ``symmetrized'' version of KLIEP by learning $p/q$ and $q/p$ independently, after which we mark changing edges by taking the \textit{union} of two sparsity patterns in both models. For the support recovery probability, we simply need a union bound applied on the current results. Such a union-support algorithm is similar to the node-wise regression that is discussed in \cite{Ravikumar_2010}.

\subsection{Comparison with Differential Network Learning \cite{zhao2014direct}}
Other than the asymmetry issues mentioned above, the theoretical analysis in this work and the one in \cite{zhao2014direct} share some key similarities and both have good guarantees in high dimensional setting. First, they both set assumptions on the true difference/changes between two MNs. In \cite{zhao2014direct}, such a constrain is explicitly expressed: The true differential network $\|\boldDelta^*\|_1$ is bounded by a constant that does not grow with $m$ (Condition 1, \cite{zhao2014direct}). In our analysis, the assumption is implicitly made via limiting the ``smoothness'' of the density ratio (Assumptions \ref{assum.smooth.ratiomodel.nod}, \ref{assum.smooth.ratiomodel}, \ref{assum.bounded.ratio}, \ref{assum.bounded.ratio.3}). 
Second, differential network learning prohibits strong ``connections'' among covariates. It assumes the magnitude of off-diagonal values in the covariance matrices decays as the number of changed edges increases (Condition 2, \cite{zhao2014direct}), while in our work, we assume feature vectors on unchanged edges should not have strong correlations with those on changed edges (Assumption \ref{assum.incoherence}). Both assumptions are imposed to satisfy the incoherence condition \cite{donoho2001uncertainty} needed for Lasso-type model selection. The convergence results are also similar: The required sample size scales with $\log m$ for differential network learning (Theorem 2, \cite{zhao2014direct}) and KLIEP ($\log (m^2+m) \approx 2\log m$); The difference between the true parameter and the estimated one vanishes at the speed of $\sqrt{\frac{\log m}{\mathrm{min}(n_p,n_q)}}$ in the $\ell$-2 norm. However, we manage to achieve this rate without any assumptions of Gaussianity, while differential network learning described in \cite{zhao2014direct} cannot be directly applied on non-Gaussian MNs.

\subsection{Joint Structural Change Learning}
Another emerging trend in graphical model structural learning is to learn multiple \emph{similar} MNs simultaneously \cite{Danaher2013jointGlasso}. For example, one may use the following fused-lasso penalized objective function to learn $K$-Gaussian MNs at the same time:
\begin{align}
\label{eq.mle.fused}
\min_{\boldTheta^{(1)},\dots,\boldTheta^{(K)}} \sum_{a=1}^{K}\ell_\mathrm{MLE}(\boldTheta^{(a)}) + \lambda_1\sum_{\substack{a < b\\ a,b \in \{1,\dots,K\}}}\|\boldTheta^{(a)}-\boldTheta^{(b)}\|_1
+ \lambda_2\sum_{a \in \{1, \dots, K\}} \|\boldTheta^{(a)}\|_1,
\end{align}
where $\ell_\mathrm{MLE}(\boldTheta)$ is the negative Gaussian MN log-likelihood parametrized by precision matrix $\boldTheta$ and the assumption is that all $\boldTheta^{(a)}$ share a similar structure. Though the final outputs are estimated sparse precision matrices, one may still obtain a differential graph by taking the differences. 
We refer the readers to \cite{liu2014ChangeDetection} for more empirical comparisons between KLIEP and Fused-lasso differential network learning. 

Following the same spirit, the KLIEP based change detection can also be utilized to learn \emph{changes} from $K$-MNs at the same time by assuming all MNs sharing a similar structure, and one of the options is 
\begin{align}
\label{eq.KLIEP.fused}
\min_{\boldtheta^{a, b}, a,b \in \{1\dots K\}, a\neq b} \sum_{\substack{a,b \in \{1\dots K\}\\a \neq b}}^{K}\ell^{a, b}_\mathrm{KLIEP}(\boldtheta^{a, b}) + &\lambda \sum_{ \substack{a,b,c \in \{1\dots K\}}}\sum_{u,v = 1, u\le v}^m\|\boldtheta_{u,v}^{a, b}+\boldtheta_{u,v}^{b, c}\|_2
\end{align}
where $\ell^{a, b}_\mathrm{KLIEP}(\boldtheta^{a, b})$ is the KLIEP log-likelihood of density ratio $p^{a}/p^{b}$, $a,b,c$ are the triples from $1 \ldots K$ and by definition we have \[\boldtheta_{u,v}^{a, b}+\boldtheta_{u,v}^{b, c} \equiv \boldtheta_{u,v}^{a} - \boldtheta_{u,v}^{b}+ \boldtheta_{u,v}^{b} -  \boldtheta_{u,v}^{c} \equiv \boldtheta_{u,v}^{a} -  \boldtheta_{u,v}^{c},\] and let $\boldtheta_{u,v}^{a,a} \equiv6 \boldzero$.

The objective described in \eqref{eq.mle.fused} still requires a tractable likelihood and cannot be easily generalized to non-Gaussian models. It also assumes each individual MN is sparse. However, the variation of KLIEP given in \eqref{eq.KLIEP.fused} does not impose any assumptions on each individual MNs and can be computed even in non-Gaussian cases. 
We will explore this algorithm and other possible alternatives in future works.

\subsection{Uncertainty of Estimation}
In this work, we have only focused on the successful change detection (Theorem \ref{them.the.main.theorem}) which states $P\left(S=\hat{S} \text{ \& } S^c = \hat{S}^c\right)$ converges to one eventually. However, under very a low sample regime, the learned change structure may have high uncertainty, i.e., the results may be sensitive to some minor modifications or the randomness of the dataset. This would degrade the reliability of the obtained results.
To evaluate such an uncertainty, we performed bootstrap experiments on the gene dataset used in Section \ref{sec.gene}, and the results are presented in Section \ref{sec.bootstrap} in the Appendix. Such a method is useful for practitioners to measure the reliability of an estimated network. However, the rigours quantification of such an uncertainty is still an open question and would be an important future direction to pursue. 


\bibliography{main}

\newpage

\begin{changemargin}{-1cm}{-1cm}
	\appendix
	
\section{Image Change Detection}
\begin{figure*}[t]
	\subfigure[4:30PM, 7th Mar, 2016]{
		\label{fig.img.a2}
		\includegraphics[width=.4\textwidth]{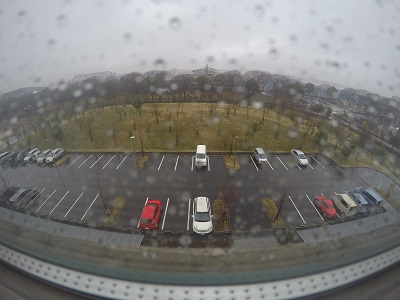}}
	\subfigure[5:30PM, 7th Mar, 2016]{
		\label{fig.park2}
		\includegraphics[width=.4\textwidth]{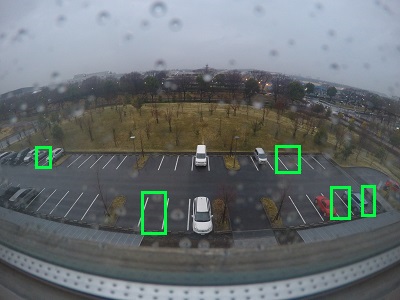}}\\
	\subfigure[Construct samples using sliding windows (red boxes).]{
		\label{fig.construct.samples}
		\includegraphics[width=.4\textwidth]{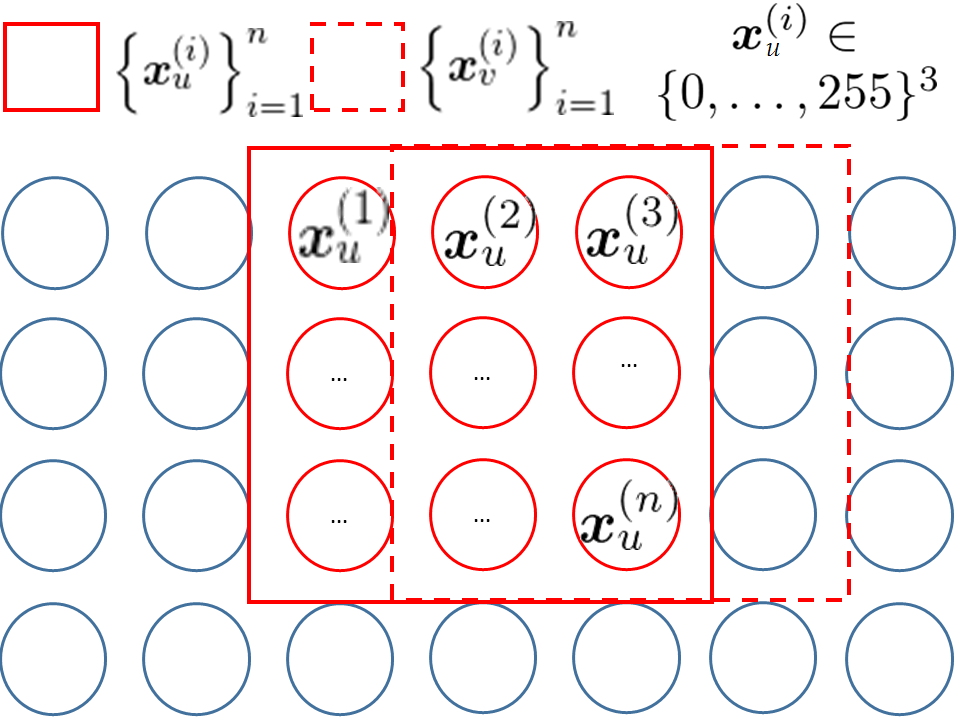}}
	\subfigure[Detected changes, we set $\psi(\boldx_u, \boldx_v) = \exp\left(-\frac{\|\boldx_u - \boldx_v\|^2}{0.5}\right)$]{
		\label{fig.detected}
		\includegraphics[width=.4\textwidth]{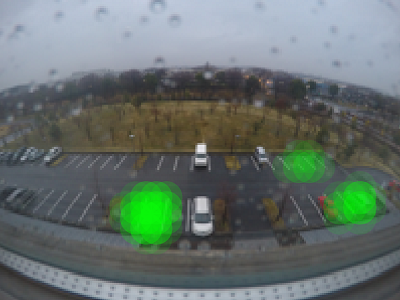}}
	\caption{Detecting changes of parking patterns from two photos.}
\end{figure*}
Two photos were taken in a rainy afternoon using a camera pointing at the parking lot of ISM. In this task, we are interested in learning the changes of the parking patterns marked by green boxes in Figure \ref{fig.park2}. 
As we can see from Figure \ref{fig.img.a2} and \ref{fig.park2}, the light conditions and  positions of raindrops vary in two pictures. 

To construct samples, we use windows of pixels (Figure \ref{fig.construct.samples}). 
Each window is a dimension of a dataset, and the samples are the pixel RGB values within this window. By sliding the window across the entire picture, we may obtain samples of different dimensions. Two sets of data can be obtained by using this sample generating mechanism over two images. 

Assuming an image can be represented by an MN of windows, changes of pixels values within a window may cause changes of ``interactions'' between neighboring windows. In other words, we can discover a change by looking at the change of the dependency of pixel values between a certain window and its neighbours. 
This is more advantageous than simply looking at the pixel values  since changing the brightness of a picture may increase the pixel values in many windows simultaneously, even if the ``contrast'' between two windows does not change by much. 

By applying KLIEP on such two sets of data and highlighting adjacent window pairs that are involved in the changes of pairwise interactions, we may spot changes between two images. 
In our experiment, we use sliding windows of size $16 \times 16$ on a $200\times 150$ image, generating two sets of samples with $m=999$ and $n_p=n_q=256$. We reduce $\lambda$ until $|\hat{S}| > 40$.  The spotted changes were plotted in Figure \ref{fig.detected}. It is can be seen that KLIEP has correctly labeled almost all changed parkings between two images except one missing on the left. 

\section{Proofs}
\subsection{Proof of Lemma \ref{lem.1.maintext}}
\label{sec.proof.lemma.1}
In the appendix, we denote $\ell\left(\boldtheta\right)$ as the \emph{negative} likelihood objective function $\ell_{\mathrm{KLIEP}}(\boldtheta)$.
\begin{proof}
Consider two optimals $\hat{\boldtheta}, \tilde{\boldtheta}$ of \eqref{eq.obj.alter}, $\hat{\boldtheta}$ has the correct sparsity over zero elements. We have the following equality
\begin{align}
\label{eq.lemma1.equality}
\ell(\hat{\boldtheta}) + \lambda_{n_p} \sum_{t\in S \cup S^c}\|\hat{\boldtheta}_{t}\| = \ell(\tilde{\boldtheta}) + \lambda_{n_p} \sum_{t\in {S \cup S^c}}\|\tilde{\boldtheta}_{t}\|.
\end{align}
noting $\hat{\boldz}_{t} \in \nabla \|\boldtheta_t\|, t\in S\cup S^c $,
\begin{align}
\label{eq.lemma1.ineq1}
\lambda_{n_p}\|\tilde{\boldtheta}_{t}\| \ge \lambda_{n_p} \langle \tilde{\boldtheta}_{t}-\hat{\boldtheta}_{t},\hat{\boldz}_{t} \rangle + \lambda_{n_p} \|\hat{\boldtheta}_{t}\|,
\end{align}
substituting \eqref{eq.lemma1.ineq1} into \eqref{eq.lemma1.equality} we have the following inequality:
\begin{align*}
\ell(\hat{\boldtheta}) + \lambda_{n_p} \sum_{t\in S \cup S^c}\|\hat{\boldtheta_t}\| \ge
\ell(\tilde{\boldtheta}) + \lambda_{n_p} \sum_{t\in S \cup S^c}\langle \tilde{\boldtheta}_{t}-\hat{\boldtheta}_{t},\hat{\boldz}_{t} \rangle + \lambda_{n_p} \sum_{t\in S \cup S^c} \|\hat{\boldtheta}_t\|.
\end{align*}
Due to convexity of $\ell(\boldtheta)$ and $\nabla \ell (\hat{\boldtheta}) = -\lambda_{n_p}\hat{\boldz}$ as stated in \eqref{eq.duality}, the leftmost term is further lower bounded as:
\begin{align*}
\ell(\hat{\boldtheta}) + \lambda_{n_p} \sum_{t\in S \cup S^c} \|\hat{\boldtheta}_{t}\| \ge& \ell(\hat{\boldtheta}) + 
\lambda_{n_p} \sum_{t\in S \cup S^c} \langle \tilde{\boldtheta}_{t}-\hat{\boldtheta}_{t},-\hat{\boldz}_{t} \rangle 
+ \lambda_{n_p} \sum_{t\in S \cup S^c} \langle \tilde{\boldtheta}_{t}-\hat{\boldtheta}_{t},\hat{\boldz}_{t} \rangle + \lambda_{n_p} \sum_{t\in S \cup S^c} \|\hat{\boldtheta}_t\|\\
\ge& \ell(\hat{\boldtheta}) + \lambda_{n_p} \sum_{t\in S \cup S^c} \|\hat{\boldtheta}_{t}\|
\end{align*}
The above suggests all the inequality we have used to lower-bound $\ell(\hat{\boldtheta}) + \lambda_{n_p} \sum_{t\in S \cup S^c} \|\hat{\boldtheta}_{t}\|$ should take the exact equality. Therefore \eqref{eq.lemma1.ineq1} should take the exact equality:
\begin{align*}
\lambda_{n_p}\|\tilde{\boldtheta}_{t}\| &= \lambda_{n_p} \langle \tilde{\boldtheta}_{t}-\hat{\boldtheta}_{t},\hat{\boldz}_{t} \rangle + \lambda_{n_p} \|\hat{\boldtheta}_{t}\|\\
&= \lambda_{n_p} \langle \tilde{\boldtheta}_{t},\hat{\boldz}_{t} \rangle,
\end{align*}
which holds only if $\tilde{\boldtheta}_{t} = \boldzero$, given $\|\hat{\boldz}_{t}\| < 1$.

Moreover, if $\mathcal{I}_{SS}$ is invertible, it can be shown that $\hat{\boldtheta}$ is the unique solution. 
\end{proof}

\subsection{Proof of Lemma \ref{lem.2.maintext}}
\label{sec.proof.lemma.2}
Before proving the lemma, we show the boundedness of the deviation between true normalization term $N$ and sample approximated term $\hat{N}$. For conveniences, we denote $A(\boldtheta)$ as $\log N(\boldtheta)$ and $\hat{A}(\boldtheta)$ as $\log \hat{N}(\boldtheta)$.

\begin{prop}
\label{prop.norm.bound}
For any vector $\bolddelta \in \mathbb{R}^{\text{dim}(\boldtheta^*)}$ such that $\|\bolddelta\|\leq \|\boldtheta^*\|$ and some constant $c$,
\begin{align}
\label{eq.prop1}
P\left\{ A(\boldtheta^* + \bolddelta) - A(\boldtheta^*) - \left[\hat{A}(\boldtheta^* + \bolddelta)- \hat{A}(\boldtheta^*) \right]   \geq c \right\} \leq 2\exp \left( -\frac{ c^2 n_q }{200}\right).
\end{align}
\end{prop}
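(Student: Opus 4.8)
The plan is to first rewrite the left-hand side of \eqref{eq.prop1} as a comparison of two sample means of the density ratio over the $Q$-sample, and then run a sub-Gaussian concentration argument on each. The starting point is the identity
\begin{align*}
\hat{N}(\boldtheta) = N(\boldtheta)\cdot\frac{1}{n_q}\sum_{i=1}^{n_q} r(\boldx^{(i)};\boldtheta),
\end{align*}
which holds because $\exp\!\big(\sum_{u\ge v}\boldtheta_{u,v}^\top\boldpsi\big) = N(\boldtheta)\, r(\boldx;\boldtheta)$ by the definition \eqref{ratio.model.def} of the ratio model. Writing $\bar r^* := \frac{1}{n_q}\sum_i r(\boldx^{(i)};\boldtheta^*)$ and $\bar r^{\bolddelta} := \frac{1}{n_q}\sum_i r(\boldx^{(i)};\boldtheta^*+\bolddelta)$, and using $\hat A(\boldtheta)-A(\boldtheta)=\log(\hat N(\boldtheta)/N(\boldtheta))$, the quantity inside the probability telescopes to
\begin{align*}
A(\boldtheta^*+\bolddelta)-A(\boldtheta^*)-\big[\hat A(\boldtheta^*+\bolddelta)-\hat A(\boldtheta^*)\big] = \log\frac{\bar r^*}{\bar r^{\bolddelta}} = \log\bar r^* - \log\bar r^{\bolddelta}.
\end{align*}
Thus the whole problem reduces to controlling a difference of logarithms of two sample means.

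Next I would establish concentration for each mean. By the normalization constraint $\int q(\boldx)r(\boldx;\boldtheta)\dx=1$, which holds for every $\boldtheta$ by construction of $N(\boldtheta)$, both $\bar r^*$ and $\bar r^{\bolddelta}$ have expectation $1$ under $Q$. Assumption \ref{assum.smooth.ratiomodel.nod} states that $r(\boldx;\boldtheta^*+\bolddelta)-1$ is sub-Gaussian with $\mathbb{E}_q[\exp(a(r-1))]\le\exp(10a^2)$ (the case $\bolddelta=0$ being included). Tensorizing this moment generating function over the $n_q$ i.i.d.\ draws and applying a Chernoff bound yields the one-sided estimates
\begin{align*}
P\big(\bar r^*-1\ge t\big)\le\exp\!\Big(-\tfrac{n_q t^2}{40}\Big), \qquad P\big(1-\bar r^{\bolddelta}\ge t\big)\le\exp\!\Big(-\tfrac{n_q t^2}{40}\Big),
\end{align*}
valid for every $t>0$.

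Finally I would convert the log-ratio event into linear-deviation events through the inclusion
\begin{align*}
\big\{\log\bar r^*-\log\bar r^{\bolddelta}\ge c\big\}\subseteq\big\{\bar r^*-1\ge \tfrac{c}{\sqrt5}\big\}\cup\big\{1-\bar r^{\bolddelta}\ge \tfrac{c}{\sqrt5}\big\},
\end{align*}
which is valid because on the complement $\log\bar r^*-\log\bar r^{\bolddelta}\le\log\frac{1+c/\sqrt5}{1-c/\sqrt5}\le c$. A union bound together with the two concentration estimates at threshold $t=c/\sqrt5$ then gives $P\le 2\exp(-n_q c^2/200)$, which is exactly \eqref{eq.prop1}. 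The main obstacle is this last step: the logarithm is asymmetric, so $\log\bar r^*$ is controlled from above immediately via $\log(1+x)\le x$, but $-\log\bar r^{\bolddelta}$ must be handled through the lower tail of $\bar r^{\bolddelta}$, and the constant $200=5\cdot 40$ is forced only once the elementary inequality $\log\frac{1+u}{1-u}\le\sqrt5\,u$ (equivalently, the event inclusion above) is verified. This comparison holds for $c$ in the moderate range relevant to the application, where the threshold $c/\sqrt5$ is what produces the $1/200$ in the exponent.
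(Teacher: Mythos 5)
Your proof is correct and follows essentially the same route as the paper's: the same identity $\hat N(\boldtheta)=N(\boldtheta)\cdot\frac{1}{n_q}\sum_{i=1}^{n_q} r(\boldx^{(i)};\boldtheta)$, sub-Gaussian concentration of the two sample means of the ratio via Assumption \ref{assum.smooth.ratiomodel.nod}, a union bound over the two deviation events, and the same elementary comparison (your $\log\frac{1+u}{1-u}\le\sqrt{5}\,u$ with $u=c/\sqrt{5}$ is precisely the paper's inequality $\left(\frac{e^{c}-1}{e^{c}+1}\right)^2>\frac{c^2}{5}$ for $c\le 1$, restated via $\tanh$), which is where the constant $200=5\cdot 40$ arises in both arguments. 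Your explicit Chernoff tensorization over the $n_q$ i.i.d.\ draws with one-sided tails is in fact slightly tidier than the paper's (which cites its single-sample two-sided tail bound and ends with a factor $4$ rather than the stated $2$, whereas your version recovers the factor $2$), and you correctly flag the moderate-$c$ restriction that the paper's proof also requires.
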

\begin{proof}
Since $\hat{N}(\boldtheta^* + \bolddelta) = \frac{1}{n_q} \sum_{i=1}^{n_q}r\left(\boldx^{(i)}; \boldtheta^*+\bolddelta\right) \cdot N(\boldtheta^*+\bolddelta)$, from Proposition \ref{eq.assum.well.behave.nod} we have
\[
P( \hat{N}(\boldtheta^* + \bolddelta) - N(\boldtheta^* + \bolddelta) \le -\epsilon N(\boldtheta^* + \bolddelta) ) \le 2\exp\left(- n_q \epsilon^2 /40 \right)
\]
A few lines of algebra can show that 
\[
P\left( A(\boldtheta^* + \bolddelta) - \hat{A}(\boldtheta^* + \bolddelta) \geq -\log (1- \epsilon) \right) \leq 2\exp\left(- n_q \epsilon^2 /40 \right).
\]

Similarly, 
\[
P\left( \hat{A}(\boldtheta^*) - A(\boldtheta^*) \geq \log (1+\epsilon) \right) \leq 2\exp\left(- n_q  \epsilon^2 /40 \right).
\]
Applying union bound, we can get
\[
P\left( A(\boldtheta^* + \bolddelta) - A(\boldtheta^*) - \left[\hat{A}(\boldtheta^* + \bolddelta)- \hat{A}(\boldtheta^*)  \right]   \geq \log \frac{1+\epsilon}{1-\epsilon} \right) \leq 4\exp\left(- n_q \epsilon^2 /40\right).
\]
Set $c = \log \frac{1+\epsilon}{1-\epsilon}$, then
\begin{align*}
P\left\{A(\boldtheta^* + \bolddelta) - A(\boldtheta^*) - \left[\hat{A}(\boldtheta^* + \bolddelta)- \hat{A}(\boldtheta^*)  \right]  \geq c \right\} &\leq 4\exp \left( -\frac{n_q}{40}\left(\frac{\exp(c)-1}{\exp(c)+1}\right)^2\right), 
\end{align*}
particularly when $c \le 1$, which is often the case in practice, 
\begin{align}
\label{eq.prop1.simplification}
P\left\{A(\boldtheta^* + \bolddelta) - A(\boldtheta^*) - \left[\hat{A}(\boldtheta^* + \bolddelta)- \hat{A}(\boldtheta^*) \right]  \geq c \right\} &\leq 4\exp \left( -\frac{ c^2 n_q }{200}\right), 
\end{align}
by using the fact that $\left(\frac{\exp(c)-1}{\exp(c)+1}\right)^2 > \frac{c^2}{5}$ when $c \le 1$ (see Figure. \ref{fig.exp_c_c_squared}).
\begin{figure}
	\caption{The plot of $\left(\frac{\exp(c)-1}{\exp(c)+1}\right)^2 / c^2$}
	\includegraphics[width=.7\textwidth]{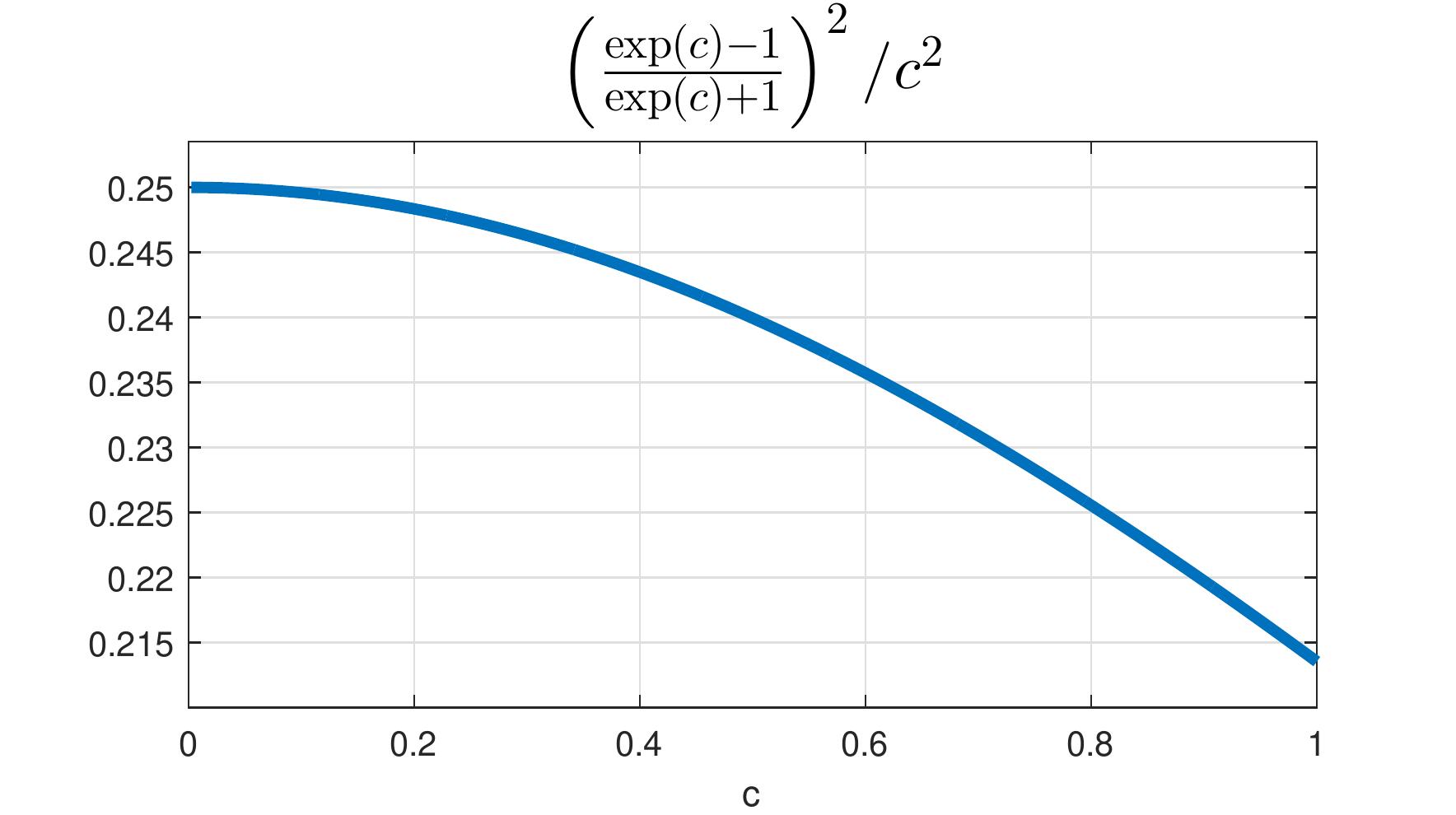}
	\label{fig.exp_c_c_squared}
\end{figure}
\end{proof}

We then introduce the proof of Lemma \ref{lem.2.maintext}:
\begin{proof}
To bound $\|\boldw_t\|,$ where $ \boldw_t = -\nabla_{\boldtheta_t}\ell(\boldtheta^*)$, we may show $\langle \text{sign}(\boldw_t), \boldw_t \rangle = |\boldw|$ is bounded.

The direct boundedness of $|\boldw|$ could be difficult to prove, however we can investigate the bound of the inner product 
$\langle \boldu_t, \boldw\rangle$ where $\boldu_t \in \{-1,1\}^{\text{dim(}\boldw)} $ is a zero padding sign vector with only non-zero elements on sub-vector indexed by $t$. Since $\boldw=\frac{1}{n_p}\sum_{i=1}^{n_p}\boldw^{(i)}$,
\[
\langle \boldu_t, \boldw\rangle = \frac{1}{n_p}\sum_{i=1}^{n_p}\langle \boldu_t, \boldw^{(i)} \rangle = \frac{1}{n_p}\sum_{i=1}^{n_p}\langle \boldu_t, \boldf(\boldx^{(i)}) - \nabla \hat{A}(\boldtheta^*) \rangle,
\]  
bound each summand indexed by $(i)$ is sufficient.

In order to use Chernoff bounding technique, we look into the moment generating function of $\langle \boldu_t, \boldw^{(i)}\rangle$. Since
$
p^*(\boldx) = q^*(\boldx) r(\boldx,\boldtheta^*),
$ 
after a few lines of algebra we have 
\begin{align*}
\mathbb{E}_{p^*}\left[\exp\left(\langle v\boldu_t,\boldf(\boldx^{(i)}) - \nabla \hat{A}(\boldtheta^*)\rangle\right)\right]& =  \mathbb{E}_{q^*}\left[\exp\langle \boldtheta^* + v\boldu_t, \boldf(\boldx^{(i)}) \rangle \right]h(v\boldu_t)
\end{align*}
where $v \le \frac{\|\boldtheta^*\|}{\|\boldu_t\|}$ and $h(v\boldu_t) = \exp\left[ -  A(\boldtheta^*) - \langle v\boldu_t, \nabla \hat{A}(\boldtheta^*) \rangle\right]$. Take logarithm on both sides,
\begin{align*}
\log\mathbb{E}_{q^*}\left[\exp\left(\langle v\boldu_t,\boldf(\boldx^{(i)}) - \nabla \hat{A}(\boldtheta^*)\rangle\right)\right]&= A(\boldtheta^* +
v\boldu_t ) + \log h(v\boldu_t) \notag\\
& =A(\boldtheta^* +
v\boldu_t ) - A(\boldtheta^*) - \langle v\boldu_t,\nabla \hat{A}(\boldtheta^*)\rangle\notag.
\end{align*}

Now, define the event,
\begin{align}
\label{eq.event.xi}
\xi_{\boldu_t} :=\left\{ \left[A(\boldtheta^*+ v\boldu_t) - A(\boldtheta^*) -\left( \hat{A}(\boldtheta^* +  v\boldu_t)-\hat{A}(\boldtheta^*)\right)\right] \leq v^2\right\}.
\end{align}
Note that $P(\xi_{\boldu_t}^c) \le 2 \exp \left( -\frac{ v^4 n_q }{200} \right)$ given $v^2 \le 1$ 
Then the following holds conditioned on event $\xi_{\boldu_t}$:
\begin{align}
\label{eq.lemma2.exp}
\log\mathbb{E}_{q^*}\left[\exp\left(\langle v\boldu_t,\boldf(\boldx^{(i)}) - \nabla \hat{A}(\boldtheta^*)\rangle\right)\right] \leq\hat{A}(\boldtheta^* + v\boldu_t ) - \hat{A}(\boldtheta^*) - \langle v\boldu_t,\nabla \hat{A}(\boldtheta^*)  \rangle + v^2,
\end{align}

Then applying Taylor expansion to \eqref{eq.lemma2.exp},
\begin{align}
\label{eq.emperical.upperbound}
\hat{A}(\boldtheta^* + v\boldu_t ) - \hat{A}(\boldtheta^*) - \langle v\boldu_t,\nabla \hat{A}(\boldtheta^*)  \rangle + v^2 &= \frac{1}{2}v\boldu_t^\top\nabla^2 \hat{A}(\boldtheta^*+ \bar{\boldu}_t)v\boldu_t + v^2, \notag \\
&\leq \frac{1}{2}\|v\boldu_t\|^2\lambda_{\text{max}}(\nabla^2 \hat{A}(\boldtheta^*+ \bar{\boldu}_t)) + v^2 \notag\\
&\leq \frac{\lambda_{\text{max}}bv^2}{2} + v^2
\end{align}
where the $\bar{\boldu}_t$ is a vector between $\boldtheta^*$ and $\boldtheta^* + v\boldu_t$ in coordinate fashion. 

Finally, applying the Chernoff bounding technique:
\begin{align*}
\log P\left( \langle \boldu_t,\boldf(\boldx^{(i)}) - \nabla \hat{A}(\boldtheta^*)\rangle\geq \beta ~|~ \xi_{\boldu_t} \right) &\leq \log \mathbb{E}_p \left[ \exp \left( \langle v\boldu_t,\boldf(\boldx_p^{(i)}) - \nabla \hat{A}(\boldtheta^*)\rangle \right)/\exp(\beta v)\right]\\
& \leq 
\frac{\lambda_{\text{max}}bv^2}{2}+ v^2 - \beta v
\end{align*}
and 
\begin{align*}
&\log P\left(\langle \boldu_t,\underbrace{\frac{1}{n_p}\sum_{i=1}^{n_p}\boldf(\boldx^{(i)}) - \nabla \hat{A}(\boldtheta^*)\rangle}_{-\nabla\ell(\boldtheta^*)}\geq \beta ~|~ \xi_{\boldu_t} \right) \\
\leq& \log \mathbb{E}_p\left[ \prod_i^{n_p} \exp \left( \langle v\boldu_t,\boldf(\boldx_p^{(i)}) - \nabla \hat{A}(\boldtheta^*)\rangle \right)/\exp(\beta v) \right]
\le n_p(\frac{\lambda_{\text{max}}
	bv^2}{2}+v^2 - \beta v)
\end{align*}
Finally, exponentiate both sides and set $v = \frac{\beta}{\lambda_\text{max}b+2} \le 1$, we have
\[
P\left(\langle \boldu_t,\boldw \rangle \ge \beta ~|~ \xi_{\boldu_t} \right) \le \exp\left( - \frac{n_p\beta^2}{ 2(\lambda_\mathrm{max}b + 2)}\right).
\]

Since there exists a sign vector $\tilde{\boldu}$ so that $\langle \tilde{\boldu}, \boldw \rangle = |\boldw_t| \ge \|\boldw_t\| $, therefore, we can conclude:
\[
P\left( \langle \text{sign}(\boldw_t), \boldw_t \rangle \ge \|\boldw_t\|\geq \beta ~|~ \xi_{\boldu_{1, t}}, \xi_{\boldu_{2, t}}, \dots, \xi_{\boldu_{2^b, t}}   \right) \leq \exp\left( - \frac{n_p\beta^2}{ 2(\lambda_\mathrm{max}b + 2)}\right),
\] 

Consider a vector $\boldw$ with $\frac{m^2+m}{2}$ sub-vectors, we can bound $\max_{t\in S \cup S^c} \|\boldw_t\|$ by union bound:
\[
P\left( \max_{t\in S \cup S^c} \|\boldw_t\| \geq \beta ~|~ \forall t,  \xi_{\boldu_{1,t}}, \xi_{\boldu_{2,t}}, \dots, \xi_{\boldu_{2^b,t}}  \right) \leq \exp\left( - \frac{n_p\beta^2}{ 2(\lambda_\mathrm{max}b + 2)} + \log \frac{m^2+m}{2} \right).
\]
When $\beta \ge \sqrt{\frac{4(\lambda_\mathrm{max}b+2)\log \frac{m^2+m}{2}}{n_p}}$ and $\frac{\beta}{\lambda_\mathrm{max}b+2} \le \min(\frac{\|\boldtheta^*\|}{\sqrt{b}},1)$,
\[
P\left( \max_{t\in S \cup S^c} \|\boldw_t\| \geq \beta ~|~ \forall t, \xi_{\boldu_{1,t}}, \xi_{\boldu_{2,t}}, \dots, \xi_{\boldu_{2^b,t}}  \right) \leq \exp\left( -c'n_p \right).
\] 

Let $\beta = \frac{\alpha}{2-\alpha} \frac{\lambda_{n_p}}{4}$, then for
\[
\frac{8 (2-\alpha)}{\alpha} \sqrt\frac{{(\lambda_{\text{max}}b+2)\log \frac{m^2+m}{2}}}{n_p} \le \lambda_{n_p} \le \frac{4 (2-\alpha)}{\alpha} {\left(\lambda_\mathrm{max}b+2\right)}\min\left(\frac{\|\boldtheta^*\|}{\sqrt{b}}, 1\right),
\]
we can have $\max_{t\in S^c \cup S} \|\boldw_t\| \leq \frac{\alpha}{2-\alpha} \frac{\lambda_{n_p}}{4} $ in high probability. 


Moreover, by using the fact that 
\begin{align*}
P(A) &\le P(A|\forall t, \xi_{\boldu_{1,t}}, \xi_{\boldu_{2,t}}, \dots, \xi_{\boldu_{2^b,t}}  ) + \frac{m^2+m}{2}2^b P(\xi_{\boldu_t}^c)\\
&\le P(A|\forall t, \xi_{\boldu_{1,t}}, \xi_{\boldu_{2,t}}, \dots, \xi_{\boldu_{2^b,t}}  ) + (m^2+m)^b P(\xi_{\boldu_t}^c)
\end{align*}
we can obtain
\[
P\left( \max_{t\in S^c \cup S} \|\boldw_t\| \geq \beta  \right) \leq \exp\left( -c'n_p \right) + 4\exp\left(- c''v^4n_q +b \log \left( m^2+m \right) \right).
\]
for constants $c'$ and $c''$. By substituting $v = \frac{\beta}{\lambda_\mathrm{max}b+2}$ and replace $\beta$ with its lower bound, we complete the proof.
\end{proof}

\subsection{Proof of Lemma \ref{lem.3.maintext}}
\label{sec.proof.lemma.3}
The proof Lemma \ref{lem.3.maintext} and Lemma \ref{lem.4.maintext} is in a straightforward fashion following Lemma \ref{lem.2.maintext}.

\begin{proof}
Since we are trying to prove that $\| \hat{\boldtheta}_S - \boldtheta^*_S \| \le B$, where $B>0$, according to \cite{Ravikumar_2010,YangGMGLM}, we may construct the following function:
\begin{align}
\label{eq.G}
G(\bolddelta_S) = \ell(\boldtheta_S^*+\bolddelta_S) - \ell(\boldtheta^*_S) + \lambda_{n_p} \sum_{{t'}\in S} (\|\boldtheta^*_{t'}+\bold\bolddelta_{t'}\|-\|\boldtheta_{t'}^*\|),
\end{align}
where $G$ is a convex function, $G(\boldzero)= 0$, reaches the minimal at $\bolddelta_S^* = \hat{\boldtheta}_S - \boldtheta_S^*$ and $G(\bolddelta_S^*) \le 0$. Simple proof in \cite{Ravikumar_2010} can show that for a $\tilde{\bolddelta}_S, \|\tilde{\bolddelta}_S\|=B$ if $G(\tilde{\bolddelta}_S) > 0$,  $\|\hat{\boldtheta}_S-\boldtheta_S^*\|\le B$.

We first use Taylor expansion on the first two terms of \eqref{eq.G}, 
\begin{align}
\label{eq.g.expand}
G(\bolddelta) = \bolddelta_S^\top\underbrace{\nabla  \ell(\boldtheta_S^*)}_{\boldw_S} + \frac{1}{2}\bolddelta_S^\top\nabla^2  \ell(\boldtheta_S^*+\bar{\bolddelta}_S)\bolddelta_S + \lambda_{n_p} \sum_{t'\in S} (\|\boldtheta^*_{t'}+\bold\bolddelta_{t'}\|-\|\boldtheta_{t'}^*\|).
\end{align}
Since for the first and the last term
\begin{align}
\label{eq.first}
|\langle \boldw_S,\bolddelta_S \rangle| \le \|\boldw_S\| \|\bolddelta_S\| \le \sqrt{d}\|\bolddelta_S\| \max_{t'\in S}\|\boldw_{t'}\| \le  \frac{\sqrt{d}\lambda_{n_p}}{4}  \|\bolddelta_S\| ,
\end{align}
and 
\begin{align}
\label{eq.last}
\lambda_{n_p} \sum_{t'\in S} (\|\boldtheta^*_{t'}+\bolddelta_{t'}\|-\|\boldtheta_{t'}^*\|) \ge -\lambda_{n_p} \sum_{{t'}\in S} \|\bolddelta_{t'}\| \ge - \sqrt{d} \lambda_{n_p} \|\bolddelta_S\|, 
\end{align}
we only need to lower-bound the middle term.

Obviously, we need to lower-bound $\Lambda_{\mathrm{min}}\left(\nabla^2 {\ell (\boldtheta_S^* + \bar{\bolddelta}_S)}\right)$. By applying the Mean-value theorem
\[
\nabla^2 {\ell (\boldtheta_S^* + \bar{\bolddelta}_S)}= \nabla^2 {\ell (\boldtheta_S^*)} + \nabla^3 {\ell (\boldtheta_S^*+ \bar{\bar{\bolddelta}}_S)}\bar{\bolddelta}_S,
\]
Weyl's inequality\citep{Horn1986MatrixAnalysis} implies:
\begin{align*}
\Lambda_{\text{min}}\left(\nabla^2 {\ell (\boldtheta_S^* + \bar{\bolddelta}_S)}\right) &\geq \Lambda_{\mathrm{min}}\left(\nabla^2 {\ell (\boldtheta_S^*)}\right) - \vertiii{\nabla^3 {\ell (\boldtheta_S^*+ \bar{\bar{\bolddelta}}_S)}\bar{\bolddelta}_S } \\
&\geq \Lambda_{\mathrm{min}}\left(\nabla^2 {\ell (\boldtheta_S^*)}\right) - \sqrt{d}\max_{t' \in S}\vertiii{\nabla_{\boldtheta_t}\nabla^2 {\ell (\boldtheta_S^*+ \bar{\bar{\bolddelta}}_S)}}\|\bar{\bolddelta}_S \|,
\end{align*}
Given the setting of $ B \leq \|\boldtheta^*\|$, from Assumption \ref{assum.depen} and \ref{assum.smooth} we can get 
\begin{align}
\label{eq.middle}
\Lambda_{\text{min}}\left(\nabla^2 {\ell (\boldtheta_S^* + \bar{\bolddelta}_S)}\right) \geq \lambda_\mathrm{min} - \sqrt{d}\lambda_{3,\mathrm{max}}\|\bolddelta_S\|.
\end{align}
Combining \eqref{eq.first}, \eqref{eq.middle} and \eqref{eq.last} with \eqref{eq.g.expand}, we can get
\begin{align}
G(\bolddelta_S) \geq -\frac{\lambda_{n_p} \|\bolddelta_S\|\sqrt{d}}{4}  + \frac{\lambda_\mathrm{min} }{4}\|\bolddelta_S\|^2- {\lambda_{n_p}} \sqrt{d}\|\bolddelta_S\|,
\end{align}
by setting 
\begin{align}
\label{eq.lemma3.condition}
\frac{1}{2}\lambda_\mathrm{min} \ge \sqrt{d}\lambda_{3,\mathrm{max}}\|\bolddelta_S\|.
\end{align}

Let $\|\bolddelta_S\| = \sqrt{d}\lambda_{n_p} M$, where $M>0$, we have
\[
G(\bolddelta_S) \geq (\lambda_{n_p})^2 d(-\frac{M}{4}+\lambda_\mathrm{min}\frac{1}{4}M^2-M),
\]
which is strictly positive when $M = \frac{10}{\lambda_\mathrm{min}}$.
Substitute $M = \frac{10}{\lambda_\mathrm{min}}$ to \eqref{eq.lemma3.condition}, we have
\begin{align}
\label{eq.lemma3.condition2}
d\lambda_{n_p} \le \frac{\lambda_\mathrm{min}^2}{20\lambda_{3,\mathrm{max}}}.
\end{align}
Therefore, when $\max_{t\in S \cup S^c} \|\boldw_t\| \le \frac{\lambda_{n_p}}{4}$ and \eqref{eq.lemma3.condition2} holds, we have $\|\boldtheta^* - \hat{\boldtheta}\| \leq \frac{10\lambda_{n_p} \sqrt{d}}{\lambda_\mathrm{min}} $.
\end{proof}
\subsection{Proof of Lemma \ref{lem.4.maintext}}
\label{sec.proof.lemma.4}	
Since
\[
\boldg_t= \left[\nabla_{\boldtheta_t}\nabla \ell(\boldtheta^*) - \nabla_{\boldtheta_t}\nabla \ell(\bar{\boldtheta})\right] \left[\boldtheta^* - \hat{\boldtheta}\right]
\]
by applying the Mean-value theorem, we get
\begin{align*}
\|\boldg_t \|&\le \vertiii{\nabla_{\boldtheta_t}\nabla^2 \ell(\boldtheta^* + \bar{\bar{\boldu}})} \|\boldtheta^* - \bar{\boldtheta}\|\|\boldtheta^* - \hat{\boldtheta}\|\\
&\le   \vertiii{\nabla_{\boldtheta_t}\nabla^2 \ell(\boldtheta^* + \bar{\bar{\boldu}})} \|\boldtheta^* - \hat{\boldtheta}\|^2\\
&\le \frac{100(\lambda_{n_p})^2 d}{\lambda_\mathrm{min}^2} \lambda_{3,\mathrm{max}}.
\end{align*}

Thus, when 
\[
\lambda_{n_p} d \le \frac{\lambda_\mathrm{min}^2}{100\lambda_{3,\mathrm{max}}}\frac{\alpha}{4(2-\alpha)},
\]
we have 
\[
\max_{t\in S \cup S^c} \|\boldg_t\| \le \frac{\alpha \lambda_{n_p}}{4(2-\alpha)}. 
\]
Note that this condition is stronger than the one in Lemma \ref{lem.3.maintext}, so the result of Lemma \ref{lem.3.maintext} holds automatically. 

\subsection{Proof of Lemma \ref{lemma5.maintext}}
\label{sec.proof.lemma.5}
\begin{proof}
We know that
\[
\boldw_t = 
-\left[\frac{1}{n_p}\sum_{i=1}^{n_p} \boldf_t(\boldx^{(i)})\right]+
\left[ \frac{1}{n_q}\sum_{j=1}^{n_q} \hat{r}(\boldx^{(j)};\boldtheta^*) \boldf_t(\boldx^{(j)})\right],
\]
so now we show by using the boundedness of $\|\boldf_t(\boldx)\|$ implied from Assumption \ref{assum.bounded.ratio}, this converges to $0$ in probability.

First we show that $\|\boldw_t\|$ can be upper-bounded by:
\begin{align*}
\|\boldw_t \| &\le \underbrace{\left\| \frac{1}{n_p}\sum_{i=1}^{n_p} \boldf_t(\boldx^{(i)}) -\mathbb{E}_{p}[\boldf_t(\boldx)]\right\|}_{a_{n_p}}
+\underbrace{\left\| \frac{1}{n_q}\sum_{i=1}^{n_q} \hat{r}(\boldx^{(i)};\boldtheta^*) \boldf_t(\boldx^{(i)}) - \frac{1}{n_q}\sum_{j=1}^{n_q} r(\boldx^{(j)};\boldtheta^*) \boldf_t(\boldx^{(j)}) \right\|}_{b_{n_q}}\\
&+\underbrace{\left\| \frac{1}{n_q}\sum_{j=1}^{n_q} r(\boldx^{(j)};\boldtheta^*) \boldf_t(\boldx^{(j)}) - \mathbb{E}_{qr_\boldtheta^*}\left[ \boldf_t(\boldx) \right]\right\|}_{c_{n_q}},
\end{align*}
We now need Hoeffding inequality for norm-bounded vector random variables  which has appeared in previous literatures such as \cite{supportVectorMachines}: For a set of bounded zero-mean vector-valued random variable $ \{\boldy_i\}^n_{i=1}, \|\boldy\|\le c$, we have  
\begin{align*}
P(\left\|\sum_{i=1}^n \boldy_i \right\| \ge n\epsilon)\le 2\exp\left(\frac{-n\epsilon^2}{2c^2}\right),
\end{align*}
for all 
$\epsilon\ge \frac{2c}{\sqrt{n}}.$
Now it is easy to see
\begin{align}
\label{eq.acnq}
P(a_{n_p} \ge \epsilon) \le \exp\left(- \frac{n_p\epsilon^2}{2 C^2_\mathrm{\boldf_t,\mathrm{max}}} \right) ~~~
P(c_{n_q} \ge \epsilon) \le \exp\left(- \frac{n_q\epsilon^2}{2C^2_\mathrm{max} C^2_\mathrm{\boldf_t,\mathrm{max}}} \right),
\end{align}
as long as 
\begin{align}
\label{lem2.hoeffding}
	\epsilon \ge \frac{2C_\mathrm{max}C_\mathrm{\boldf_t,\mathrm{max}}}
	{\sqrt{\min(n_p, n_q)}}.
\end{align}
As to $b_{n_q}$, it can be upper-bounded by 
\begin{align*}
b_{n_q} &= \left\| \frac{1}{n_q}\sum_{i=1}^{n_q} \hat{r}(\boldx^{(i)};\boldtheta^*) \boldf_t(\boldx^{(i)}) - \frac{1}{n_q}\sum_{j=1}^{n_q} r(\boldx^{(j)};\boldtheta^*) \boldf_t(\boldx^{(j)}) \right\|\\
&=\left\|\frac{\hat{N}(\boldtheta^*)}{N(\boldtheta^*)} \frac{1}{n_q}\sum_{i=1}^{n_q} \hat{r}(\boldx^{(i)};\boldtheta^*) \boldf_t(\boldx^{(i)}) - \frac{1}{n_q}\sum_{j=1}^{n_q} \hat{r}(\boldx^{(j)};\boldtheta^*) \boldf_t(\boldx^{(j)}) \right\|\\
&\le\left\|\frac{1}{n_q}\sum_{i=1}^{n_q} \hat{r}(\boldx^{(i)};\boldtheta^*) \boldf_t(\boldx^{(i)})\right\|\cdot \left\|\frac{\hat{N}(\boldtheta^*)}{N(\boldtheta^*)} - 1 \right\|\\
&\le C_\mathrm{ratio} C_\mathrm{\boldf_t,\mathrm{max}} \left|\frac{1}{n_q}\sum_{i=1}^{n_q} r(\boldx^{(i)};\boldtheta^*) - 1\right|,
\end{align*}
and using regular Hoeffding-inequality we may obtain:
\begin{align}
\label{eq.bnq}
P(b_{n_q} > \epsilon) < 2\exp\left(- \frac{2n_q\epsilon^2}{C^2_\mathrm{max}C^2_\mathrm{ratio} C^2_\mathrm{\boldf_t,\mathrm{max}}} \right).
\end{align}
Therefore, combining \eqref{eq.acnq} and \eqref{eq.bnq}:
\begin{align*}
P(\|\boldw_t\| \ge 3\epsilon) \le P( a_{n_p}+b_{n_q}+c_{n_q}\ge 3\epsilon) & \le 4\exp\left(- \frac{\min(n_p,n_q)\epsilon^2}{c'} \right),
\end{align*}
where $c'$ is a constant defined as $c' = \max \left(\frac{1}{2}C^2_\mathrm{max}C^2_\mathrm{ratio} C^2_\mathrm{\boldf_t,\mathrm{max}}, 2C^2_\mathrm{max} C^2_\mathrm{\boldf_t,\mathrm{max}}, 2 C^2_\mathrm{\boldf_t,\mathrm{max}}\right)$.
Applying the union-bound for all $t\in S\cup S^c$,
\begin{align*}
P(\max_{t \in S\cup S^c} \|\boldw\| \ge 3\epsilon) \le  2(m^2+m)\exp\left(- \frac{\min(n_p,n_q)\epsilon^2}{c'} \right),
\end{align*}
\begin{align*}
P\left(\max_{t \in S\cup S^c} \|\boldw\| \ge \frac{\alpha \lambda_{n_p, n_q}}{4(2-\alpha)}\right) \le  2(m^2+m)\exp\left(- \frac{\min(n_p,n_q)}{c'} \left(\frac{\alpha\lambda_{n_p, n_q}}{12(2-\alpha)}\right)^2 \right),
\end{align*}
and when 
$\lambda_{n_p, n_q} \ge \frac{24(2-\alpha)}{\alpha}\sqrt{\frac{c'\log \frac{m^2+m}{2}}{\min(n_p, n_q)}}$,
\begin{align*}
P\left(\max_{t \in S\cup S^c} \|\boldw\| \ge \frac{\alpha \lambda_{n_p, n_q}}{4(2-\alpha)} \right)\le  4\exp\left(-c''\min(n_p,n_q) \right),
\end{align*}
where $c''$ is a constant.
Assume that $\log \frac{m^2+m}{2} > 1$ and we set $\lambda_{n_p, n_q}$ as 
\[
\lambda_{n_p, n_q} \ge \frac{24(2-\alpha)}{\alpha}\sqrt{\frac{(c'+4C^2_\mathrm{max}C^2_\mathrm{\boldf_t,\mathrm{max}})\log \frac{m^2+m}{2}}{\min(n_p, n_q)}},
\]
thus \eqref{lem2.hoeffding}, the condition of using vector Hoeffding-inequality is satisfied.
\end{proof}
\subsection{Proof of Corollaries}
\label{sec.proof.corol}
\paragraph*{Proof of Corollary \ref{corol.relaxed}}  
Consider an event $\xi'_{\boldu_t}$ which is slightly different from \eqref{eq.event.xi}:
\[
\xi'_{\boldu_t} :=\left\{ \left[A(\boldtheta^*+ v\boldu_t) - A(\boldtheta^*) -\left( \hat{A}(\boldtheta^* +  v\boldu_t)-\hat{A}(\boldtheta^*)\right)\right] \leq n_p^\frac{1}{4}v^2\right\}.
\]
Note that under the new Assumption \ref{assum.smooth.ratiomodel}, we will have 
\[
P\left( {\xi^{'c}_{\boldu_t}}\right) \le \exp\left(-\frac{n_p^\frac{1}{2}v^4n_q}{200d}\right)
\]
given $n_p^{\frac{1}{4}}v^2 \le 1$. Note here the extra $d$ is needed on the denominator of RHS. The derivation is the same as what has been used in Proposition \ref{prop.norm.bound}. 

Then the following holds conditioned on $\xi'_{\boldu_t}$:
\begin{align*}
\log\mathbb{E}_{q^*}\left[\exp\left(\langle v\boldu_t,\boldf(\boldx^{(i)}) - \nabla \hat{A}(\boldtheta^*)\rangle\right)\right] \leq\hat{A}(\boldtheta^* + v\boldu_t ) - \hat{A}(\boldtheta^*) - \langle v\boldu_t,\nabla \hat{A}(\boldtheta^*)  \rangle + n_p^{\frac{1}{4}}v^2.
\end{align*} 
Since 
\begin{align*}
\hat{A}(\boldtheta^* + v\boldu_t ) - \hat{A}(\boldtheta^*) - \langle v\boldu_t,\nabla \hat{A}(\boldtheta^*)  \rangle + n_p^{\frac{1}{4}}v^2
&\leq \frac{\lambda_{\text{max}}bv^2}{2} + n_p^{\frac{1}{4}}v^2\\
&\leq  n_p^{\frac{1}{4}}\left(\frac{\lambda_{\text{max}}bv^2}{2} + v^2 \right),
\end{align*}
by following the similar techniques in Lemma \ref{lem.2.maintext}, it can be shown that \[
P\left(\langle \boldu_t,\boldw \rangle \ge \beta ~|~ \xi_{\boldu_t} \right) \le 2\exp\left( - \frac{n_p^{\frac{3}{4}}\beta^2}{ 2(\lambda_\mathrm{max}b + 2)}\right).
\]

Using the same derivation in Lemma \ref{lem.2.maintext}, we will reach the conclusion that 
for
\[
\frac{8 (2-\alpha)}{\alpha} \sqrt\frac{{(\lambda_{\text{max}}b+2)\log \frac{m^2+m}{2}}}{n_p^{\frac{3}{4}}} \le \lambda_{n_p} \le \frac{4 (2-\alpha)}{\alpha} \left(\lambda_\mathrm{max}b+2\right)\min(\frac{\|\boldtheta^*\|}{\sqrt{b}},\frac{1}{n_p^{1/8}}),
\]
we can have $\max_{t\in S \cup S^c} \|\boldw_t\| \leq \frac{\alpha}{2-\alpha} \frac{\lambda_{n_p}}{4} $ with high probability.  

Again, by using the fact that 
\[
P(A) \le P(A|\forall t, \xi'_{\boldu_{1,t}}, \xi'_{\boldu_{2,t}}, \dots, \xi'_{\boldu_{2^b,t}}  ) + (m^2+m)^b P(\xi_{\boldu_t}^{'c}),
\]
we can obtain
\[
P\left( \max_{t\in S \cup S^c} \|\boldw_t\| \geq \beta  \right) \leq \exp\left( -c'n_p \right) + 4\exp\left(- c''n_p^\frac{1}{2}v^4n_q/d +b \log (m^2+m)\right),
\]
for constants $c'$ and $c''$. By substituting $v = \frac{\beta}{\lambda_\mathrm{max}b+2}$ and replace $\beta$ with its lower bound, we get:
\[
P\left( \max_{t\in S \cup S^c} \|\boldw_t\| \geq \beta  \right) \leq \exp\left( -c'n_p \right) + 4\exp\left(- c'''n_q \frac{ \left(\log \frac{m^2+m}{2}\right) ^{2}}{dn_p} +b \log (m^2+m)\right).
\]

\paragraph*{Proof of Corollary \ref{corol.compex.model}}
For basis function $\boldpsi(\boldx)$ where at most $s$ fixed-location elements are non-zero (irrelevant to $\boldx$), 
\[
\langle \boldu'_t, \boldw\rangle =  \frac{1}{n_p}\sum_{i=1}^{n_p}\langle \boldu'_t, \boldf(\boldx^{(i)}) - \frac{1}{n_q}\sum_{j=1}^{n_q}\hat{r}(\boldx^{(j)};\boldtheta^*) \boldf(\boldx^{(j)})\rangle,
\] 
may have at most $s$ non-zero elements. Therefore, consider a sign vector $\boldu'_t$ with $s$ non-zero elements is sufficient, i.e. $\|\boldu'_t\| = \sqrt{s}$. Instead of \eqref{eq.emperical.upperbound}, we have the following inequality:
\begin{align*}
\hat{A}(\boldtheta^* + v\boldu'_t ) - \hat{A}(\boldtheta^*) - \langle v\boldu'_t,\nabla \hat{A}(\boldtheta^*)  \rangle + v^2 
&\leq \frac{\lambda_{\text{max}}sv^2}{2} + v^2\\
&\leq \frac{\lambda_{\text{max}}sv^2}{2} + sv^2.
\end{align*}
Follow the derivation of Lemma \ref{lem.2.maintext}, we may get the boundedness of $\lambda_{n_p}$ as stated in Corollary \ref{corol.compex.model}. 

Furthermore, for a sign vector only allowed to take values on a length $b$ sub-vector, where only $s$ elements can be non-zero, the number of possibilities is $ {b \choose s} 2^s$, and after applying the union bound
\[
P(A) \le P(A|\forall t, \xi_{\boldu'_{1,t}}, \xi_{\boldu'_{2,t}}, \dots, \xi_{\boldu'_{{b \choose s}2^s,t}}) + \frac{m^2+m}{2}{b \choose s}2^s P(\xi_{\boldu'_t}^c),
\]	
we can obtain
\begin{align*}
P\left( \max_{t\in S \cup S^c} \|\boldw_t\| \geq \beta  \right) &\leq \exp\left( -c'n_p \right) + 4\exp\left(- c''dv^4n_q +s \log 2 + \log {b \choose s} +\log \frac{m^2+m}{2} \right)\\
&\leq \exp\left( -c'n_p \right) + 4\exp\left\{- c''dv^4n_q +s \log \left[(m^2+m) {b \choose s}\right]\right\}.
\end{align*}

\subsection{Proof of Proposition \ref{prop.bounded.2nd}}
\label{sec.proof.prop.2nd}
\begin{proof}
Let's define the $n$-input log-sum-exp function as $\mathrm{LSE}(a_1, a_2, \cdots a_n)$, then the sample Fisher information matrix, $\nabla_{\boldtheta}^2 \ell(\boldtheta)$ can be written as 
\[
\nabla_{\boldtheta}^2 \ell(\boldtheta) = \nabla_{\boldtheta}^2 \mathrm{LSE}(\boldtheta^\top \boldf(\boldx^{(1)}), \boldtheta^\top \boldf(\boldx^{(2)}), \cdots \boldtheta^\top \boldf(\boldx^{(n_q)})).
\]
Using the chain-rule and matrix notation, we may write the second-order derivative as 
\[
\nabla_{\boldtheta}^2 \ell(\boldtheta) = F \nabla_{\boldg}^2 \mathrm{LSE}\left(g_1, g_2, \cdots, g_{n_q}\right)F^\top,
\]
where $F$ is a $b(m^2+m)/2 \times n_q$ matrix that each column $F_i = \boldf\left(\boldx_q^{(i)}\right)$, and $g_i = \boldtheta^\top \boldf\left(\boldx_q^{(i)}\right)$. Therefore, 
\begin{align}
	\label{eq.L.d2}
	\normDtwo \le \|FF^\top\| \|\DtwoLSE\|.
\end{align}

The second-order derivative of log-sum-exp function can be found in previous literatures \citep{book:Boyd+Vandenberghe:2004}, and is a positive semi-definite matrix. 
\[
\DtwoLSE = \frac{1}{s^2}\left(s \cdot \mathrm{diag}(\bolde) - \bolde \bolde^\top\right),
\]
where $\bolde$ is a column vector, $e_i = \exp\left(g_i\right), s = \sum_{i=1}^{n_q} e_i$ and the operator ``$\mathrm{diag}$'' of a vector $\bolde$ with length $n_q$ is to set the elements of $\bolde$ as the diagonal of a $n_q \times n_q$ diagnoal-matrix.

It can be observed that the elements in $\DtwoLSE$ are all strictly negative except the diagonal elements, which are the negative sums of the elements in the same column. $\DtwoLSE$ is a \emph{Laplacian matrix} \citep{merris1994laplacian} of a fully connected graph, with strictly positive weights. The upper-bound of maximum eigenvalue of a Laplacian matrix has been studied in great detail, such as Corollary 3.2, \citep{LargestEigenvalueWeightedGraph}. Restate using our notation:
\begin{align}
	\label{eq.laplacian.trick}
	\left\|\DtwoLSE\right\|& \le \max_{i,j, i\neq j} \left(\frac{\sum_{k\neq i} e_k e_i}{s^2} + \frac{\sum_{k\neq j} e_k e_j}{s^2}\right)\notag\\
	&\le \max_{i} \left(\frac{2e_i}{s}\right) = \max_{i} \frac{2\hat{r}\left(\boldx^{(i)};\boldtheta\right)}{n_q}\le  \frac{2C_\mathrm{ratio}}{n_q}.
\end{align}
The last line is due to Assumption \ref{assum.bounded.ratio} given $\boldtheta = \boldtheta^* + \bolddelta$, and $\|\bolddelta\| \le \|\boldtheta^*\|$.
It can be seen immediately that as long as 
$
\frac{1}{n_q}\|FF^\top\|
$
is bounded, then $\normDtwo$ is bounded. 
\end{proof}
\subsection{Proof of Proposition \ref{prop.bounded.3rd}}
\label{sec.proof.prop.3nd}
\begin{proof}
To simplify the notation, we actually prove the boundedness of $\|\nabla_{\theta_k}\nabla^2 \ell(\boldtheta)\|$. Using the exact same set of assumptions, we can prove $\max_{t\in S\cup S^c}\vertiii{\nabla_{\boldtheta_t}\nabla^2 \ell(\boldtheta)}$ is bounded using the same strategy.

It can be shown that $\|\nabla_{\theta_k}\nabla^2 \ell(\boldtheta)\| = \sum_{i=1}^{n_q} F\nabla_{g_i}\nabla_{\boldg}^2 \mathrm{LSE}(g_1, g_2, \cdots, g_{n_q}) F^\top f_k(\boldx^{(i)})$.

From multiplicative rule, we have
\begin{align*}
\nabla_{g_i}&\nabla_{\boldg}^2 \mathrm{LSE}(g_1, g_2, \cdots, g_{n_q})  = \left(\nabla_{g_i} \frac{1}{s^2}\right) \cdot \left(s \cdot \mathrm{diag}(\bolde) - \bolde^\top \bolde\right) + \frac{1}{s^2} \cdot D(i),
\end{align*}
where $\left(\nabla_{g_i} \frac{1}{s^2}\right) = -\frac{2e_i}{s^3}$ and  $D(i)=\nabla_{g_i}\left(s \cdot \mathrm{diag}(\bolde) - \bolde^\top \bolde\right)$ is still a Laplacian matrix whose $j,k$-th elements can be written as 
\begin{align*}
D_{j,k}(i) = \begin{cases}
-e_je_k &j=i,k\neq i, \text{ or }j\neq i, k=i\\
0 & j\neq i, k\neq i, j \neq k\\
- \sum_{j} D_{j \neq k,k}(i) & j = k.
\end{cases}
\end{align*}
Similarly to the bounding techniques used in \eqref{eq.laplacian.trick}, we have
\begin{align}
\left\|  \left(\nabla_{g_i}\frac{1}{s^2}\right) \cdot \left(s \cdot \mathrm{diag}(\bolde) - \bolde^\top \bolde\right)\right\|  &\le \left|\frac{2e_i}{s}\right|\left\| \DtwoLSE \right\| \notag\\
&\le \frac{2\hat{r}(\boldx^{(i)};\boldtheta)}{n_q} \max_{j} \frac{2\hat{r}(\boldx^{(j)};\boldtheta)}{n_q}  \le \frac{4C^2_\mathrm{ratio}}{n^2_q}, \label{eq.bounding.3rdorder2}
\end{align}
given $\boldtheta = \boldtheta^* + \bolddelta$, and $\|\bolddelta\| \le \|\boldtheta^*\|$.
As a result,
\begin{align}
\label{eq.d3.di}
&\left\|\sum_{i=1}^{n_q} F\left[\left(\nabla_{g_i}\frac{1}{s^2}\right) \cdot \left(s \cdot \mathrm{diag}(\bolde) - \bolde^\top \bolde\right)\right] F^\top f_k(\boldx^{(i)})\right\| \notag\\
\le& \left(\frac{C_\mathrm{ratio}}{n_q}\sum_{i=1}^{n_q} \left|f_k(\boldx^{(i)})\right| \right) \left(\frac{4C_\mathrm{ratio}}{n_q}\left\| FF^\top\right\|\right) \le 4C^2_\mathrm{ratio} D_\mathrm{max, 1}D_\mathrm{max,2}.
\end{align}

By the construction of $D$, we can see that,  
\begin{align*}
f_k(\boldx^{(i)}) F D(i) F^\top  &= 
-f_k(\boldx^{(i)}) \sum_{j \neq i}  |D_{i,j}(i)| \left(F_j - F_i\right) \left(F_j - F_i\right)^\top
\end{align*}
Therefore, 
\begin{align}
\left\|\frac{1}{s^2}\sum_{i=1}^{n_q}f_k(\boldx^{(i)}) F D(i) F^\top \right\| &\le \frac{\max_{i,j, j\neq i}|D_{i,j}(i)|}{s^2}  \sum_{i=1}^{n_q} \left\|f_k(\boldx^{(i)})\right\| \left\|\sum_{j=1, j \neq i}^{n_q} \left(F_j - F_i\right) \left(F_j - F_i\right)^\top \right\| \notag\\
&\le \frac{2C^2_\mathrm{ratio}D_{\mathrm{2,max}}}{n_q} \sum_{i=1}^{n_q} \left\|f_k(\boldx^{(i)})\right\| \le 2C^2_\mathrm{ratio}D_{\mathrm{1,max}} D_{\mathrm{2,max}} \label{eq.d3.nodi}.
\end{align}
by using the fact that $\frac{\max_{i,j, i\neq j}|D_{i,j}(i)|}{s^2} \le \left(\frac{\max_i \hat{r}(\boldx^{(i)};\boldtheta^*+\bolddelta)}{n_q}\right)^2\le \frac{C^2_\mathrm{ratio}}{n_q^2}$.

Therefore, by combining \eqref{eq.d3.di} and \eqref{eq.d3.nodi} we can conclude that for $\boldtheta = \boldtheta^* + \bolddelta$ and $\bolddelta \le \|\boldtheta^*\|$,
\[
\left\|\nabla_{\theta_k}\nabla^2 \ell(\boldtheta)\right\| \le
6C^2_{\mathrm{ratio}} D_\mathrm{max,1}D_\mathrm{max,2}.
\]
\end{proof}
\subsection{Proof of Proposition \ref{prop.bounded.min.2nd}}
\label{sec.proof.prop.min.2nd}
\begin{proof}
	Recall \[
	\nabla_{\boldtheta_S}^2 \ell(\boldtheta)  = F_S \left[ \frac{1}{s^2}\left(s \cdot \mathrm{diag}(\bolde) -  \frac{1}{s^2} \bolde^\top \bolde \right) \right] F_S^\top .
	\]
	Since the Laplacian matrix in the middle always has the smallest eigenvalue 0, while $F_SF_S^\top$ is always positive semi-definite, we are guaranteed to have a trivial lower bound of the smallest eigenvalue 0 without utilizing any assumptions. 
	However, by the construction of the Laplacian we can see
	\begin{align*}
	\nabla_{\boldtheta_S}^2 \ell(\boldtheta) &= 
	\sum_{i,j = 1, i\neq j}^{n_q} F_{S,i} F_{S,j}^\top \frac{e_i e_j}{s^2} - \sum_{i=1}^{n_q} F_{S,i} F_{S,i}^\top\sum_{j=1, j\neq i}^{n_q}
	\frac{e_i e_j}{s^2}\\
	&= 
	\frac{1}{2}\sum_{i,j = 1, i\neq j}^{n_q} \left(F_{S,i} F_{S,j}^\top -  F_{S,i} F_{S,i}^\top + 
	F_{S,j}F_{S,i}^\top -  F_{S,j} F_{S,j}^\top \right)
	\frac{e_i e_j}{s^2}\\
	&= -\frac{1}{2} \sum_{i,j=1, i\neq j}^{n_q} (F_{S,i} - F_{S,j}) (F_{S,i} - F_{S,j})^\top \frac{e_i e_j}{s^2}.
	\end{align*}
	Therefore 
	\begin{align*}
	&\Lambda_\mathrm{min}\left(\nabla_{\boldtheta_S}^2 \ell(\boldtheta)\right)=\Lambda_\mathrm{min}\left[\frac{1}{2}\sum_{i,j = 1, i\neq j}^{n_q} (F_{S,i} - F_{S,j}) (F_{S,i} - F_{S,j})^\top \frac{e_i e_j}{s^2}\right]\\
	\end{align*}
	Note that 
	$
	(F_{S,i} - F_{S,j}) (F_{S,i} - F_{S,j})^\top
	$
	is always positive semi-definite.
	By Assumption \ref{assum.bounded.ratio}, $\frac{e_ie_j}{s^2} \ge \left(\frac{\min_i \hat{r}(\boldtheta^*+\bolddelta)}{n_q}\right)^2\ge\left( \frac{1}{C_\mathrm{ratio}n_q} \right)^2$, if $\boldtheta = \boldtheta^* + \bolddelta$, $\|\bolddelta \| \le \|\boldtheta^*\|$. Therefore
	\begin{align*}
	\Lambda_\mathrm{min}\left[\nabla^2_{\boldtheta_S} \ell(\boldtheta)\right]&\ge \frac{1}{C^2_\mathrm{ratio}}\Lambda_{\mathrm{min}}\left[\frac{1}{2n^2_q}\sum_{i,j = 1, i\neq j}^{n_q} (F_{S,i} - F_{S,j}) (F_{S,i} - F_{S,j})^\top \right]\\
	&\ge \frac{1}{C^2_\mathrm{ratio}} \Lambda_\mathrm{min}\left[\widehat{\mathrm{Cov}}_q \left[ \boldf_S(\boldx)\right]\right]\\
	&\ge \frac{D_\mathrm{min,2}}{C^2_\mathrm{ratio}} > 0.
	\end{align*}

	\end{proof}
\subsection{Proof of Proposition \ref{prop.bounded.min}}
\label{sec.proof.bounded.min}
\begin{proof}
	\begin{align*}
	\mathbb{E}_{q}\left[ \inf_{\bolddelta \in \mathbb{R}^{\mathrm{dim}(\boldtheta^*)}: \|\bolddelta\|
		\leq \|\boldtheta^*\|} r(x, \boldtheta^*+\bolddelta)\right] &= \mathbb{E}_{q}\left[ r(x, \boldtheta^*) + \inf_{\bolddelta \in \mathbb{R}^{\mathrm{dim}(\boldtheta^*)}: \|\bolddelta\|
		\leq \|\boldtheta^*\|} r(x, \boldtheta^*+\bolddelta) - r(x, \boldtheta^*)\right] \\
	&= \mathbb{E}_{q}\left[ r(x, \boldtheta^*) \right] +  \mathbb{E}_{q}\left[\inf_{\bolddelta \in \mathbb{R}^{\mathrm{dim}(\boldtheta^*)}: \|\bolddelta\|
		\leq \|\boldtheta^*\|} r(x, \boldtheta^*+\bolddelta) - r(x, \boldtheta^*)\right]\\
	&\ge \mathbb{E}_{q}\left[ r(x, \boldtheta^*) \right] -  \mathbb{E}_{q}\left[ \sup_{\bolddelta \in \mathbb{R}^{\mathrm{dim}(\boldtheta^*)}: \|\bolddelta\|
		\leq \|\boldtheta^*\|} \|\bolddelta\|\|\nabla r(\boldx;\boldtheta^* + \bolddelta)\| \right]\\
	&\ge \mathbb{E}_{q}\left[ r(x, \boldtheta^*) \right] -  \|\boldtheta^*\| \mathbb{E}_{q}\left[ \sup_{\bolddelta \in \mathbb{R}^{\mathrm{dim}(\boldtheta^*)}: \|\bolddelta\|
		\leq \|\boldtheta^*\|} \|\nabla r(\boldx;\boldtheta^* + \bolddelta)\| \right]\\
	&\ge 1 - c,
	\end{align*}
\end{proof}
\subsection{Proof of Proposition \ref{prop.bounded.ratio.model.uniform}}
\label{sec.proof.model.uniform}
\begin{proof}
	Since \[\hat{r}(\boldx;\boldtheta^*+\bolddelta) \le \frac{C_\mathrm{max}}{\frac{1}{n_q}\sum_{i=1}^{n_q}r(\boldx^{(i)};\boldtheta^*+\bolddelta)},\]
	we just need to show 
	\[
	\inf_{\bolddelta \in \mathbb{R}^{\mathrm{dim}(\boldtheta^*)}: \|\bolddelta\|
		\leq \|\boldtheta^*\|} \frac{1}{n_q}\sum_{i=1}^{n_q}r(\boldx^{(i)};\boldtheta^*+\bolddelta) > 1-c-\epsilon, \text{ for all } 0<\epsilon <1-c
	\]
	holds with high probability. So we bound the tail probability of this event
	\begin{align*}
	&P\left(\inf_{\bolddelta \in \mathbb{R}^{\mathrm{dim}(\boldtheta^*)}: \|\bolddelta\|
		\leq \|\boldtheta^*\|} \frac{1}{n_q}\sum_{i=1}^{n_q}r(\boldx^{(i)};\boldtheta^*+\bolddelta) -(1-c) < -\epsilon \right) \\
	\le &P\left(\frac{1}{n_q}\sum_{i=1}^{n_q} \inf_{\bolddelta \in \mathbb{R}^{\mathrm{dim}(\boldtheta^*)}: \|\bolddelta\|
		\leq \|\boldtheta^*\|}  r(\boldx^{(i)};\boldtheta^*+\bolddelta) - \mathbb{E}_q \left[\inf_{\bolddelta \in \mathbb{R}^{\mathrm{dim}(\boldtheta^*)}: \|\bolddelta\|
		\leq \|\boldtheta^*\|} r(\boldx, \boldtheta^* + \bolddelta) \right]< -\epsilon \right)
	\end{align*}
	
	The rest is simply based on the concentration property of the sums of bounded zero-mean random variable $\inf_\bolddelta r(\boldx;\boldtheta^* + \bolddelta) - \mathbb{E}_q \left[\inf_{\bolddelta} r(\boldx, \boldtheta^* + \bolddelta)\right]$. Now we can conclude
	\begin{align*}
	P\left(\inf_{\bolddelta \in \mathbb{R}^{\mathrm{dim}(\boldtheta^*)}: \|\bolddelta\|
		\leq \|\boldtheta^*\|} \frac{1}{n_q}\sum_{i=1}^{n_q}r(\boldx^{(i)};\boldtheta^*+\bolddelta) -(1-c) < -\epsilon \right) 
	\le \exp\left(-\frac{2n_q\epsilon^2}{C_\mathrm{max}^2}\right), 
	\end{align*}
	then we have $\sup_{\bolddelta \in \mathbb{R}^{\mathrm{dim}(\boldtheta^*)}: \|\bolddelta\|
		\leq \|\boldtheta^*\|} \hat{r}(\boldx;\boldtheta^*+\bolddelta) \le \frac{C_\mathrm{max}}{1-c-\epsilon} = C'_\mathrm{ratio}, \text{ for all } 0<\epsilon<1-c$ holds with high probability.
\end{proof}
\subsection{Proof of Proposition \ref{prop.bounded.normalization.ratio}}
\label{sec.prop.normalization.ratio}
	\begin{proof}
		The boundedness of zero-mean random variable $r(\boldx;\boldtheta^*) - 1$ guarantees its exponentially decaying tail behavior via standard Hoeffding inequality, i.e.,
		\begin{align*}
		P\left( \left|\frac{1}{n_q}\sum_{i=1}^{n_q} r(\boldx^{(i)};\boldtheta^*) -1 \right| > \epsilon\right) \le 2\exp \left( -\frac{2n_q \epsilon^2}{C_\mathrm{max}^2}\right).
		\end{align*}
		\end{proof}
\subsection{Proof of Proposition \ref{prop.bounded.min.lambda}}
\label{sec.prop.min.lambda}
	\begin{proof}
		\begin{align*}
		\Lambda_\mathrm{min}\left(\widehat{\mathrm{Cov}}_{qr_{\boldtheta^*}}\left[\boldf_S(\boldx)\right]\right)
		=
		&\Lambda_\mathrm{min}\left(\frac{1}{n_q}\sum_{i=1}^{n_q}\frac{1}{n_q}\sum_{j=1, j\neq i}^{n_q} r(\boldx^{(i)};\boldtheta^*)r(\boldx^{(j)};\boldtheta^*) C(i,j)\right)\\
		=&\Lambda_\mathrm{min}\left(\widehat{\mathrm{Cov}}_{q\hat{r}_{\boldtheta^*}}\left[\boldf_S(\boldx)\right]\right)\left(\frac{\hat{N}(\boldtheta^*)}{N(\boldtheta^*)}\right)^2\\
		\le
		&\Lambda_\mathrm{min}\left(\widehat{\mathrm{Cov}}_{q\hat{r}_{\boldtheta^*}}\left[\boldf_S(\boldx)\right]\right)(1+\epsilon)^2,
		\end{align*}
		where $C(i,j)$ is the short-hand for
		\begin{align*}
		C(i,j)=\left(\boldf_S(\boldx^{(i)})-\boldf_S(\boldx^{(j)})\right)\left(\boldf_S(\boldx^{(i)})-\boldf_S(\boldx^{(j)})\right)^\top,
		\end{align*}
		and last line is due to Proposition \ref{prop.bounded.normalization.ratio} and holds with high probability.	
		
		Since $ \nabla^2_{\boldtheta_S} \ell(\boldtheta^*) = \widehat{\mathrm{Cov}}_{q\hat{r}_{\boldtheta^*}}\left[\boldf_S(\boldx)\right]$, we conclude the proof. 
		\end{proof}
	\section{The Derivatives of Negative KLIEP Log-likelihood}
	 \begin{prop}
	 \label{prop.is.moments}
		 	The second-order derivative $\nabla^2_\boldtheta \ell(\boldtheta)$ is the importance sampled \textbf{covariance} using samples from $Q$. Moreover, the third-order derivative $\nabla_{\boldtheta_t}\nabla^2_\boldtheta \ell(\boldtheta)$ is the importance-sampled \textbf{skewness} using samples from $Q$:
		 	\begin{align*}
		 	\nabla^2_\boldtheta \ell(\boldtheta) =\widehat{\mathrm{Cov}}_{q \hat{r}_\boldtheta} \left[ \boldf(\boldx)\right]&= \hat{\mathbb{E}}_q \left[ \hat{r}(\boldx; \boldtheta) \boldf(\boldx) \boldf(\boldx)^\top\right]\\
		 	&- \left\{\hat{\mathbb{E}}_q \left[ \hat{r}(\boldx; \boldtheta) \boldf(\boldx)\right]	\right\} \left\{\hat{\mathbb{E}}_q \left[ \hat{r}(\boldx; \boldtheta) \boldf(\boldx)\right]	\right\}^\top
		 	\end{align*}
		 	\begin{align*}
		 	\nabla_{\theta_k}\nabla^2_\boldtheta \ell(\boldtheta) &= 2\hat{\mathbb{E}}_q\left[\hat{r}(\boldx; \boldtheta)f_k(\boldx)\boldf(\boldx)\boldf(\boldx)^\top\right]\\
		 	& -\hat{\mathbb{E}}_q\left[\hat{r}(\boldx; \boldtheta)f_k(\boldx)\boldf(\boldx)\right]\hat{\mathbb{E}}_q\left[\hat{r}(\boldx; \boldtheta)\boldf(\boldx)^\top\right]\\
		 	& -\hat{\mathbb{E}}_q\left[\hat{r}(\boldx; \boldtheta)\boldf(\boldx)\right]\hat{\mathbb{E}}_q\left[\hat{r}(\boldx; \boldtheta)f_k(\boldx)\boldf(\boldx)^\top\right]\\
		 	&- \hat{\mathbb{E}}_q \left[ \hat{r}(\boldx; \boldtheta) f_k(\boldx) \widehat{\mathrm{Cov}}_{q \hat{r}_\boldtheta} \left[ \boldf(\boldx)\right]\right],
		 	\end{align*}
		 	where $\theta_k$ is a scalar.
		 	\end{prop}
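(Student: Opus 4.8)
The plan is to read off both identities directly from the log-sum-exp representation already assembled in the proofs of Propositions~\ref{prop.bounded.2nd} and~\ref{prop.bounded.3rd}, since no probabilistic input is needed: everything here is an exact finite-sample algebraic identity. Because the data-average term $-\frac{1}{n_p}\sum_i \boldtheta^\top\boldf(\boldx^{(i)})$ is linear in $\boldtheta$, all second- and higher-order derivatives of $\ell$ coincide with those of $\hat{A}(\boldtheta)=\log\hat{N}(\boldtheta)=\mathrm{LSE}(g_1,\dots,g_{n_q})-\log n_q$ with $g_i=\boldtheta^\top\boldf(\boldx^{(i)})$. The one structural fact I would use everywhere is the self-normalizing identity $e_i/s=\hat{r}(\boldx^{(i)};\boldtheta)/n_q$, where $e_i=\exp(g_i)$ and $s=\sum_j e_j$; this turns every occurrence of $e_i/s$ in the LSE derivatives into $\frac{1}{n_q}\hat{r}(\boldx^{(i)};\boldtheta)$, i.e. into an empirical expectation $\hat{\mathbb{E}}_q[\hat{r}(\boldx)\,\cdot\,]$.

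For the covariance formula I would substitute the known LSE Hessian $\frac{1}{s^2}\bigl(s\,\mathrm{diag}(\bolde)-\bolde\bolde^\top\bigr)$ into $\nabla_{\boldtheta}^2\ell=F\,\nabla_{\boldg}^2\mathrm{LSE}\,F^\top$ and expand the quadratic form in the columns $F_i=\boldf(\boldx^{(i)})$. The diagonal block contributes $\frac{1}{s}\sum_i e_i\boldf(\boldx^{(i)})\boldf(\boldx^{(i)})^\top=\hat{\mathbb{E}}_q[\hat{r}\boldf\boldf^\top]$, while the rank-one block contributes $\bigl(\frac{1}{s}\sum_i e_i\boldf(\boldx^{(i)})\bigr)\bigl(\frac{1}{s}\sum_j e_j\boldf(\boldx^{(j)})\bigr)^\top=\hat{\mathbb{E}}_q[\hat{r}\boldf]\,\hat{\mathbb{E}}_q[\hat{r}\boldf]^\top$, and their difference is exactly $\widehat{\mathrm{Cov}}_{q\hat{r}_{\boldtheta}}[\boldf(\boldx)]$.

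For the skewness formula I would differentiate the covariance expression once more with respect to the scalar $\theta_k$. The key subcomputation is the derivative of the empirical ratio itself, $\nabla_{\theta_k}\hat{r}(\boldx^{(i)};\boldtheta)=\hat{r}(\boldx^{(i)};\boldtheta)\bigl(f_k(\boldx^{(i)})-\hat{\mathbb{E}}_q[\hat{r}f_k]\bigr)$, which follows from $\hat{r}=n_q e_i/s$ together with $\nabla_{\theta_k}s=\sum_j e_j f_k(\boldx^{(j)})$. Feeding this into $\nabla_{\theta_k}\hat{\mathbb{E}}_q[\hat{r}\boldf\boldf^\top]$ and into the product rule applied to $\hat{\mathbb{E}}_q[\hat{r}\boldf]\,\hat{\mathbb{E}}_q[\hat{r}\boldf]^\top$ generates the importance-sampled third moments $\hat{\mathbb{E}}_q[\hat{r}f_k\boldf\boldf^\top]$, $\hat{\mathbb{E}}_q[\hat{r}f_k\boldf]$, and $\hat{\mathbb{E}}_q[\hat{r}\boldf]$, each accompanied by the scalar reweighted mean $\hat{\mathbb{E}}_q[\hat{r}f_k]$ multiplying lower-order moments; here the normalization $\hat{\mathbb{E}}_q[\hat{r}]=\frac{1}{s}\sum_i e_i=1$ is also used. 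Conceptually the outcome is the third cumulant, i.e. the centered skewness tensor $\hat{\mathbb{E}}_q\bigl[\hat{r}\,(f_k-\hat{\mathbb{E}}_q[\hat{r}f_k])(\boldf-\hat{\mathbb{E}}_q[\hat{r}\boldf])(\boldf-\hat{\mathbb{E}}_q[\hat{r}\boldf])^\top\bigr]$ of $\boldf$ under the reweighted empirical measure, which is the natural $\nabla^3$ of a log-partition function, and expanding this centered product reproduces the combination of importance-sampled moments displayed in the statement.

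The main obstacle is purely organizational: the product rule produces several rank-one and reweighted second-moment terms, and the only delicate point is to track the scalar factor $\hat{\mathbb{E}}_q[\hat{r}f_k]$ and to check that the terms proportional to it reassemble into $\hat{\mathbb{E}}_q[\hat{r}f_k\,\widehat{\mathrm{Cov}}_{q\hat{r}_{\boldtheta}}[\boldf]]$ (using that $\widehat{\mathrm{Cov}}_{q\hat{r}_{\boldtheta}}[\boldf]$ is a constant matrix inside the empirical average). Once the ratio-derivative identity above is in hand, this is routine bookkeeping with no analytic content.
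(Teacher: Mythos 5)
Your route is the right one, and in fact it is the only proof available: the paper states this proposition without proof, so the comparison can only be against the identity itself. Your reduction to $\hat{A}(\boldtheta)=\log\hat{N}(\boldtheta)$ (the data term being linear in $\boldtheta$), the identity $e_i/s=\hat{r}(\boldx^{(i)};\boldtheta)/n_q$ turning the tilted weights into the importance-weighted empirical measure, the resulting covariance formula for $\nabla^2_{\boldtheta}\ell$, and the ratio-derivative identity $\nabla_{\theta_k}\hat{r}(\boldx^{(i)};\boldtheta)=\hat{r}(\boldx^{(i)};\boldtheta)\bigl(f_k(\boldx^{(i)})-\hat{\mathbb{E}}_q[\hat{r}f_k]\bigr)$ are all correct. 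The genuine gap is exactly the step you dismiss as ``routine bookkeeping'': if you carry out the product rule, the terms proportional to $\hat{\mathbb{E}}_q[\hat{r}f_k]$ do \emph{not} reassemble into $-\hat{\mathbb{E}}_q[\hat{r}f_k]\,\widehat{\mathrm{Cov}}_{q\hat{r}_{\boldtheta}}[\boldf]$, and the coefficient on the third-moment term comes out as $1$, not $2$. What your own plan actually yields is the centered third moment
\begin{align*}
\nabla_{\theta_k}\nabla^2_{\boldtheta}\ell(\boldtheta)
&= \hat{\mathbb{E}}_q\left[\hat{r}f_k\boldf\boldf^\top\right]
- \hat{\mathbb{E}}_q\left[\hat{r}f_k\boldf\right]\hat{\mathbb{E}}_q\left[\hat{r}\boldf\right]^\top
- \hat{\mathbb{E}}_q\left[\hat{r}\boldf\right]\hat{\mathbb{E}}_q\left[\hat{r}f_k\boldf\right]^\top\\
&\quad - \hat{\mathbb{E}}_q\left[\hat{r}f_k\right]\hat{\mathbb{E}}_q\left[\hat{r}\boldf\boldf^\top\right]
+ 2\,\hat{\mathbb{E}}_q\left[\hat{r}f_k\right]\hat{\mathbb{E}}_q\left[\hat{r}\boldf\right]\hat{\mathbb{E}}_q\left[\hat{r}\boldf\right]^\top,
\end{align*}
which, reading $\widehat{\mathrm{Cov}}_{q\hat{r}_{\boldtheta}}[\boldf]$ as the constant matrix pulled out of the last average, differs from the displayed formula by $\hat{\mathbb{E}}_q[\hat{r}f_k\boldf\boldf^\top]-\hat{\mathbb{E}}_q[\hat{r}f_k]\,\hat{\mathbb{E}}_q[\hat{r}\boldf]\,\hat{\mathbb{E}}_q[\hat{r}\boldf]^\top$; no alternative reading of the last term removes the discrepancy.

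This is not a reconciliation problem you can fix by reorganizing terms: the printed display is itself false. Take $n_q=2$, a one-dimensional feature with $f(\boldx^{(1)})=0$, $f(\boldx^{(2)})=1$, and $\theta=0$. Then $\hat{A}(\theta)=\log\frac{1+e^{\theta}}{2}$, so with $\sigma(\theta)=e^{\theta}/(1+e^{\theta})$ one has $\hat{A}'''(\theta)=\sigma(1-\sigma)(1-2\sigma)$, which vanishes at $\theta=0$; the proposition's right-hand side evaluates there to $2\cdot\tfrac12-\tfrac14-\tfrac14-\tfrac12\cdot\tfrac14=\tfrac38\neq 0$, while the corrected expansion above gives $0$. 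So the statement's third-derivative display is most plausibly a typo (the coefficient $2$ on the first term, and a dropped rank-one triple-product term), and your final claim that the cumulant expansion ``reproduces the combination of importance-sampled moments displayed'' would fail if executed. Your proof should establish the corrected identity above — the second-derivative (covariance) part of your argument, and of the proposition, is unaffected and correct.
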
 

\section{Experimental Settings of Success Rate Plots}
\label{sec.exp.settings}
Define two edge sets for MN $P$ and $Q$ as $E_p$ and $E_q$ separately.
In experiments with Gaussian distributions, densities are constructed as  
\begin{align*}
p(\boldx) \propto \exp\left(-\sum_{u=1}^{m} \theta_0 x^2_u - \sum_{(u,v) \in E_p}^{m} \theta_1x_vx_u\right)
\end{align*}
where $\theta_0 = 2$, $\theta_1 = -0.4$. $E_q$ are set to be the same as $E_p$ while for $d$ randomly picked edges, $\theta_1 = 0.4$.

As to the ``8-shaped'' distribution, its pairwise potentials are introduced in Figure \ref{fig.contour}. We set $\theta_0=1,\theta_1=5$. $E_q$ is constructed from $E_p$ with $d$ randomly removed edges.

\section{Experimental Settings of Comparison with Differential Network Leanring}
\label{sec.exp.diff}
We adopt the True Positive (TP) and True Negative (TN) rate as described in \cite{zhao2014direct}:
\begin{align*}
\mathrm{TPR} = \frac{\sum_{t'\in S} \delta(\hat{\boldtheta}_{
		t'} \neq \boldzero)}{\sum_{t'\in S} \delta(\boldtheta^*_{t'} \neq \boldzero)}, 	~~\mathrm{TNR} = \frac{\sum_{t'' \in S^c} \delta(\hat{\boldtheta}_{t''} = \boldzero)}{\sum_{t''\in S^c} \delta(\boldtheta^*_{t''} = \boldzero)},
\end{align*}
where $\delta$ is the indicator function. For KLIEP, we compute a sequence of TPR and TNR from $\hat{\boldtheta}(\lambda)$ by changing the regularization parameter $\lambda$. For differential network learning, we obtain $\boldDelta(\epsilon)$ for different choices of $\epsilon$, and compute TPR and TNR by varying the threshold $\tau$ for each $\boldDelta(\epsilon)$. All TPR and TNR values are averaged over at least 25 random trials. Then the curve corresponds to $\epsilon$ that maximizes the Area Under the Curve (AUC) is plotted.

\section{Bootstrap of Gene Expression Dataset}
\label{sec.bootstrap}
\begin{figure*}
\includegraphics[width=.99\textwidth]{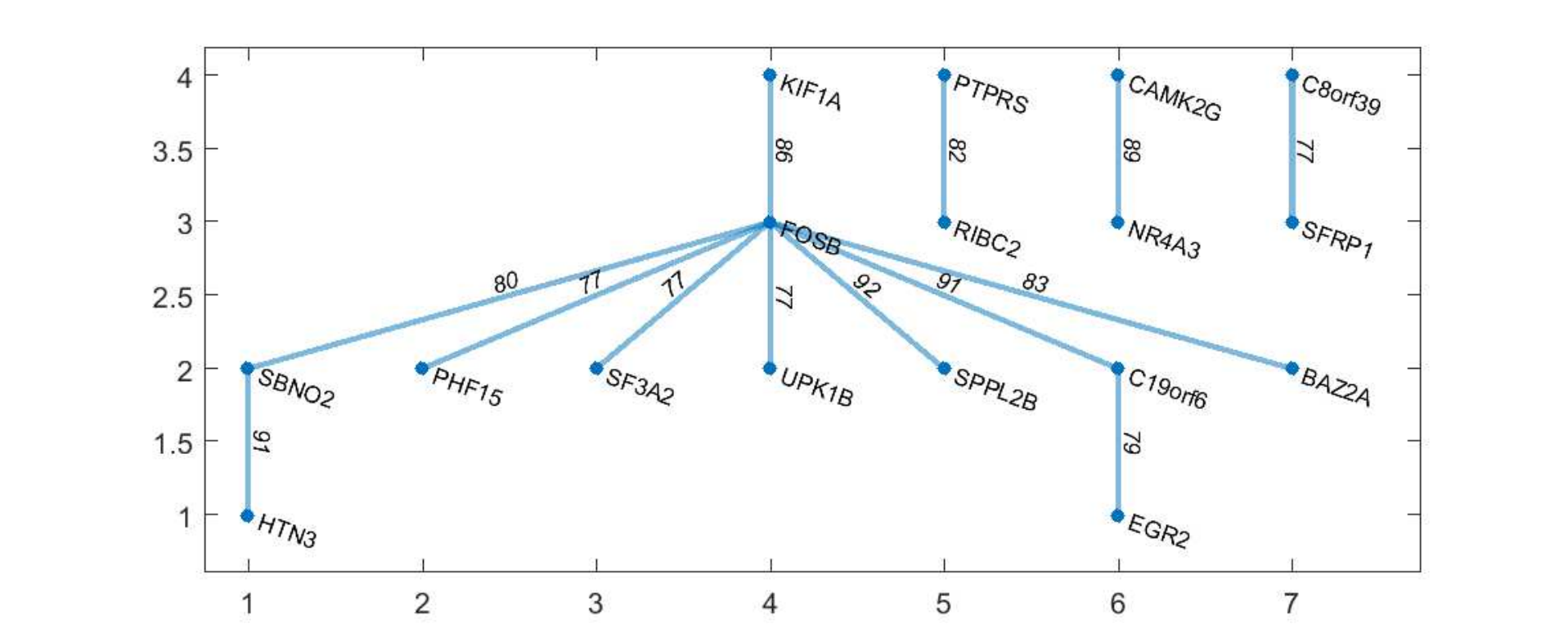}
\caption{Edges appear more than 75 times out of 100 bootstrap trials.}
\label{fig.bootstrap}
\end{figure*}
To assess the reliability of the results obtained in Section \ref{sec.gene}, we have also conducted bootstrap random trials on this dataset.
Datasets are re-sampled from both $P$ and $Q$ and the experiments are repeated for 100 times. After counting the edges appearing in the estimated graph structures we plot the graph  in Figure \ref{fig.bootstrap} whose edges appear at least 75 times out of our 100 trails.
Nodes are names of genes, and the labels on the edges are the numbers of occurrences. 

It can be seen that the obtained differential network is very close to the one we presented in Figure \ref{fig.gene1}, and the main structure (``the hub'' around the FOSB gene) remains a dominant feature in the graph, thus this does not change our conclusion that the FOSB is a regulator of other genes.

\begin{figure}
	\subfigure[One-dimensional Gaussian distributions and their density ratios. Assumptions \ref{assum.bounded.ratio} and \ref{assum.bounded.ratio.3} can be applied to the 0-bounded density ratio (green line). Our Assumptions \ref{assum.smooth.ratiomodel.nod} and \ref{assum.smooth.ratiomodel} can partially cover the totally unbounded density ratio model with some exceptions (such as the green dashed Gaussian density ratio, which grows too violently).]{
		\includegraphics[width=.75\textwidth]{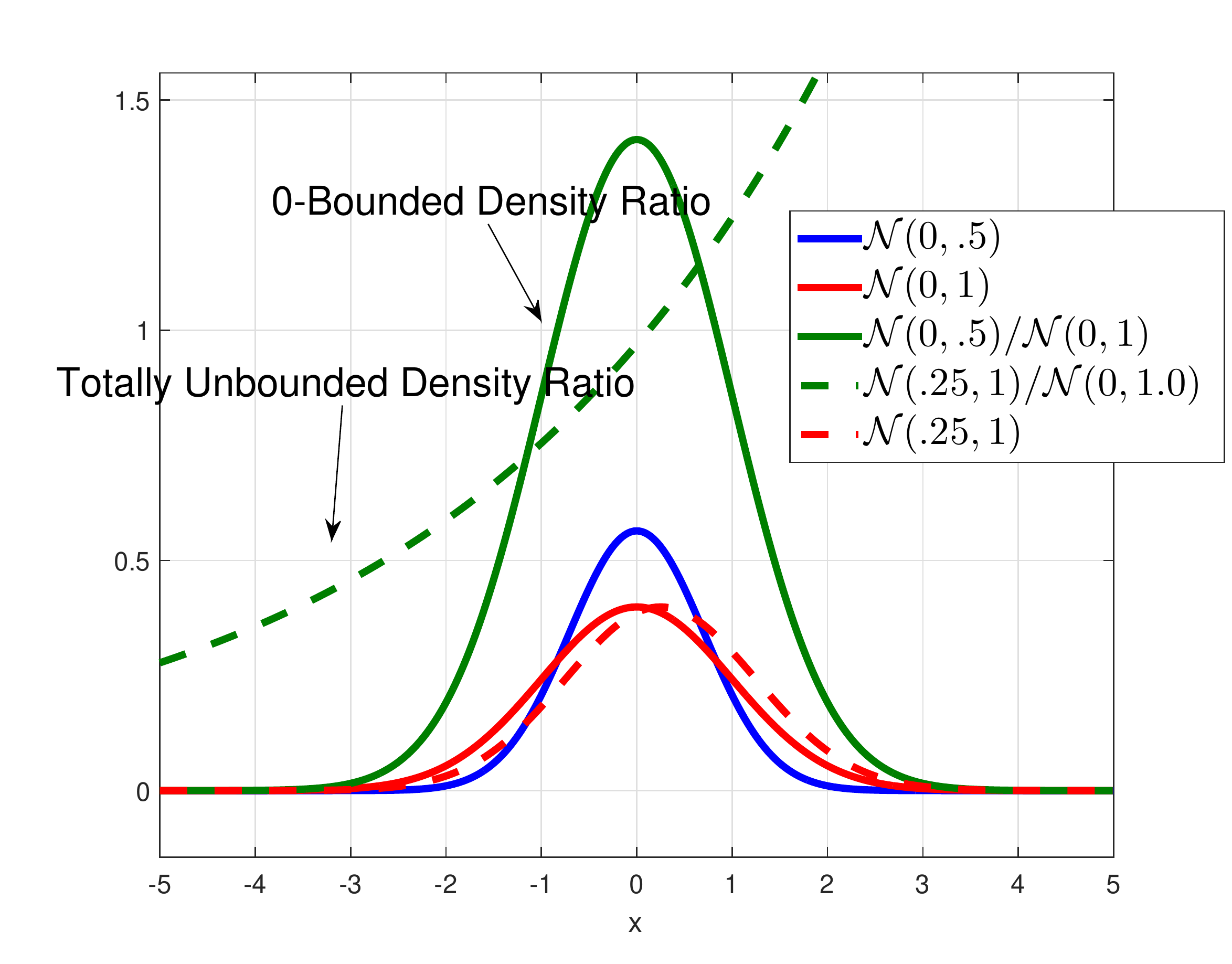}
	}
	\subfigure[Truncated one-dimensional Gaussian distribution ($t\mathcal{N}$, truncated at $-1.5,1.5$) and their density ratios. All our assumptions (Assumption \ref{assum.smooth.ratiomodel.nod}, \ref{assum.smooth.ratiomodel}, \ref{assum.bounded.ratio} and \ref{assum.bounded.ratio.3})  can be applied here.]{
		\includegraphics[width=.75\textwidth]{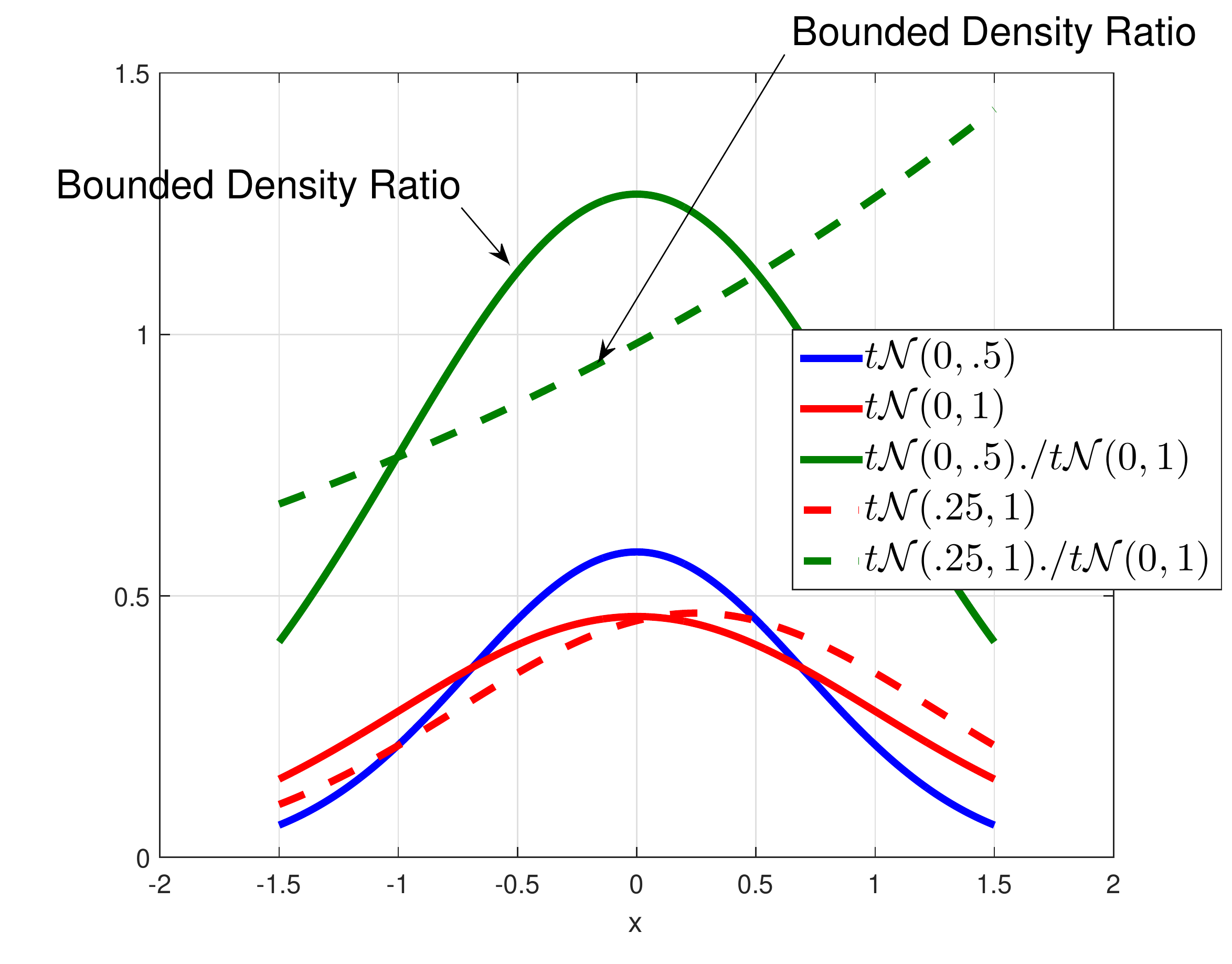}
	}
	\caption{The comparison between different boundedness assumption}
	\label{fig.bounded.comparison}
\end{figure}
\begin{figure}
	\includegraphics[width=.99\textwidth]{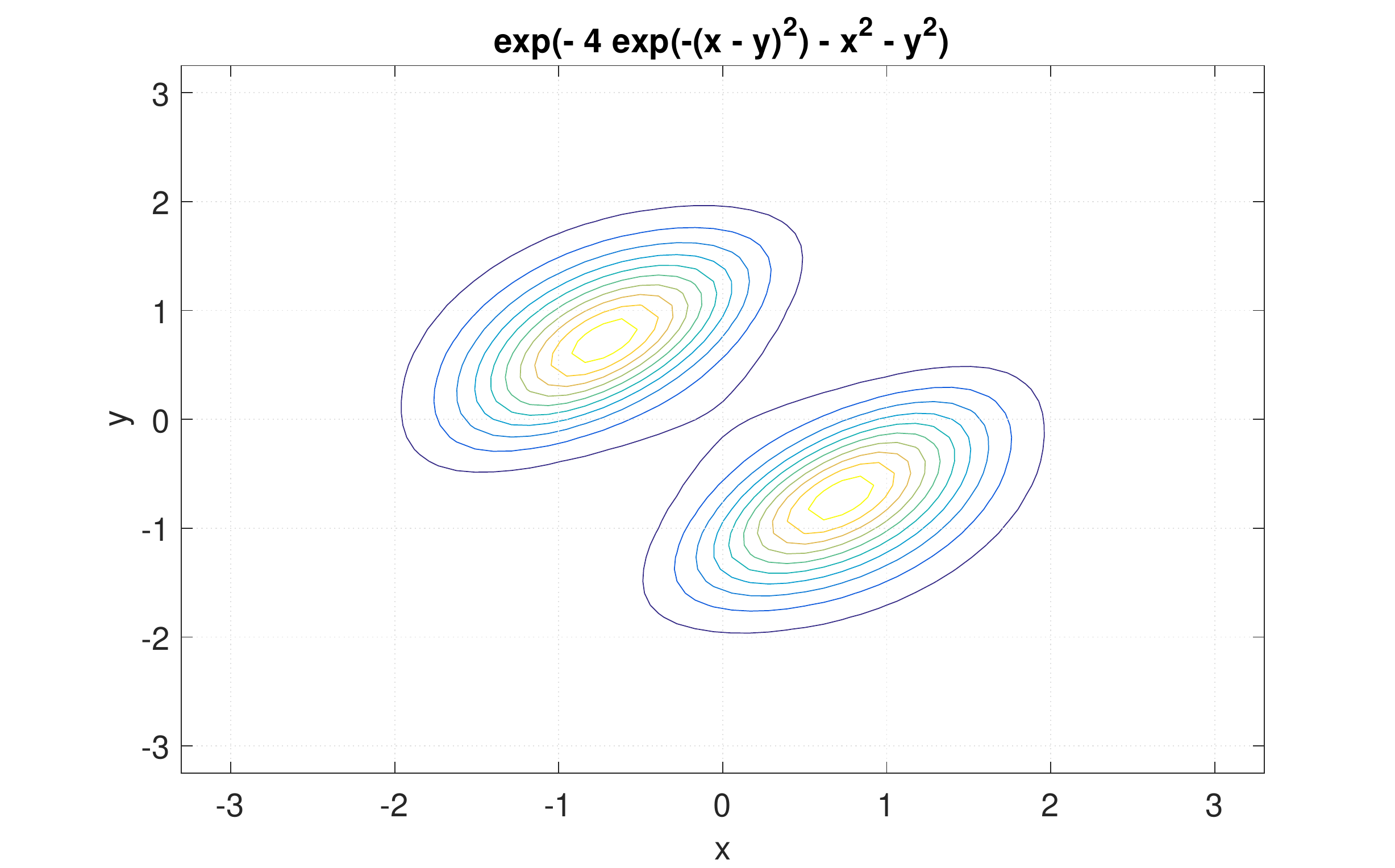}
	\caption{Contour plot for 2D ``8-shaped Distribution'' with correlation between dimension $x$ and $y$. The distribution is truncated over a ball centered at origin with radius of 15.}
	\label{fig.contour}
\end{figure}
\begin{figure}
	\includegraphics[width=.7\textwidth]{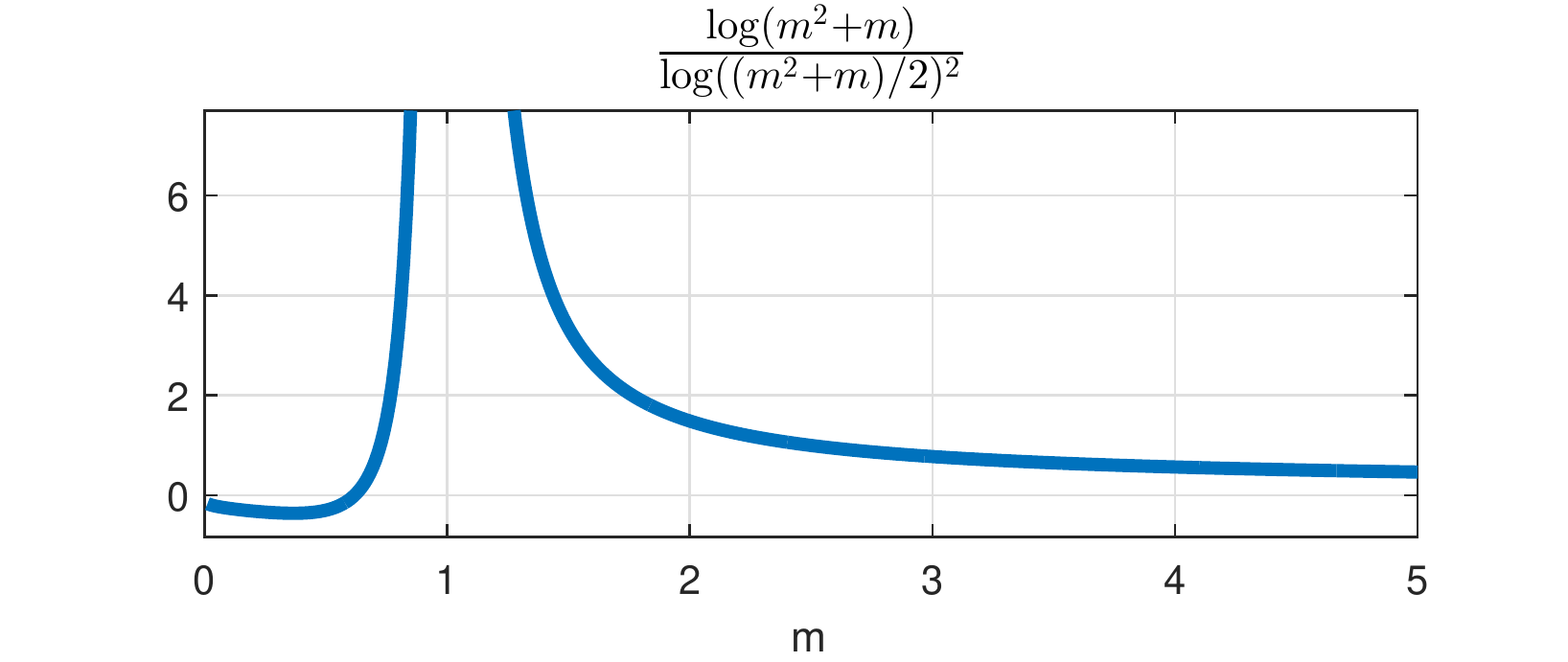}
	\caption{The plot of $g(m)$.}
	\label{fig.gm}
\end{figure}

\section{Illustrations}
\subsection{Applicability of Smoothness Assumptions}
	\label{fig.applicatility.smooth.ass}
	Figure \ref{fig.bounded.comparison} illustrates the applicability of our assumptions using one-dimensional Gaussian distributions.
\subsection{Contours of non-Gaussian Distributions}
	Figure \ref{fig.contour} shows the contours of the 2-D non-Gaussian distributions used in Section \ref{sec.exp}.
\subsection{Plot of $g(m)$}
	See Figure \ref{fig.gm} for the plot of $g(m)$ which is used in Theorem \ref{them.the.main.theorem}.

\end{changemargin}
\end{document}